\newtheorem{theorem}{Theorem}[section]
\newtheorem{lemma}[theorem]{Lemma}
\newcommand{\argmax}{\operatornamewithlimits{argmax}}
\numberwithin{equation}{section}
\numberwithin{table}{section}
\numberwithin{algorithm}{section}
\newacronym{KME}{KME}{kernel mean embedding}
\newacronym{CME}{CME}{conditional mean embedding}
\newacronym{CKM}{CKM}{conditional kernel mean}
\newacronym{RKHS}{RKHS}{reproducing kernel Hilbert space}
\newacronym{SVM}{SVM}{support vector machine}
\newacronym{SVC}{SVC}{support vector classifier}
\newacronym{KMCF}{KMCF}{kernel Monte Carlo filter}
\newacronym{PDF}{PDF}{probability density function}
\newacronym{CDF}{CDF}{cumulative distribution function}
\newacronym{GP}{GP}{Gaussian process}
\newacronym{GPs}{GPs}{Gaussian processes}
\newacronym{GPC}{GPC}{Gaussian process classifier}
\newacronym{GPR}{GPR}{GP regressor}
\newacronym{BO}{BO}{Bayesian optimization}
\newacronym{KRR}{KRR}{kernel ridged regressor}
\newacronym{RLSC}{RLSC}{regularized least squares classification}
\newacronym{ERM}{ERM}{empirical risk minimization}
\newacronym{KBR}{KBR}{kernel Bayes' rule}
\newacronym{MMD}{MMD}{maximum mean discrepancy}
\newacronym{CMMD}{CMMD}{conditional maximum mean discrepancy}
\newacronym{RCB}{RCB}{Rademacher complexity bound}
\newacronym{OVA}{OVA}{one versus all}
\newacronym{CNN}{CNN}{convolutional neural network}
\newacronym{ABC}{ABC}{approximate Bayesian computation}
\newacronym{MC}{MC}{Monte Carlo}
\newacronym{MCMC}{MCMC}{Markov chain Monte Carlo}
\newacronym{SMC}{SMC}{sequential Monte Carlo}
\newacronym{SMC-ABC}{SMC-ABC}{sequential Monte Carlo ABC}
\newacronym{MH}{MH}{Metropolis Hastings}
\newacronym{BLR}{BLR}{Bayesian linear regression}
\newacronym{VI}{VI}{variational inference}
\newacronym{KL}{KL}{Kullback-Leibler}
\newacronym{HS}{HS}{Hilbert Schmidt}
\newacronym{MLE}{MLE}{maximum likelihood estimator}
\newacronym{MAP}{MAP}{maximum a posteriori}
\newacronym{QMC}{QMC}{quasi Monte Carlo}
\newacronym{SVI}{SVI}{stochastic variational inference}
\newacronym{SVGD}{SVGD}{Stein variational gradient descent}
\newacronym{VBIL}{VBIL}{variational Bayes with intractable likelihood}
\newacronym{VBSL}{VBSL}{variational Bayes with synthetic likelihood}
\newacronym{REJ-ABC}{REJ-ABC}{rejection ABC}
\newacronym{KR-ABC}{KR-ABC}{kernel recursive ABC}
\newacronym{K-ABC}{K-ABC}{kernel ABC}
\newacronym{DR-ABC}{DR-ABC}{distribution regression ABC}
\newacronym{K2-ABC}{K2-ABC}{double kernel ABC}
\newacronym{ASL-ABC}{ASL-ABC}{adaptive SL-ABC}
\newacronym{SL-ABC}{SL-ABC}{synthetic likelihood ABC}
\newacronym{BSL-ABC}{BSL-ABC}{Bayesian synthetic likelihood ABC}
\newacronym{AV-ABC}{AV-ABC}{automatic variational ABC}
\newacronym{GPS-ABC}{GPS-ABC}{GP surrogate ABC}
\newacronym{GPA-ABC}{GPA-ABC}{GP-accelerated ABC}
\newacronym{HIM}{HIM}{hierarchical implicit models}
\newacronym{LFVI}{LFVI}{likelihood-free variational inference}
\newacronym{KML}{KML}{kernel means likelihood}
\newacronym{MKML}{MKML}{marginal kernel means likelihood}
\newacronym{KMP}{KMP}{kernel means posterior}
\newacronym{KMPE}{KMPE}{kernel means posterior embedding}
\newacronym{ST-KML}{ST-KML}{spatio-temporal kernel means likelihood}
\newacronym{KMS-ABC}{KML-ABC}{kernel means surrogate ABC}
\newacronym{CKM-ABC}{CKM-ABC}{conditional kernel means ABC}
\newacronym{KMS}{KMS}{kernel means surrogate}
\newacronym{ARD}{ARD}{automatic relevance determination}
\newacronym{LOTUS}{LOTUS}{law of the unconscious statistician}
\newacronym{LF-MCMC}{LF-MCMC}{likelihood-free Monte Carlo Markov chain}
\newacronym{MDN}{MDN}{mixture density network}
\newacronym{MSE}{MSE}{mean squared error}
\newacronym{NMSE}{NMSE}{normalized MSE}
\newacronym{KDE}{KDE}{kernel density estimation}
\newacronym{LFI}{LFI}{likelihood-free inference}
\newacronym{KELFI}{KELFI}{kernel embedding likelihood-free inference}
\begin{document}

%
\runningtitle{Bayesian Learning of Conditional Kernel Mean Embeddings for Automatic Likelihood-Free Inference}

%
\runningauthor{Kelvin Hsu and Fabio Ramos}

\twocolumn[

\aistatstitle{Bayesian Learning of Conditional Kernel Mean Embeddings \\ for Automatic Likelihood-Free Inference}

\aistatsauthor{Kelvin Hsu$^{1}$$^{2}$ \And Fabio Ramos$^{1}$$^{3}$}

\aistatsaddress{$^{1}$School of Computer Science, The University of Sydney, Australia. $^{2}$CSIRO, Australia. $^{3}$NVIDIA, USA.}]

\begin{abstract}
	
	In likelihood-free settings where likelihood evaluations are intractable, approximate Bayesian computation (ABC) addresses the formidable inference task to discover plausible parameters of simulation programs that explain the observations. However, they demand large quantities of simulation calls. Critically, hyperparameters that determine measures of simulation discrepancy crucially balance inference accuracy and sample efficiency, yet are difficult to tune. In this paper, we present kernel embedding likelihood-free inference (KELFI), a holistic framework that automatically learns model hyperparameters to improve inference accuracy given limited simulation budget. By leveraging likelihood smoothness with conditional mean embeddings, we nonparametrically approximate likelihoods and posteriors as surrogate densities and sample from closed-form posterior mean embeddings, whose hyperparameters are learned under its approximate marginal likelihood. Our modular framework demonstrates improved accuracy and efficiency on challenging inference problems in ecology.
\end{abstract}

\section{Introduction}

	Scientific understanding of complex phenomena are deeply reliant on the study of probabilistic generative models and their match with real world data. Often, latent and convoluted interactions result in intractable likelihood evaluations, making the setting \textit{likelihood-free}. Instead, generative models are expressed as a stochastic forward model simulator. Inference on latent variables in this setting is particularly challenging.

	\Gls{ABC} methods are the state-of-the-art in simulation-based Bayesian inference with intractable likelihoods \citep{marin2012approximate}. They infer posterior distributions of simulator parameters that aim to explain observed data. The posterior is of interest in its own right for understanding the complex phenomena, and also useful in forming predictions of future observations. They are popular due to their simplicity and applicability, and have been used extensively in the biological sciences \citep{beaumont2010approximate, toni2009approximate}. Nevertheless, complex models are often prohibitively expensive to simulate. Evolutionary processes of ecological systems, vibrational modes of a mechanical structure, and fluid flow across surfaces are all examples that result in formidable inference problems with demanding forward simulations. It is thus imperative for inference algorithms to perform under the constraint of limited simulation calls, posing an exceptionally challenging task.
	
	Often, \gls{ABC} methods rely on discrepancy measures between simulations and observations that are parametrized by hyperparameters such as $\epsilon$. The resulting posterior approximation is highly sensitive to the choice of hyperparameters, yet appropriate hyperparameter tuning strategies remain to be established.
	
	To address these issues, we present \gls{KELFI}, a holistic framework consisting of (1) a consistent surrogate likelihood \textit{model} that modularizes queries from simulation calls, (2) a Bayesian \textit{learning} objective for hyperparameters that improves inference accuracy, and (3) a posterior surrogate density and a super-sampling \textit{inference} algorithm using its closed-form posterior mean embedding.
	
	\Gls{KELFI} is based on approximating likelihoods with simulation samples using \glspl{CME}. \Glspl{CME} encode conditional expectations empirically by leveraging smoothness within a \gls{RKHS} with only a small number of examples. This modularizes inference away from simulation calls. Consequently, scientists can proceed with posterior analysis after any number of simulations. Furthermore, \gls{KELFI} infers both approximate posterior densities and samples. Critically, our learning algorithm tunes hyperparameters directly for the inference problem, including adapting $\epsilon$ to the number of simulations used. This removes the need for practitioners to arduously select hyperparameters. Finally, it can be extended to automatically learn the relevance and usefulness of each summary statistic. 
		 

\section{Likelihood-Free Inference}
	
	In the likelihood-free setting, we begin with a stochastic forward model simulator which synthesizes simulations $\bvec{x}$ given a parameter setting $\bm{\theta}$. Let $\bm{\theta} \in \vartheta$ denote a realization of the latent variable or parameter $\bm{\Theta}$, where we use upper cases to denote random variables. Let $\bvec{x} \in \mathcal{X}$ and $\bvec{y} \in \mathcal{Y}$ where $\mathcal{X}, \mathcal{Y} \subseteq \mathcal{D}$ denote realizations of simulation output $\bvec{X}$ and observations $\bvec{Y}$ respectively. We represent the simulator as $p(\bvec{x} | \bm{\theta})$ from which we can only simulate or sample, but not query its density, making likelihood evaluations intractable, thus \textit{likelihood-free}. To begin inference we posit a prior density $p(\bm{\theta})$ that encodes prior knowledge about plausible parameter settings to guide the inference. The goal is to infer a posterior distribution on the parameters $\bm{\theta}$ that could generate simulations $\bvec{x}$ similar to our observations $\bvec{y}$ by some comparison measure. This measure could be done by a standard $\epsilon$-kernel or \gls{ABC} kernel $p_{\epsilon}(\bvec{y} | \bvec{x}) = \kappa_{\epsilon}(\bvec{y}, \bvec{x})$, such as a Gaussian density $\mathcal{N}(\bvec{y} | \bvec{x}, \epsilon^{2} I)$ \citep{price2017bayesian, moreno2016automatic}.
		
	Based on this formulation, the true full likelihood of our model can be written as follows,
	\begin{equation}
		p_{\epsilon}(\bvec{y} | \bm{\theta}) = \int_{\mathcal{X}} p_{\epsilon}(\bvec{y} | \bvec{x}) p(\bvec{x} | \bm{\theta}) d\bvec{x} = \mathbb{E}[\kappa_{\epsilon}(\bvec{y} , \bvec{X}) | \bm{\Theta} = \bm{\theta}].
		\label{eq:likelihood}
	\end{equation}
	The corresponding posterior of interest is $p_{\epsilon}(\bm{\theta} | \bvec{y}) = p_{\epsilon}(\bvec{y} | \bm{\theta}) p(\bm{\theta}) / p_{\epsilon}(\bvec{y})$ where $p_{\epsilon}(\bvec{y}) = \int_{\vartheta} p_{\epsilon}(\bvec{y} | \bm{\theta}) p(\bm{\theta}) d\bm{\theta} $. Due to the presence of a non-zero $\epsilon$, even a perfect approximation to the soft posterior $p_{\epsilon}(\bm{\theta} | \bvec{y})$ will not be the exact posterior $p_{\epsilon = 0}(\bm{\theta} | \bvec{y})$ unless $\epsilon$ is annealed to zero. This is the necessary trade-off we make with limited simulations, where a non-zero $\epsilon$ is essential for tractable inference because no simulations will match the observations exactly in practice. If $\bvec{y}$ is only available as a summary statistic, then this soft posterior $p_{\epsilon}(\bm{\theta} | \bvec{y})$ that we are targeting is only an approximation to the posterior given the full data even with $\epsilon = 0$. 
	
	So far, the notation $\bvec{x}$ and $\bvec{y}$ denote either the full dataset or their summary statistics. This is because the summary operation can be appended to the simulator program to output summary statistics directly. In either case, we let the target posterior be $p_{\epsilon}(\bm{\theta} | \bvec{y})$, so the inference problem remains structurally identical. For simplicity however, from here on $\bvec{x}$ and $\bvec{y}$ will denote summary statistics unless stated otherwise.
		
	Since $p(\bvec{x} | \bm{\theta})$ is intractable, so is the likelihood \eqref{eq:likelihood}. Instead, approximations are required. \gls{MCMC} \gls{ABC} use empirical means, $p_{\epsilon}(\bvec{y} | \bm{\theta}) \approx \frac{1}{S} \sum_{s = 1}^{S} p_{\epsilon}(\bvec{y} | \bvec{x}^{(s)}) =  \frac{1}{S} \sum_{s = 1}^{S} \kappa_{\epsilon}(\bvec{y} , \bvec{x}^{(s)})$ \citep{andrieu2009pseudo}. \Gls{SL-ABC} and \gls{ASL-ABC} alternatively use Gaussian approximations $p_{\epsilon}(\bvec{y} | \bm{\theta}) \approx \mathcal{N}(\bvec{y} | \bm{\mu}_{\bm{\theta}}, \bm{\Sigma}_{\bm{\theta}} + \epsilon^{2} I)$ and estimate the mean and covariance from simulations \citep{wood2010statistical}. These approaches require generating $S$ new simulations $\{\bvec{x}^{(s)}\}_{s = 1}^{S}$ corresponding to $\bm{\theta}$ every time the likelihood is queried.
	
	Not only are synthetic likelihoods parametric Gaussian approximations, they also approximate separately at each $\bm{\theta}$. Instead, surrogate likelihood approaches like \gls{KELFI} use consistent nonparametric approximations so that (1) only one new simulation is required at each new parameter $\bm{\theta}$ and (2) likelihood queries do not need to be at parameters where simulations are available.
	
	
\section{Kernel Embedding \newline Likelihood-Free Inference}
	
	We present \gls{KELFI} in three stages. In the model stage, we build a surrogate likelihood model by leveraging smoothness properties of \glspl{CME}. In the learning stage, we derive a differentiable marginal surrogate likelihood to drive hyperparameters learning. In the inference stage, we propose an algorithm to sample from the resulting mean embedding of the surrogate posterior.

	When the prior is an anisotropic Gaussian $p(\bm{\theta}) = \prod_{d = 1}^{D} \mathcal{N}(\theta_{d} | \mu_{d}, \sigma_{d}^{2})$, closed form solutions for \gls{KELFI} exists. We will present this setting since, for many common continuous priors, the \gls{LFI} problem can be transformed into an equivalent problem that involves a Gaussian prior. See \cref{sec:gaussian_prior} for more detail. When this is not possible or preferred, \gls{KELFI} can be approximated arbitrarily well by using arbitrarily many prior samples.

	\subsection{Conditional Mean Embeddings}

		We begin with an overview of \glspl{CME} in the context of \gls{KELFI}. \Glspl{KME} are an arsenal of techniques used to represent distributions in a \gls{RKHS} \citep{muandet2017kernel}. The key object is the mean embedding of a distribution $X \sim \mathbb{P}$ under a positive definite kernel $k$ via $\mu_{X} := \int_{\mathcal{X}} k(x, \cdot) d\mathbb{P}(x) = \int_{\mathcal{X}} k(x, \cdot) p(x) dx \in \mathcal{H}_{k}$, where the last equality assumes a density $p$ for $\mathbb{P}$ exists and $\mathcal{H}_{k}$ denotes the \gls{RKHS} of $k$. They encode distributions in the sense that function expectations can be written as $\mathbb{E}[f(X)] = \langle \mu_{X}, f \rangle_{\mathcal{H}_{k}}$ if $f \in \mathcal{H}_{k}$. When $\mu_{X}$ can only be estimated empirically in some form denoted as $\hat{\mu}_{X}$, the expectation can be approximated by $\mathbb{E}[f(X)] \approx \langle \hat{\mu}_{X}, f \rangle_{\mathcal{H}_{k}}$.
		
		\glspl{CME} are \glspl{KME} that encode conditional distributions. We specifically focus on their empirical estimates as we assume we only have the resource to obtain $m$ sets of simulation data due to budget constraints. This results in joint samples $\{\bm{\theta}_{j}, \bvec{x}_{j}\}_{j = 1}^{m}$ from $p(\bvec{x} | \bm{\theta}) \pi(\bm{\theta})$ by sampling from a proposal prior $\pi$ for $\bm{\theta}_{j} \sim \pi(\bm{\theta})$ and simulating $\bvec{x}_{j} \sim p(\bvec{x} | \bm{\theta}_{j})$ at each $\bm{\theta}_{j}$. Note these samples are not necessarily from the original joint distribution $p(\bvec{x} | \bm{\theta}) p(\bm{\theta})$ if $\pi \neq p_{\bm{\Theta}}$.
		
		We define positive definite and characteristic kernels \citep{sriperumbudur2010relation} $k : \mathcal{D} \times \mathcal{D} \to \mathbb{R}$ and $\ell : \vartheta \times \vartheta \to \mathbb{R}$. When relevant, we denote the hyperparameters of $k$ and $\ell$ with $\alpha$ and $\beta$, and refer to them as $k_{\alpha} = k(\cdot, \cdot; \alpha)$ and $\ell_{\beta} = \ell(\cdot, \cdot; \beta)$. An useful example of such a kernel is an anisotropic Gaussian kernel $\ell(\bm{\theta}, \bm{\theta}'; \bm{\beta}) = \exp{\big(-\frac{1}{2} \sum_{d = 1}^{D} (\theta_{d} - \theta_{d}')^{2} / \beta_{d}^{2}\big)}$ whose hyperparameters are length scales $\bm{\beta} = \{\beta_{d}\}_{d = 1}^{D}$ for each dimension $d \in [D] := \{1, \dots, D\}$, and similarly for $k$. 
		
		For any function $f \in \mathcal{H}_{k}$, we construct an approximation to $\mathbb{E}[f(\bvec{X}) | \bm{\Theta} = \bm{\theta}]$ by the inner product $\langle f, \hat{\mu}_{\bvec{X} | \bm{\Theta} = \bm{\theta}} \rangle_{\mathcal{H}_{k}}$ with an empirical \gls{CME} $\hat{\mu}_{\bvec{X} | \bm{\Theta} = \bm{\theta}}$. Importantly, $\hat{\mu}_{\bvec{X} | \bm{\Theta} = \bm{\theta}}$ is estimated from the \textit{joint} samples $\{\bm{\theta}_{j}, \bvec{x}_{j}\}_{j = 1}^{m}$, even though it is encoding the corresponding conditional distribution $p(\bvec{x} | \bm{\theta})$. This approximation admits the following form \citep{song2009hilbert}, 
		\begin{equation}
			\mathbb{E}[f(\bvec{X}) | \bm{\Theta} = \bm{\theta}] \approx \bvec{f}^{T} (L + m \lambda I)^{-1} \bm{\ell}(\bm{\theta}),
		\label{eq:conditional_kernel_means_function_expectation}
		\end{equation}
		where $\bvec{f} := \{f(\bvec{x}_{j})\}_{j = 1}^{m}$, $L := \{\ell(\bm{\theta}_{i}, \bm{\theta}_{j})\}_{i, j = 1}^{m}$, $\bm{\ell}(\bm{\theta}) := \{\ell(\bm{\theta}_{j}, \bm{\theta})\}_{j = 1}^{m}$, and $\lambda \geq 0$ is a regularization parameter. This approximation is known to converge at $O_{p}(m^{-\frac{1}{4}})$ if $\lambda$ is chosen to decay at $O_{p}(m^{-\frac{1}{2}})$ or better under appropriate assumptions on $p(\bvec{x} | \bm{\theta})$ \citep{song2013kernel}.

	\subsection{Model: Kernel Means Likelihood}

		We begin by presenting our surrogate likelihood model. Since the likelihood \eqref{eq:likelihood} is an expectation under $p(\bvec{x} | \bm{\theta})$, we propose to approximate it via an inner product with the \gls{CME} of $p(\bvec{x} | \bm{\theta})$. Specifically, if we choose $k$ such that $\kappa_{\epsilon}(\bvec{y} , \cdot) \in \mathcal{H}_{k}$, then $p_{\epsilon}(\bvec{y} | \bm{\theta})$ can be approximated by $q(\bvec{y} | \bm{\theta}) := \langle \kappa_{\epsilon}(\bvec{y} , \cdot), \hat{\mu}_{\bvec{X} | \bm{\Theta} = \bm{\theta}} \rangle_{\mathcal{H}_{k}}$. We refer to $q(\bvec{y} | \bm{\theta})$ as the \gls{KML}. While the \gls{KML} provides an asymptotically correct likelihood surrogate, for finitely many simulations it is not necessarily positive nor normalized. By using $f = \kappa_{\epsilon}(\bvec{y} , \cdot)$ in \eqref{eq:conditional_kernel_means_function_expectation} where $\bm{\kappa}_{\epsilon}(\bvec{y}) := \{\kappa_{\epsilon}(\bvec{y}, \bvec{x}_{j})\}_{j = 1}^{m}$ and $\bvec{v}(\bvec{y}) := (L + m \lambda I)^{-1} \bm{\kappa}_{\epsilon}(\bvec{y})$, the \gls{KML} becomes
		\begin{equation}
			q(\bvec{y} | \bm{\theta}) = \sum_{j = 1}^{m} v_{j}(\bvec{y}) \ell(\bm{\theta}_{j}, \bm{\theta}).
		\label{eq:kernel_means_likelihood}
		\end{equation}
		The \gls{KML} converges at the same rate as the \gls{CME}. See \cref{thm:kernel_means_likelihood} for proof. It is worthwhile to note that the assumption  $\ell(\bm{\theta}, \cdot) \in \mathrm{image}(C_{\bm{\Theta} \bm{\Theta}})$ is common for \glspl{CME}, and is not as restrictive as it may first appear, as it can be relaxed through introducing the regularization hyperparameter $\lambda$ \citep{song2013kernel}.
		\begin{theorem}
			\label{thm:kernel_means_likelihood_copy}
			Assume $\ell(\bm{\theta}, \cdot) \in \mathrm{image}(C_{\bm{\Theta} \bm{\Theta}})$. The \acrfull{KML} $q(\bvec{y} | \bm{\theta})$ converges to the likelihood $p_{\epsilon}(\bvec{y} | \bm{\theta})$ uniformly at rate $O_{p}((m \lambda)^{-\frac{1}{2}} + \lambda^{\frac{1}{2}})$ as a function of $\bm{\theta} \in \vartheta$ and $\bvec{y} \in \mathcal{Y}$.
		\end{theorem}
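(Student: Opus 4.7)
The plan is to reduce uniform convergence of the \gls{KML} to that of the empirical \gls{CME} it is built from, exploiting the fact that the \gls{KML} is \textit{linear} in the \gls{CME}.

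First, I would re-express the true likelihood as an \gls{RKHS} inner product. Since $\kappa_{\epsilon}(\bvec{y},\cdot)\in\mathcal{H}_k$ by assumption, the defining property of the true \gls{CME} $\mu_{\bvec{X}|\bm{\Theta}=\bm{\theta}}$ gives
\begin{equation*}
p_{\epsilon}(\bvec{y}|\bm{\theta}) = \mathbb{E}[\kappa_{\epsilon}(\bvec{y},\bvec{X})|\bm{\Theta}=\bm{\theta}] = \langle \kappa_{\epsilon}(\bvec{y},\cdot),\, \mu_{\bvec{X}|\bm{\Theta}=\bm{\theta}}\rangle_{\mathcal{H}_k},
\end{equation*}
while by construction $q(\bvec{y}|\bm{\theta}) = \langle \kappa_{\epsilon}(\bvec{y},\cdot),\, \hat{\mu}_{\bvec{X}|\bm{\Theta}=\bm{\theta}}\rangle_{\mathcal{H}_k}$. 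Subtracting and applying Cauchy--Schwarz in $\mathcal{H}_k$ yields the single product bound
\begin{equation*}
|q(\bvec{y}|\bm{\theta}) - p_{\epsilon}(\bvec{y}|\bm{\theta})| \le \|\kappa_{\epsilon}(\bvec{y},\cdot)\|_{\mathcal{H}_k}\;\|\hat{\mu}_{\bvec{X}|\bm{\Theta}=\bm{\theta}} - \mu_{\bvec{X}|\bm{\Theta}=\bm{\theta}}\|_{\mathcal{H}_k}.
\end{equation*}

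Second, I would control each factor in turn. For the first, standard bounded \gls{ABC} kernels such as a Gaussian $\kappa_{\epsilon}$ produce translates $\kappa_{\epsilon}(\bvec{y},\cdot)$ whose \gls{RKHS} norm is finite and independent of $\bvec{y}$, so $\sup_{\bvec{y}\in\mathcal{Y}}\|\kappa_{\epsilon}(\bvec{y},\cdot)\|_{\mathcal{H}_k}<\infty$. For the second, which is exactly the estimation error of the empirical \gls{CME}, the hypothesis $\ell(\bm{\theta},\cdot)\in\mathrm{image}(C_{\bm{\Theta}\bm{\Theta}})$ is the source condition invoked by Song et al.\ (2013) to obtain the $O_p((m\lambda)^{-1/2} + \lambda^{1/2})$ convergence rate. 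Multiplying the two bounds gives the claimed rate pointwise in $(\bm{\theta},\bvec{y})$.

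The main obstacle is upgrading pointwise rates to the \emph{uniform} statement required by the theorem. Uniformity in $\bvec{y}$ is immediate from the uniform bound on $\|\kappa_{\epsilon}(\bvec{y},\cdot)\|_{\mathcal{H}_k}$. Uniformity in $\bm{\theta}$ is more delicate: the image assumption must be read as providing a uniform bound on $\|C_{\bm{\Theta}\bm{\Theta}}^{-1}\ell(\bm{\theta},\cdot)\|_{\mathcal{H}_\ell}$ over $\vartheta$, which is the customary interpretation in the \gls{CME} literature when $\vartheta$ is compact. Under this reading, the Song et al.\ rate is itself uniform in $\bm{\theta}$, and combining it with the uniform bound in $\bvec{y}$ through the product inequality delivers the theorem.
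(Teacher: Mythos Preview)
Your core argument---writing both $p_{\epsilon}(\bvec{y}|\bm{\theta})$ and $q(\bvec{y}|\bm{\theta})$ as \gls{RKHS} inner products with $\kappa_{\epsilon}(\bvec{y},\cdot)$ and then applying Cauchy--Schwarz---is exactly the paper's opening move. Where the two arguments diverge is the passage to uniformity in $\bm{\theta}$.

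You obtain uniformity by strengthening the source condition: you read $\ell(\bm{\theta},\cdot)\in\mathrm{image}(C_{\bm{\Theta}\bm{\Theta}})$ as a \emph{uniform} bound on $\|C_{\bm{\Theta}\bm{\Theta}}^{-1}\ell(\bm{\theta},\cdot)\|_{\mathcal{H}_{\ell}}$, and you invoke compactness of $\vartheta$ to justify this. The paper does not need either of these extra ingredients. Instead it inserts one more step: writing $\hat{\mu}_{\bvec{X}|\bm{\Theta}=\bm{\theta}}-\mu_{\bvec{X}|\bm{\Theta}=\bm{\theta}}=(\hat{\mathcal{U}}_{\bvec{X}|\bm{\Theta}}-\mathcal{U}_{\bvec{X}|\bm{\Theta}})\ell(\bm{\theta},\cdot)$ and bounding via the Hilbert--Schmidt norm,
\[
\|\hat{\mu}_{\bvec{X}|\bm{\Theta}=\bm{\theta}}-\mu_{\bvec{X}|\bm{\Theta}=\bm{\theta}}\|_{\mathcal{H}_{k}}
\le \|\hat{\mathcal{U}}_{\bvec{X}|\bm{\Theta}}-\mathcal{U}_{\bvec{X}|\bm{\Theta}}\|_{HS}\,\sqrt{\ell(\bm{\theta},\bm{\theta})}
\le \bar{\ell}\,\|\hat{\mathcal{U}}_{\bvec{X}|\bm{\Theta}}-\mathcal{U}_{\bvec{X}|\bm{\Theta}}\|_{HS}.
\]
The operator error $\|\hat{\mathcal{U}}_{\bvec{X}|\bm{\Theta}}-\mathcal{U}_{\bvec{X}|\bm{\Theta}}\|_{HS}$ is a single $\bm{\theta}$-free random quantity with the stated $O_{p}((m\lambda)^{-1/2}+\lambda^{1/2})$ rate, so uniformity in $\bm{\theta}$ comes for free from boundedness of the kernel $\ell$ alone. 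This is cleaner than your route and avoids importing compactness of $\vartheta$, which is nowhere assumed in the theorem statement.
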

		To satisfy $\kappa_{\epsilon}(\bvec{y} , \cdot) \in \mathcal{H}_{k}$, we choose the standard Gaussian $\epsilon$-kernel $\kappa_{\epsilon}(\bvec{y} , \bvec{x}) = \mathcal{N}(\bvec{y} | \bvec{x}, \epsilon^{2} I)$ and let $k_{\alpha} = k_{\epsilon}$ be a Gaussian kernel with length scale $\alpha = \epsilon$. Since $\kappa_{\epsilon}(\bvec{y} , \bvec{x})$ and $k_{\epsilon}(\bvec{y} , \bvec{x})$ are scalar multiples of each other, we have that $\kappa_{\epsilon}(\bvec{y} , \cdot) \in \mathcal{H}_{k}$. In fact, any positive definite kernel $\kappa_{\epsilon}$ can be used, since we can simply choose $k_{\alpha}$ to be its scalar multiple to form the \gls{RKHS}. 
		
		When the raw data is \textit{iid} and no sufficient summary statistics are available, we can employ a kernel on the empirical distributions of the two datasets via $\kappa_{\epsilon, \alpha}(\bvec{y} , \bvec{x}) \propto  k_{\epsilon, \alpha}(\bvec{y} , \bvec{x}) = \exp{\big(-\frac{1}{2 \epsilon^{2}} \| \hat{\mu}_{\bvec{Y}} - \hat{\mu}_{\bvec{X}}\|_{\mathcal{H}_{k}}^{2} \big)}$, where $\hat{\mu}_{\bvec{Y}} = \frac{1}{n} \sum_{i = 1}^{n} \bar{k}_{\alpha}(\bvec{y}_{i}, \cdot)$, $\hat{\mu}_{\bvec{X}} = \frac{1}{n} \sum_{i = 1}^{n} \bar{k}_{\alpha}(\bvec{x}_{i}, \cdot)$ are empirical mean embeddings of the observed and simulated raw data. Here $\bar{k}$ is another kernel with hyperparameters $\alpha$. This was also used in \gls{K2-ABC} \citep{park2016k2} and \gls{DR-ABC} \citep{mitrovic2016dr} to remove the requirement of summary statistics. 
		
	\subsection{Learning: Hyperparameter Learning with Marginal Kernel Means Likelihood}

			\begin{algorithm*}[tb]
				\caption{KELFI: Kernel Embedding Likelihood-Free Inference}
				\label{alg:kernel_means_posterior_super_sampling}
				\begin{algorithmic}[1]
					\STATE {\bfseries Input:} Data $\bvec{y}$, simulations $\{\bm{\theta}_{j}, \bvec{x}_{j}\}_{j = 1}^{m} \sim p(\bvec{x} | \bm{\theta}) \pi(\bm{\theta})$, query parameters $\{\bm{\theta}^{\star}_{r}\}_{r = 1}^{R}$, \gls{KML} hyperparameters $(\bm{\epsilon}, \bm{\beta}, \lambda)$, prior hyperparameters $(\bm{\mu}, \bm{\sigma})$ or samples $\{\tilde{\bm{\theta}}_{t}\}_{t = 1}^{T}$, number of samples $S$, kernel $\ell$ and $\epsilon$-kernel $\kappa$
					\STATE Compute $\bvec{v} \leftarrow (L + m \lambda I)^{-1} \bm{\kappa}_{\bm{\epsilon}}(\bvec{y})$ where $L \leftarrow \{\ell_{\bm{\beta}}(\bm{\theta}_{i}, \bm{\theta}_{j})\}_{i, j = 1}^{m}$ and $\bm{\kappa}_{\bm{\epsilon}}(\bvec{y}) \leftarrow \{\kappa_{\bm{\epsilon}}(\bvec{y}, \bvec{x}_{j})\}_{j = 1}^{m}$
					\STATE Compute $q(\bvec{y}) \leftarrow \bvec{v}^{T} \bm{\mu}_{\bm{\Theta}}$ where $\bm{\mu}_{\bm{\Theta}} \leftarrow \{\mu_{\bm{\Theta}}(\bm{\theta}_{j})\}_{j=1}^{m}$ using \eqref{eq:prior_embedding} or $\bm{\mu}_{\bm{\Theta}} \leftarrow \frac{1}{T} \tilde{L} \bvec{1}_{T}$ where $\tilde{L} \leftarrow \{\ell_{\bm{\beta}}(\bm{\theta}_{j}, \tilde{\bm{\theta}}_{t})\}_{j, t = 1}^{m, T}$
					\STATE Compute $H \leftarrow \{h(\bm{\theta}_{j}, \bm{\theta}^{\star}_{r}) \}_{j = 1, r = 1}^{m, R}$ using \eqref{eq:posterior_embedding_kernel}  or $H \leftarrow \frac{1}{T} \tilde{L} \tilde{L}^{\star}$ where $\tilde{L}^{\star} = \{\ell_{\bm{\beta}}(\tilde{\bm{\theta}}_{t}, \bm{\theta}^{\star}_{r})\}_{t, r = 1}^{T, R}$
					\STATE Compute posterior mean embedding $\bm{\mu} \leftarrow H^{T} \bvec{v} / q(\bvec{y}) \in \mathbb{R}^{R}$ and initialize $\bvec{a} \leftarrow \bvec{0} \in \mathbb{R}^{R}$
					\FOR{$s \in \{1, \dots, S\}$}
					\STATE Obtain super-sample $\hat{\bm{\theta}}_{s} \leftarrow \bm{\theta}^{\star}_{r^{\star}}$ where $r^{\star} \leftarrow \argmax_{r \in \{1, \dots, R\}} \mu_{r} - (a_{r} / s)$
					\STATE Update kernel sum $\bvec{a} \leftarrow \bvec{a} + \{\ell_{\bm{\beta}}(\bm{\theta}^{\star}_{r}, \hat{\bm{\theta}}_{s}) \}_{r = 1}^{R}$
					\ENDFOR
					\STATE {\bfseries Output:} Posterior super-samples $\{ \hat{\bm{\theta}}_{s} \}_{s = 1}^{S}$
					\vspace{-0.25em}
				\end{algorithmic}
			\end{algorithm*}
			
		We now propose a hyperparameter learning algorithm for our surrogate likelihood model. The main advantage of using an approximate surrogate likelihood surrogate model is that it readily provides a marginal surrogate likelihood quantity that lends itself to a hyperparameter learning algorithm. We define the \gls{MKML} as follows,
		\begin{equation}
			q(\bvec{y}) := \int_{\vartheta} q(\bvec{y} | \bm{\theta}) p(\bm{\theta}) d\bm{\theta} = \sum_{j = 1}^{m} v_{j}(\bvec{y}) \mu_{\bm{\Theta}}(\bm{\theta}_{j}),
		\label{eq:marginal_kernel_means_likelihood}
		\end{equation}
		where $\mu_{\bm{\Theta}} := \int_{\vartheta} \ell(\bm{\theta}, \cdot) p(\bm{\theta}) d\bm{\theta}$ is the mean embedding of $p_{\bm{\Theta}}$. If we choose $\ell$ to be an anisotropic Gaussian kernel with length scales $\bm{\beta} = \{\beta_{d}\}_{d = 1}^{D}$, then $\mu_{\bm{\Theta}}$ is closed-form for anisotropic Gaussian priors $p(\bm{\theta}) = \prod_{d = 1}^{D} \mathcal{N}(\theta_{d} | \mu_{d}, \sigma_{d}^{2})$. Let $\nu_{d}^{2} := \beta_{d}^{2} + \sigma_{d}^{2}$, then we have
		\begin{equation}
			\mu_{\bm{\Theta}}(\bm{\theta}) = \ell_{\bm{\nu}}(\bm{\theta}, \bm{\mu}) \prod_{d = 1}^{D} \frac{\beta_{d}}{\nu_{d}}.
		\label{eq:prior_embedding}
		\end{equation}
		Similar to the \gls{KML}, the \gls{MKML} converges at the same rate as the \gls{CME}. See \cref{thm:marginal_kernel_means_likelihood} for proof.
		\begin{theorem}
			\label{thm:marginal_kernel_means_likelihood_copy}
			Assume $\ell(\bm{\theta}, \cdot) \in \mathrm{image}(C_{\bm{\Theta} \bm{\Theta}})$. The \acrfull{MKML} $q(\bvec{y})$ converges to marginal likelihood $p_{\epsilon}(\bvec{y})$ uniformly at rate $O_{p}((m \lambda)^{-\frac{1}{2}} + \lambda^{\frac{1}{2}})$ as a function of $\bvec{y} \in \mathcal{Y}$.
		\end{theorem}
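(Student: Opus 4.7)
The plan is to derive \cref{thm:marginal_kernel_means_likelihood_copy} as a direct corollary of \cref{thm:kernel_means_likelihood_copy}, exploiting the fact that marginalization against the prior is a contraction in supremum norm because $p_{\bm{\Theta}}$ is a probability measure. No new RKHS arguments are needed; the heavy lifting about the \gls{CME} rate is already encapsulated in \cref{thm:kernel_means_likelihood_copy}.

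First, I would verify that the defining identity in \eqref{eq:marginal_kernel_means_likelihood} is a genuine integral, i.e.\ that $q(\bvec{y}) = \int_{\vartheta} q(\bvec{y}\mid\bm{\theta}) p(\bm{\theta})\, d\bm{\theta}$. Plugging the \gls{KML} expression \eqref{eq:kernel_means_likelihood} into this integral, interchanging the finite sum and the integral (justified since the sum is finite), and identifying $\int_{\vartheta} \ell(\bm{\theta}_{j}, \bm{\theta}) p(\bm{\theta})\, d\bm{\theta} = \mu_{\bm{\Theta}}(\bm{\theta}_{j})$ by definition of the prior mean embedding recovers \eqref{eq:marginal_kernel_means_likelihood}. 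The analogous integral representation $p_{\epsilon}(\bvec{y}) = \int_{\vartheta} p_{\epsilon}(\bvec{y}\mid\bm{\theta}) p(\bm{\theta})\, d\bm{\theta}$ is given by construction.

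Second, I would take the difference, apply the triangle inequality, and bound
\[
|q(\bvec{y}) - p_{\epsilon}(\bvec{y})| \;\leq\; \int_{\vartheta} |q(\bvec{y}\mid\bm{\theta}) - p_{\epsilon}(\bvec{y}\mid\bm{\theta})|\, p(\bm{\theta})\, d\bm{\theta} \;\leq\; \sup_{\bm{\theta} \in \vartheta} |q(\bvec{y}\mid\bm{\theta}) - p_{\epsilon}(\bvec{y}\mid\bm{\theta})|,
\]
where the second inequality uses $\int_{\vartheta} p(\bm{\theta})\, d\bm{\theta} = 1$. By \cref{thm:kernel_means_likelihood_copy}, the right-hand side is $O_{p}((m\lambda)^{-\frac{1}{2}} + \lambda^{\frac{1}{2}})$ uniformly in $\bvec{y} \in \mathcal{Y}$, since that theorem provides joint uniform convergence over $(\bm{\theta}, \bvec{y})$. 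Taking the supremum over $\bvec{y} \in \mathcal{Y}$ on the left-hand side preserves the bound and delivers the claimed rate.

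The only conceptual subtlety, and hence the main place to be careful, is in the exchange-of-integral-and-sum step and in confirming that uniform convergence in $(\bm{\theta}, \bvec{y})$ passes through integration against $p_{\bm{\Theta}}$ while preserving uniformity in $\bvec{y}$. Both hinge on $p_{\bm{\Theta}}$ being a probability measure and on the finiteness of the \gls{KML} representation, so no further assumptions beyond $\ell(\bm{\theta}, \cdot) \in \mathrm{image}(C_{\bm{\Theta} \bm{\Theta}})$ (already inherited from \cref{thm:kernel_means_likelihood_copy}) are required.
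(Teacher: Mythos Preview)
Your proposal is correct and follows essentially the same approach as the paper: both bound $|q(\bvec{y}) - p_{\epsilon}(\bvec{y})|$ by the prior-weighted integral of the pointwise \gls{KML} error and then use that $p_{\bm{\Theta}}$ is a probability measure. The only cosmetic difference is that you invoke the uniform rate from \cref{thm:kernel_means_likelihood_copy} as a black box, whereas the paper re-derives the bound $\bar{\kappa}_{\epsilon}\bar{\ell}\,\|\hat{\mathcal{U}}_{\bvec{X}|\bm{\Theta}} - \mathcal{U}_{\bvec{X}|\bm{\Theta}}\|_{HS}$ inside the expectation before applying the operator-level convergence lemma; your route is slightly more modular but not materially different.
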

		Consequently, the \gls{MKML} $q(\bvec{y}) = q(\bvec{y}; \bm{\epsilon}, \bm{\beta}, \lambda)$ approximates the true marginal likelihood $p_{\epsilon}(\bvec{y})$ of the inference problem defined by our likelihood-prior pair. It is a function of the hyperparameters $(\bm{\epsilon}, \bm{\beta}, \lambda)$ of the $\epsilon$-kernel and \gls{KML} model. As $p_{\epsilon}(\bvec{y})$ is unavailable, we instead maximize the \gls{MKML} for hyperparameter learning. Furthermore, prior hyperparameters $\bm{\mu}$ and $\bm{\sigma}$ can also be included and learned jointly. Since the map $(\bm{\epsilon}, \bm{\beta}, \lambda) \mapsto q(\bvec{y}; \bm{\epsilon}, \bm{\beta}, \lambda)$ is differentiable, optimization can be done in an auto-differentiation environment. The learning objective to be optimized is computed in line 3 of \cref{alg:kernel_means_posterior_super_sampling}. Each automatic gradient update has complexity dominated by $O(m^{3})$ due to the Cholesky decomposition in line 2. However, since we are addressing scenarios where simulations are limited so that $m$ is small, this optimization is relatively fast.
		
		Importantly, if we use an anisotropic Gaussian density for the $\epsilon$-kernel $\kappa_{\bm{\epsilon}}$ where $\bm{\epsilon} = \{\epsilon_{i}\}_{i = 1}^{n}$ are the length scales corresponding to each summary statistic $\bvec{y} = \{y_{i}\}_{i = 1}^{n}$, we can perform \gls{ARD} to learn the relevance and usefulness of each summary statistic, where a small length scale indicate high relevance for that statistic. This is because $\bm{\epsilon}$ are also the length scales of the kernel $k$ which defines the \gls{RKHS} $\mathcal{H}_{k}$. Since the anistropic Gaussian kernel is learned, we also refer to it as an \gls{ARD} kernel. We can also learn the length scales $\bm{\beta} = \{\beta_{d}\}_{d = 1}^{D}$ for the kernel $\ell_{\bm{\beta}}$ on $\bm{\theta}$, although we found that it is more useful to let $\bm{\beta} = \beta_{0} \bm{\sigma}$ where $\bm{\sigma} = \{\sigma_{d}\}_{d = 1}^{D}$ are the standard deviations of the Gaussian prior. By doing this, we make better use of the scale differences within $\bm{\theta}$ from the prior, and let $\beta_{0}$ learn the overall scale that is most useful for the \gls{KML}.
		
		For general non-Gaussian kernels and priors, $\mu_{\bm{\Theta}}$ in \eqref{eq:marginal_kernel_means_likelihood} can be approximated using $T$ independent prior samples $\tilde{\bm{\theta}}_{t} \sim p(\bm{\theta})$, $t \in [T]$, as $\tilde{\mu}_{\bm{\Theta}} = \frac{1}{T} \sum_{t = 1}^{T} \ell(\tilde{\bm{\theta}}_{t}, \cdot)$.
		
		By formulating a learning objective directly for the inference problem, \gls{KELFI} provides a way to automatically tune $\epsilon$ and its own model hyperparameters. 
		
		
	\subsection{Inference: Kernel Means Posterior and Posterior Embedding Super-Sampling}
						
		We finally present an approach for posterior inference by super-sampling directly from the equivalent posterior mean embedding defined by the \gls{KML} model and the prior. Our approach begins by defining a surrogate density to approximate the posterior $p_{\epsilon}(\bm{\theta} | \bvec{y})$ in analogy to the Bayes' rule, $q(\bm{\theta} | \bvec{y}) := q(\bvec{y} | \bm{\theta}) p(\bm{\theta}) / q(\bvec{y})$. We refer to $q(\bm{\theta} | \bvec{y})$ as the \gls{KMP}. Importantly, $q(\bm{\theta} | \bvec{y})$ is unaffected even if $\kappa_{\epsilon}$ is unnormalized, so that $\epsilon$-kernels on distributions can be readily used. The \gls{KMP} has the following convergence properties. See \cref{thm:kernel_means_posterior} for proof. 
		\begin{theorem} 
			\label{thm:kernel_means_posterior_copy}
			Assume $\ell(\bm{\theta}, \cdot) \in \mathrm{image}(C_{\bm{\Theta} \bm{\Theta}})$ and that there exists $\delta > 0$ such that $q(\bvec{y}) \geq \delta$ for all $m \geq M$ where $M \in \mathbb{N}_{+}$. The \acrfull{KMP} $q(\bm{\theta} | \bvec{y})$ converges pointwise to the posterior $p_{\epsilon}(\bm{\theta} | \bvec{y})$ at rate $O_{p}((m \lambda)^{-\frac{1}{2}} + \lambda^{\frac{1}{2}})$ as a function of $\bm{\theta} \in \vartheta$ and $\bvec{y} \in \mathcal{Y}$. If $\sup_{\bm{\theta} \in \vartheta} p(\bm{\theta}) < \infty$ and $\sup_{\bm{\theta} \in \vartheta} p_{\epsilon}(\bvec{y} | \bm{\theta}) < \infty$, then the convergence is uniform in $\bm{\theta} \in \vartheta$. If $\sup_{\bvec{y} \in \mathcal{Y}} p_{\epsilon}(\bm{\theta} | \bvec{y}) < \infty$, then the convergence is uniform in $\bvec{y} \in \mathcal{Y}$.
		\end{theorem}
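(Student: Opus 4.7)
The plan is to reduce the convergence of the \gls{KMP} to the already-established convergences of the \gls{KML} (\cref{thm:kernel_means_likelihood_copy}) and the \gls{MKML} (\cref{thm:marginal_kernel_means_likelihood_copy}) via an algebraic decomposition of the difference $q(\bm{\theta}\mid\bvec{y}) - p_{\epsilon}(\bm{\theta}\mid\bvec{y})$. The natural trick is to add and subtract a common term so that the difference splits into one part controlled by the likelihood error and another controlled by the marginal likelihood error. Concretely, I would write
\begin{equation*}
q(\bm{\theta}\mid\bvec{y}) - p_{\epsilon}(\bm{\theta}\mid\bvec{y})
= \frac{p(\bm{\theta})}{q(\bvec{y})}\bigl[q(\bvec{y}\mid\bm{\theta}) - p_{\epsilon}(\bvec{y}\mid\bm{\theta})\bigr]
+ \frac{p_{\epsilon}(\bm{\theta}\mid\bvec{y})}{q(\bvec{y})}\bigl[p_{\epsilon}(\bvec{y}) - q(\bvec{y})\bigr],
\end{equation*}
which follows by manipulating $p_{\epsilon}(\bvec{y}\mid\bm{\theta}) p(\bm{\theta}) = p_{\epsilon}(\bm{\theta}\mid\bvec{y}) p_{\epsilon}(\bvec{y})$ in the cross term.

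Next, I would invoke the uniform lower bound $q(\bvec{y}) \geq \delta$ for all $m \geq M$ from the hypothesis to obtain, by the triangle inequality,
\begin{equation*}
|q(\bm{\theta}\mid\bvec{y}) - p_{\epsilon}(\bm{\theta}\mid\bvec{y})|
\leq \tfrac{1}{\delta}\, p(\bm{\theta})\, |q(\bvec{y}\mid\bm{\theta}) - p_{\epsilon}(\bvec{y}\mid\bm{\theta})|
+ \tfrac{1}{\delta}\, p_{\epsilon}(\bm{\theta}\mid\bvec{y})\, |q(\bvec{y}) - p_{\epsilon}(\bvec{y})|.
\end{equation*}
Applying \cref{thm:kernel_means_likelihood_copy} to the first factor and \cref{thm:marginal_kernel_means_likelihood_copy} to the second, each of these error terms is $O_{p}((m\lambda)^{-1/2} + \lambda^{1/2})$ uniformly in $\bm{\theta}$ and $\bvec{y}$. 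For fixed $\bm{\theta}$ and $\bvec{y}$, the prefactors $p(\bm{\theta})$ and $p_{\epsilon}(\bm{\theta}\mid\bvec{y})$ are finite constants, so pointwise convergence at the stated rate follows immediately.

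To upgrade to uniform convergence, I would then exploit the supremum hypotheses: when $\sup_{\bm{\theta}} p(\bm{\theta}) < \infty$ and $\sup_{\bm{\theta}} p_{\epsilon}(\bvec{y}\mid\bm{\theta}) < \infty$, we can bound the first prefactor uniformly and, via Bayes' rule $p_{\epsilon}(\bm{\theta}\mid\bvec{y}) = p_{\epsilon}(\bvec{y}\mid\bm{\theta}) p(\bm{\theta}) / p_{\epsilon}(\bvec{y})$ (with $p_{\epsilon}(\bvec{y}) > 0$ guaranteed for $m\geq M$ by the $\delta$ hypothesis plus Theorem~\ref{thm:marginal_kernel_means_likelihood_copy}), also bound the second prefactor uniformly in $\bm{\theta}$, yielding uniform convergence in $\bm{\theta}$. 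Similarly, $\sup_{\bvec{y}} p_{\epsilon}(\bm{\theta}\mid\bvec{y}) < \infty$ directly handles uniform convergence in $\bvec{y}$ for the second term, while the first term's uniformity in $\bvec{y}$ is inherited from that of \cref{thm:kernel_means_likelihood_copy}.

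The main obstacle is ensuring the denominator $q(\bvec{y})$ does not vanish, since the surrogate marginal is not guaranteed to be positive for finite $m$; this is precisely why the hypothesis $q(\bvec{y}) \geq \delta$ for $m \geq M$ is included, and it is compatible with Theorem~\ref{thm:marginal_kernel_means_likelihood_copy} provided $p_{\epsilon}(\bvec{y}) > 0$. Beyond this, the remainder of the argument is a careful but routine book-keeping of which supremum hypothesis controls which factor to propagate the rates from Theorems~\ref{thm:kernel_means_likelihood_copy} and~\ref{thm:marginal_kernel_means_likelihood_copy} through the Bayes' rule decomposition.
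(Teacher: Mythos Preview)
Your proposal is correct and follows essentially the same approach as the paper. The paper also decomposes $|q(\bm{\theta}\mid\bvec{y}) - p_{\epsilon}(\bm{\theta}\mid\bvec{y})|$ into a likelihood-error piece and a marginal-likelihood-error piece, arriving at the identical bound $\tfrac{\bar{\kappa}_{\epsilon}\bar{\ell}}{\delta}\bigl(p(\bm{\theta}) + p_{\epsilon}(\bm{\theta}\mid\bvec{y})\bigr)\,\|\hat{\mathcal{U}}_{\bvec{X}|\bm{\Theta}} - \mathcal{U}_{\bvec{X}|\bm{\Theta}}\|_{HS}$, and then handles the uniform cases exactly as you do; the only cosmetic difference is that the paper routes the algebra through the ratios $q(\bvec{y}\mid\bm{\theta})/p_{\epsilon}(\bvec{y}\mid\bm{\theta}) - 1$ and $q(\bvec{y})/p_{\epsilon}(\bvec{y}) - 1$ before these denominators cancel, whereas your add-and-subtract identity reaches the same bound more directly and without the intermediate division by $p_{\epsilon}(\bvec{y}\mid\bm{\theta})$.
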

		Importantly, the requirement for a $\delta > 0$ such that $q(\bvec{y}) \geq \delta$ for all $m \geq M$ where $M \in \mathbb{N}_{+}$ provides an intuition for why high \gls{MKML} values are favorable for learning a good approximate posterior. This requirement is an reflection on the capability of the simulator to recreate the observations $\bvec{y}$ relative to the scale $\epsilon$. Intuitively, the more capable the simulator $p(\bvec{x} | \bm{\theta})$ is at generating simulations $\bvec{x}$ that is close to $\bvec{y}$ with respect to $\epsilon$, the higher $p_{\epsilon}(\bvec{y}) > 0$ will be relatively. Since \cref{thm:marginal_kernel_means_likelihood_copy} guarantees that, for large $m > M$, $q(\bvec{y})$ will be close to $p_{\epsilon}(\bvec{y})$, we have that $q(\bvec{y}) > 0$ for all large $m > M$ with increasing probability. In this situation, \cref{thm:kernel_means_posterior_copy} guarantees that the \gls{KMP} will converge to the posterior of interest. However, consider the case when the simulator is ill-designed to recreate $\bvec{y}$ such that the true marginal likelihood $p_{\epsilon}(\bvec{y}) \approx 0$ is small. As $q(\bvec{y})$ tends to $p_{\epsilon}(\bvec{y}) \approx 0$ due to \cref{thm:marginal_kernel_means_likelihood_copy}, it may struggle to always stay strictly positive even for large $m > M$ since it is stochastically converging to approximately zero. In this case, convergence is difficult since the simulator was ill-designed. However, by learning $\epsilon$ through maximizing $q(\bvec{y})$, we adapt the threshold $\epsilon$ to make $p_{\epsilon}(\bvec{y})$ as high as possible, leading to a more stable posterior $p_{\epsilon}(\bm{\theta} | \bvec{y})$ for the \gls{KMP} to converge to.
		
		We now define \gls{KMPE}, the mean embedding of the \gls{KMP}, as $\tilde{\mu}_{\bm{\Theta} | \bvec{Y} = \bvec{y}}(\bm{\theta}^{\star}) := \int_{\vartheta} \ell(\bm{\theta}, \bm{\theta}^{\star}) q(\bm{\theta} | \bvec{y}) d\bm{\theta}$. This becomes
		\begin{equation}
			\begin{aligned}
				\tilde{\mu}_{\bm{\Theta} | \bvec{Y} = \bvec{y}}(\bm{\theta}^{\star})
				= \frac{1}{q(\bvec{y})} \sum_{j = 1}^{m} v_{j}(\bvec{y}) h(\bm{\theta}_{j}, \bm{\theta}^{\star}),
			\end{aligned}
		\label{eq:kernel_means_posterior_embedding}
		\end{equation}
		where $h(\bm{\theta}, \bm{\theta}^{\star}) := \int_{\vartheta} \ell(\bm{\theta}, \tilde{\bm{\theta}}) \ell(\tilde{\bm{\theta}}, \bm{\theta}^{\star}) p(\tilde{\bm{\theta}}) d\tilde{\bm{\theta}}$. Importantly, since the \gls{KMPE} is constructed from the \gls{CME} used to form the \gls{KML}, it converges in \gls{RKHS} norm at the same rate. See \cref{thm:kernel_means_posterior_embedding} for proof. 
		\begin{theorem}
			\label{thm:kernel_means_posterior_embedding_copy}
			Assume $\ell(\bm{\theta}, \cdot) \in \mathrm{image}(C_{\bm{\Theta} \bm{\Theta}})$ and that there exists $\delta > 0$ such that $q(\bvec{y}) \geq \delta$ for all $m \geq M$ where $M \in \mathbb{N}_{+}$. The \acrfull{KMPE} $\tilde{\mu}_{\bm{\Theta} | \bvec{Y} = \bvec{y}}$ converges in \gls{RKHS} norm to the posterior mean embedding $\mu_{\bm{\Theta} | \bvec{Y} = \bvec{y}}$ at rate $O_{p}((m \lambda)^{-\frac{1}{2}} + \lambda^{\frac{1}{2}})$.
		\end{theorem}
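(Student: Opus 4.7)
The plan is to bound the RKHS norm $\|\tilde{\mu}_{\bm{\Theta} | \bvec{Y} = \bvec{y}} - \mu_{\bm{\Theta} | \bvec{Y} = \bvec{y}}\|_{\mathcal{H}_{\ell}}$ by reducing it to two already-controlled quantities: the uniform convergence of the KML (\cref{thm:kernel_means_likelihood_copy}) and the convergence of the MKML (\cref{thm:marginal_kernel_means_likelihood_copy}). First I would rewrite both embeddings as Bochner integrals in $\mathcal{H}_{\ell}$ by applying Bayes' rule,
\begin{equation*}
\tilde{\mu}_{\bm{\Theta}|\bvec{Y}=\bvec{y}} = \frac{1}{q(\bvec{y})}\!\int_{\vartheta}\! \ell(\bm{\theta},\cdot)\, q(\bvec{y}|\bm{\theta})\, p(\bm{\theta})\, d\bm{\theta}, \qquad \mu_{\bm{\Theta}|\bvec{Y}=\bvec{y}} = \frac{1}{p_{\epsilon}(\bvec{y})}\!\int_{\vartheta}\! \ell(\bm{\theta},\cdot)\, p_{\epsilon}(\bvec{y}|\bm{\theta})\, p(\bm{\theta})\, d\bm{\theta},
\end{equation*}
which is legitimate since $\ell$ is bounded (hence $\bm{\theta} \mapsto \ell(\bm{\theta},\cdot)$ is Bochner integrable against any finite measure on $\vartheta$). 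Denote the numerators $\tilde{A}$ and $C$, so that $\tilde{\mu}_{\bm{\Theta}|\bvec{Y}=\bvec{y}} = \tilde{A}/q(\bvec{y})$ and $\mu_{\bm{\Theta}|\bvec{Y}=\bvec{y}} = C/p_{\epsilon}(\bvec{y})$.

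Next I would use the algebraic decomposition
\begin{equation*}
\tilde{\mu}_{\bm{\Theta}|\bvec{Y}=\bvec{y}} - \mu_{\bm{\Theta}|\bvec{Y}=\bvec{y}} = \frac{\tilde{A} - C}{q(\bvec{y})} + C\left(\frac{1}{q(\bvec{y})} - \frac{1}{p_{\epsilon}(\bvec{y})}\right),
\end{equation*}
and bound the two terms separately by the triangle inequality. For the first term, pulling the RKHS norm inside the Bochner integral gives
\begin{equation*}
\|\tilde{A} - C\|_{\mathcal{H}_{\ell}} \leq \int_{\vartheta} \sqrt{\ell(\bm{\theta},\bm{\theta})}\, |q(\bvec{y}|\bm{\theta}) - p_{\epsilon}(\bvec{y}|\bm{\theta})|\, p(\bm{\theta})\, d\bm{\theta} \leq \kappa_{\ell}\, \sup_{\bm{\theta} \in \vartheta} |q(\bvec{y}|\bm{\theta}) - p_{\epsilon}(\bvec{y}|\bm{\theta})|,
\end{equation*}
where $\kappa_{\ell} := \sup_{\bm{\theta}} \sqrt{\ell(\bm{\theta},\bm{\theta})} < \infty$. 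By \cref{thm:kernel_means_likelihood_copy}, the right-hand side is $O_{p}((m\lambda)^{-1/2} + \lambda^{1/2})$, and the denominator $q(\bvec{y}) \geq \delta$ for $m \geq M$ by assumption. For the second term, $\|C\|_{\mathcal{H}_{\ell}} \leq \kappa_{\ell}\, p_{\epsilon}(\bvec{y})$ by the same Bochner argument, and
\begin{equation*}
\left|\frac{1}{q(\bvec{y})} - \frac{1}{p_{\epsilon}(\bvec{y})}\right| = \frac{|p_{\epsilon}(\bvec{y}) - q(\bvec{y})|}{q(\bvec{y})\, p_{\epsilon}(\bvec{y})} \leq \frac{|p_{\epsilon}(\bvec{y}) - q(\bvec{y})|}{\delta\, p_{\epsilon}(\bvec{y})},
\end{equation*}
which is $O_{p}((m\lambda)^{-1/2} + \lambda^{1/2})$ by \cref{thm:marginal_kernel_means_likelihood_copy}. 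Combining the two bounds yields the stated rate.

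The main obstacle is the justification of the Bochner integral representation of $\tilde{A}$ and $C$ as elements of $\mathcal{H}_{\ell}$, and establishing that $\|\cdot\|_{\mathcal{H}_{\ell}}$ can be brought inside the integral; once the kernel is bounded on $\vartheta$ and the integrating measure is finite, this follows from standard Bochner theory, but it must be stated carefully. Beyond that, everything is a transfer of the pointwise/uniform rates already granted by \cref{thm:kernel_means_likelihood_copy} and \cref{thm:marginal_kernel_means_likelihood_copy} through the Bayes' rule decomposition, with the lower bound $q(\bvec{y}) \geq \delta$ playing exactly the same stabilizing role as in \cref{thm:kernel_means_posterior_copy}.
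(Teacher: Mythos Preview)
Your proof is correct, but it differs structurally from the paper's. The paper first bounds the RKHS norm by the $L^{1}$ distance between the surrogate and true posteriors: it expands $\|\tilde{\mu}_{\bm{\Theta}|\bvec{Y}=\bvec{y}} - \mu_{\bm{\Theta}|\bvec{Y}=\bvec{y}}\|_{\mathcal{H}_{\ell}}^{2}$ as the double integral $\int\!\!\int \ell(\bm{\theta},\bm{\theta}')\big(q(\bm{\theta}|\bvec{y})-p_{\epsilon}(\bm{\theta}|\bvec{y})\big)\big(q(\bm{\theta}'|\bvec{y})-p_{\epsilon}(\bm{\theta}'|\bvec{y})\big)\,d\bm{\theta}\,d\bm{\theta}'$, bounds $|\ell|$ uniformly, and obtains $\|\cdot\|_{\mathcal{H}_{\ell}} \leq \bar{\bar{\ell}}\int_{\vartheta} |q(\bm{\theta}|\bvec{y}) - p_{\epsilon}(\bm{\theta}|\bvec{y})|\,d\bm{\theta}$. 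It then plugs in the pointwise posterior bound derived inside the proof of \cref{thm:kernel_means_posterior_copy}, namely $|q(\bm{\theta}|\bvec{y}) - p_{\epsilon}(\bm{\theta}|\bvec{y})| \leq \delta^{-1}\bar{\kappa}_{\epsilon}\bar{\ell}\,(p(\bm{\theta})+p_{\epsilon}(\bm{\theta}|\bvec{y}))\,\|\hat{\mathcal{U}}_{\bvec{X}|\bm{\Theta}}-\mathcal{U}_{\bvec{X}|\bm{\Theta}}\|_{HS}$, and integrates to get the constant $2$.

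Your approach bypasses the posterior difference entirely: you split the embedding error at the level of numerator and denominator and invoke \cref{thm:kernel_means_likelihood_copy} and \cref{thm:marginal_kernel_means_likelihood_copy} as black boxes. This is more modular---it does not require reaching into the proof of \cref{thm:kernel_means_posterior_copy}---and the cancellation of $p_{\epsilon}(\bvec{y})$ in your second term is clean. The paper's route, on the other hand, avoids any explicit appeal to Bochner integrability by computing the squared norm directly via the reproducing property, so the ``main obstacle'' you flag simply does not arise there. Both arguments yield the same rate with the same dependence on $\delta$.
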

		If we choose $\ell$ to be an anisotropic Gaussian kernel with length scales $\bm{\beta} = \{\beta_{d}\}_{d = 1}^{D}$, $h$ exhibits the following closed-form under anisotropic Gaussian priors, 
		\begin{equation}
			h(\bm{\theta}, \bm{\theta}^{\star}) = \prod_{d = 1}^{D} \frac{s_{d}}{\sigma_{d}} \exp{\bigg[ - \frac{1}{2 s_{d}^{2}} \Big( a_{d} - b_{d}^{2} \Big) \bigg]},
		\label{eq:posterior_embedding_kernel}
		\end{equation}
		where $a_{d} := (\theta_{d}^{2} + {\theta^{\star}_{d}}^{2} + \gamma_{d}^{2} \mu_{d}^{2}) / (2 + \gamma_{d}^{2})$, $b_{d} := (\theta_{d} + \theta^{\star}_{d} + \gamma_{d}^{2} \mu_{d}) /(2 + \gamma_{d}^{2})$, $\gamma_{d}^{2} := \beta_{d}^{2} / \sigma_{d}^{2}$ and $s_{d}^{-2} := 2 \beta_{d}^{-2} + \sigma_{d}^{-2}$. For general non-Gaussian kernels and priors, $h$ can be approximated as $\tilde{h}(\bm{\theta}, \bm{\theta}^{\star}) = \frac{1}{T} \sum_{t = 1}^{T} \ell(\bm{\theta}, \tilde{\bm{\theta}}_{t}) \ell(\tilde{\bm{\theta}}_{t}, \bm{\theta}^{\star})$.
		
		The \gls{KMP} $q(\cdot | \bvec{y})$ is bounded and normalized but potentially non-positive. Consequently, it can be seen as a surrogate density corresponding to a signed measure. This suggests that the map $q(\cdot | \bvec{y}) \mapsto \tilde{\mu}_{\bm{\Theta} | \bvec{Y} = \bvec{y}}$ is injective for characteristic kernels $\ell$, analogous to mean embeddings \citep{sriperumbudur2011universality}. Furthermore, as the integral \eqref{eq:kernel_means_posterior_embedding} is a linear operator on $\ell(\bm{\theta}^{\star}, \cdot)$, the surrogate posterior mean embedding $\tilde{\mu}_{\bm{\Theta} | \bvec{Y} = \bvec{y}} \in \mathcal{H}_{\ell}$ is in the \gls{RKHS} of $\ell$. With a surrogate embedding that is injective to our surrogate posterior and in the \gls{RKHS}, we can apply kernel herding \citep{chen2010super} on $\tilde{\mu}_{\bm{\Theta} | \bvec{Y} = \bvec{y}}$ \eqref{eq:kernel_means_posterior_embedding} using kernel $\ell$ to obtain $S$ super-samples $\{ \hat{\bm{\theta}}_{s} \}_{s = 1}^{S}$ from the surrogate density $q(\bm{\theta} | \bvec{y})$. That is, for each $s \in [S]$, the samples are obtained by
		\begin{equation}
			\hat{\bm{\theta}}_{s} = \argmax_{\bm{\theta} \in \vartheta} \tilde{\mu}_{\bm{\Theta} | \bvec{Y} = \bvec{y}}(\bm{\theta}) - \frac{1}{s} \sum_{s' = 1}^{s - 1} \ell(\hat{\bm{\theta}}_{s'}, \bm{\theta}).
		\end{equation}
		The inference algorithm is presented in \cref{alg:kernel_means_posterior_super_sampling}. 
			
\section{Related Work} 
	
			\begin{figure*}[t]
				\centering
				\includegraphics[width=0.49\linewidth]{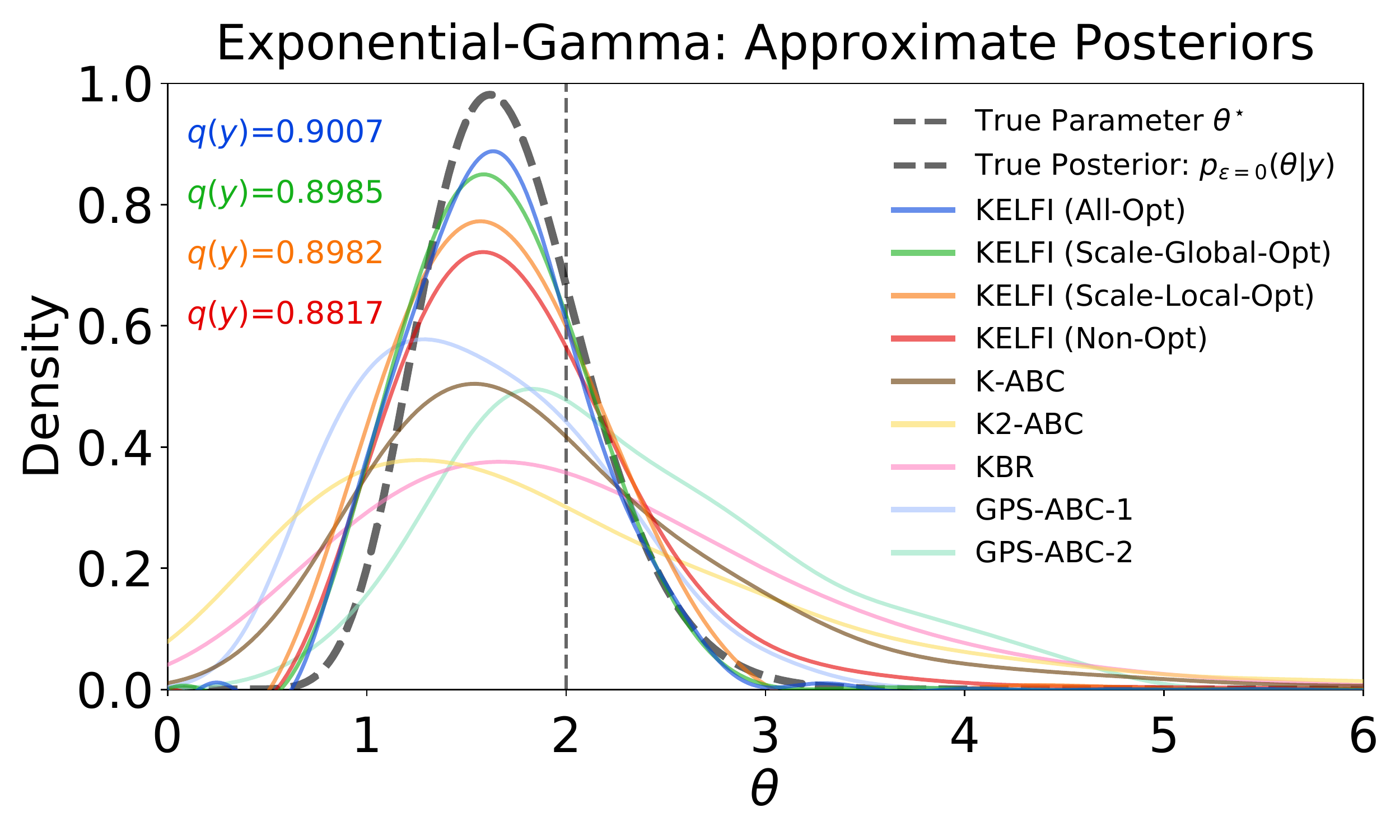}
				\includegraphics[width=0.49\linewidth]{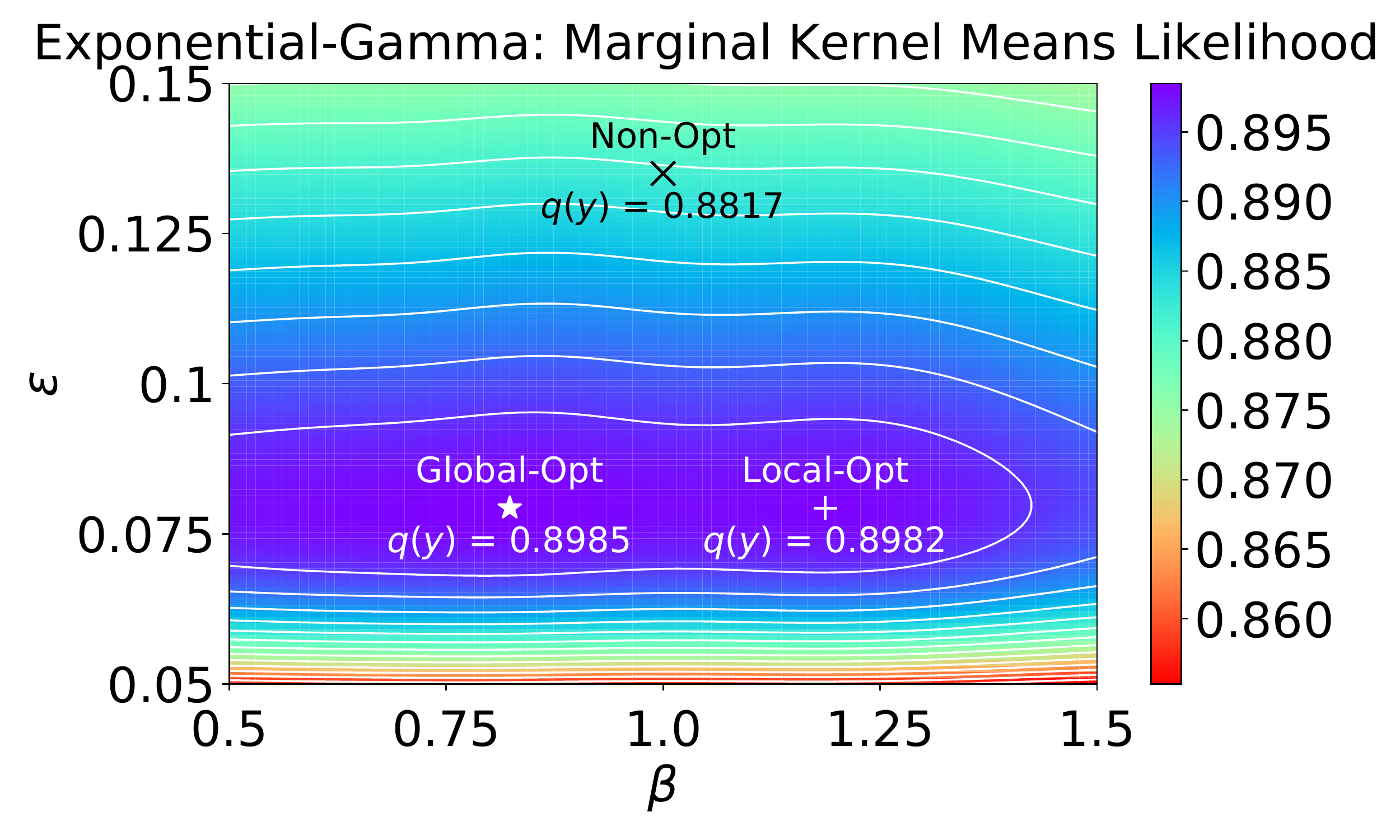}
				\vspace{-1.5em} 
				\caption{{(\bf{Left})} Comparison of approximate posteriors obtained from surrogate methods on the toy exponential-gamma problem. {(\bf{Right})} The corresponding \gls{MKML} surface $q(\bvec{y})$ as a function of $(\epsilon, \beta)$ with $\lambda = 10^{-3} \beta$.}
			\label{fig:toy}
			\end{figure*}
			
		The simplest \gls{ABC} algorithm is arguably the \gls{REJ-ABC} sampler \citep{pritchard1999population}. It posits a set of prior parameters and rejects those whose simulations do not match the observations within a fixed threshold $\epsilon > 0$ under a distance measure. 
		
		Instead of sampling from the prior, \gls{MCMC}-\gls{ABC} and \gls{SMC-ABC} sample from proposal distributions iteratively and carefully accepts or discards each proposal stochastically based on approximate likelihood ratios \citep{sisson2007sequential, marjoram2003markov}. They can however suffer from slow mixing, where it is difficult to escape a lucky sample with a high likelihood. They also do not leverage likelihood smoothness and thus require new simulations every iteration, which are then discarded and may still not result in an accepted sample. 

		Another branch of study include \gls{SVI} approaches to \gls{ABC}, which treats the likelihood approximation as another source of stochasticity in the stochastic gradient. This includes \acrshort{AV-ABC} \citep{moreno2016automatic}, \acrshort{VBIL} \citep{tran2017variational}, and \acrshort{VBSL} \citep{ong2018variational}. In contrast, \gls{LFVI} \citep{tran2017hierarchical} uses density ratio estimation to approximate the variational objective, emphasizing inference on local latent variables. Nevertheless, \gls{SVI} approaches posit parametric approximations that may have asymptotic bias. 
		
		Kernel-based approaches that leverage likelihood smoothness have been studied recently to reduce simulation requirements. The philosophy is that simulations of close-by parameters are informative, thus past results should not be discarded but remembered, even if this introduces model bias.
		\Gls{K-ABC} \citep{nakagome2013kernel}, \gls{KR-ABC} \citep{kajihara2018kernel}, and \gls{KBR} \citep{fukumizu2013kernel} also employ \glspl{CME} to reduce simulation requirements. They differ to \gls{KELFI} in the three aspects of model, learning, and inference. (Model) While they build posterior mean embeddings directly, \gls{KELFI} builds likelihood surrogates first and make use of the full prior density to further leverage prior information before building posterior surrogates, which are then embedded into closed-form posterior mean embeddings. In contrast, the prior only appears as samples from $p(\bm{\theta})$ in \gls{K-ABC}, \gls{KR-ABC}, and \gls{KBR}. This both limits the prior knowledge leveraged and prohibit the use of proposal prior samples. (Learning) \gls{KELFI} crucially addresses hyperparameter learning in reference to the inference problem directly which was not straightforward previously. (Inference) \Gls{K-ABC} and \gls{KBR} primarily infer posterior expectations, while \gls{KR-ABC} produce point estimates. Instead, we design a posterior sampling algorithm, which subsumes inferring posterior expectation. We further provide approximate posterior density \gls{KMP}, which can both produce point estimates and quantify uncertainty.
		
		As a consequence of \cref{thm:kernel_means_posterior_embedding_copy}, the \gls{KMPE} converges at rate $O_{p}(m^{-\frac{1}{4}})$ in \gls{RKHS} norm if the regularization hyperparameter $\lambda$ is chosen to decay at rate $O_{p}(m^{-\frac{1}{2}})$. Notably, this is faster than the convergence rate of \gls{KBR} at $O_{p}(m^{-\frac{8}{27} \alpha})$ where $0 < \alpha \leq \frac{1}{2}$, which also requires other assumptions on the cross-covariance operators and for its two regularization hyperparameters to be decayed appropriately \citep{fukumizu2013kernel}.
		
		Finally, we highlight that hyperparameter learning is a crucial aspect and differentiator of \gls{KELFI}. This is especially true for learning $\epsilon$, which tunes the critical balance between an accurate posterior $p_{\epsilon}(\bm{\theta} | \bvec{y}) \approx p_{0}(\bm{\theta} | \bvec{y})$ with small $\epsilon$ requiring high numbers of simulation calls, or a less accurate posterior with large $\epsilon$ relaxing the number of simulations required. This has been a challenging issue to address in the \gls{ABC} literature in reference to the inference problem, even though its selection is often pivotal to the performance of the algorithm.
		
		In the \gls{GP} literature, hyperparameter learning through maximum marginal likelihood plays an important role in the success of a \gls{GPR}. \Gls{GPS-ABC} \citep{meeds2014gps} and \Gls{GPA-ABC} \citep{wilkinson2014accelerating} model the summary statistics surface and log likelihood surface respectively via a \gls{GP} surrogate. In contrast, the \gls{KML} model is equivalent to placing a \gls{GP} surrogate on the likelihood surface itself. This removes the assumption that summary statistics are independent and Gaussian distributed as in \gls{GPS-ABC}. 
		Importantly, while \gls{GPS-ABC} and \gls{GPA-ABC} apply the \gls{GP} marginal likelihood to learn their surrogate hyperparameters, it cannot learn $\epsilon$ or other hyperparameters since they are not part of the surrogate. This is because both approaches maximize the marginal likelihood for the \gls{GPR} problem on the their respective target surfaces, but not the marginal likelihood for the overall inference problem, thus excluding other hyperparameters in the process.
			
\section{Experiments}

			\begin{figure*}[t]
				\centering
				\includegraphics[width=\linewidth]{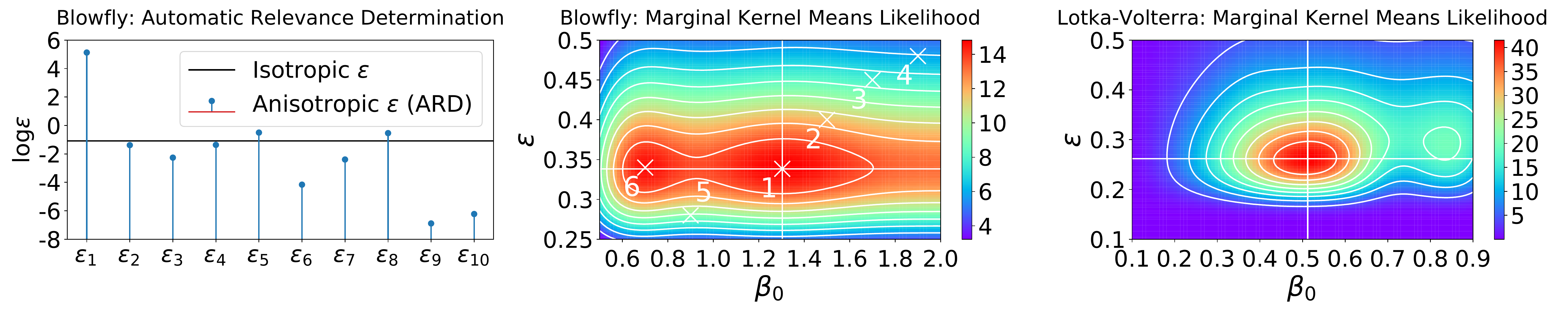}
				\vspace{-2.5em} 
				\caption{{(\bf{Left})} Blowfly: \gls{ARD} on $\bm{\epsilon}$ for 10 summary statistics. {(\bf{Mid. \& Right})} The \gls{MKML} surface ($\times 10^{5}$) as a function of $(\epsilon, \beta_{0})$ for fixed $\lambda = 10^{-3} \beta_{0}$ where $\bm{\beta} = \beta_{0} \bm{\sigma}$. White intersection indicate optimum. For Blowfly, the NMSE (in \%) for the indicated hyperparameter choices are: $(1) 0.72 \pm 0.02, (2) 1.10 \pm 0.01, (3) 2.07 \pm 0.01, (4) 2.15 \pm 0.02, (5) 1.11 \pm 0.02, (6) 1.11 \pm 0.03$. At $(\epsilon, \beta_{0}) = (10, 10)$ (outside the plot) the NMSE (in \%) is $6.28 \pm 0.03$.}
				\label{fig:learning}
			\end{figure*}
			
	The goal of the experiments is to demonstrate the inference accuracy of \gls{KELFI} under limited simulation budget and the effectiveness of \gls{MKML} hyperparameter learning. We begin with isotropic $\epsilon$ and anisotropic $\bm{\beta} = \beta_{0} \bm{\sigma}$, and learn $(\epsilon, \beta_{0})$ by maximizing the \gls{MKML} \eqref{eq:marginal_kernel_means_likelihood} while keeping $\lambda = 10^{-3} \beta_{0}$ fixed for simplicity. 
	
	\subsection{Toy Problem: Exponential-Gamma}
			
		The toy exponential-gamma problem is a standard benchmark for likelihood-free inference, since the true posterior $p_{\epsilon}(\bm{\theta} | \bvec{y})$ is known and tractable even for $\epsilon = 0$. 
		
		To stress-test each method, we compare inference accuracy under very limited simulations of $m = 100$. We focus on comparing surrogate approaches, since other methods such as \gls{REJ-ABC}, \gls{MCMC}-\gls{ABC}, \gls{SL-ABC}, and \gls{ASL-ABC} have reported simulation requirements several orders higher than $100$ on this problem \citep{meeds2014gps}. We use datasets of $n = 15$ for both observations and simulations, with their sample means as the summary statistic.
		
		For \gls{GPS-ABC} only we set a simulation budget of $m \leq 200$ and run it until $10000$ posterior samples are generated. The hyperparameters of the \gls{GP} surrogate itself are learned by maximizing the marginal likelihood of the \gls{GPR} \citep{rasmussen2006gaussian}. For the remaining hyperparameters that are not part of the surrogate, several configurations are compared and the results of the best two are shown, which used $m = 130$ and $m = 197$ simulations. For \gls{K-ABC}, \gls{K2-ABC}, and \gls{KBR}, we use the median heuristic to set their length scale hyperparameters and manually search for the most appropriate regularization hyperparameters. We use \gls{KDE} to visualize the posterior density from the unweighted samples of \gls{GPS-ABC} and normalized weighted samples of \gls{K-ABC}, \gls{K2-ABC}, and \gls{KBR} in \cref{fig:toy} (left). 
		
		
		For \gls{KELFI}, we show the \glspl{KMP} directly in \cref{fig:toy} (left). We first demonstrate the case when all hyperparameters $(\epsilon, \beta, \lambda)$ are learned (All-Opt). To enable visualization in 2D, we also present the case when the regularization hyperparameter $\lambda$ is set to $10^{-3} \beta$ and only length scale hyperparameters $(\epsilon, \beta)$ are learned. In this case, we show \glspl{KMP} under globally optimal (Scale-Global-Opt), locally optimal (Scale-Local-Opt), and arbitrarily chosen hyperparameters (Non-Opt). The corresponding \gls{MKML} surface is shown in \cref{fig:toy} (right).
		
		In \cref{fig:toy} we compare approximate posteriors from each algorithm against the true posterior $p_{\epsilon = 0}(\bm{\theta} | \bvec{y})$. While $\epsilon = 0$ for $p_{\epsilon = 0}(\bm{\theta} | \bvec{y})$, with only $100$ simulations $\epsilon > 0$ is required for most \gls{LFI} methods. Furthermore, except for \gls{K2-ABC}, they only make use of summary statistics without further knowledge of the dataset size $n$. Consequently, most \gls{LFI} methods produce approximations wider than $p_{\epsilon=0}(\bm{\theta} | \bvec{y})$. Intuitively, there is not enough simulations and thus information to justify a more confident and peaked posterior. Nevertheless, by learning hyperparameters under the \gls{MKML}, \gls{KELFI} determines an appropriate scale $\epsilon$ for $100$ simulations. As a result, \glspl{KMP} are the closest to the true posterior $p_{\epsilon=0}(\bm{\theta} | \bvec{y})$, with higher \gls{MKML} $q(\bvec{y})$ leading to more accurate \glspl{KMP} $q(\bm{\theta} | \bvec{y})$. This demonstrates the effectiveness of \gls{MKML} as a hyperparameter learning objective for improving inference accuracy. In contrast, the two instances of \gls{GPS-ABC} reveals that varying hyperparameters lead to significant changes in the resulting approximate posterior, yet without a similar objective like \gls{MKML} it is unclear which one to use without ground truth. This is further emphasized by the wider posterior approximations obtained from \gls{K-ABC}, \gls{K2-ABC}, and \gls{KBR}, which use the median heuristic to set hyperparameters. This is often sub-optimal since the heuristic makes no reference to the inference problem.
	
	\subsection{Chaotic Ecological Systems: Blowfly}
			
			\begin{figure*}[t]
				\centering
				\includegraphics[width=0.63\linewidth]{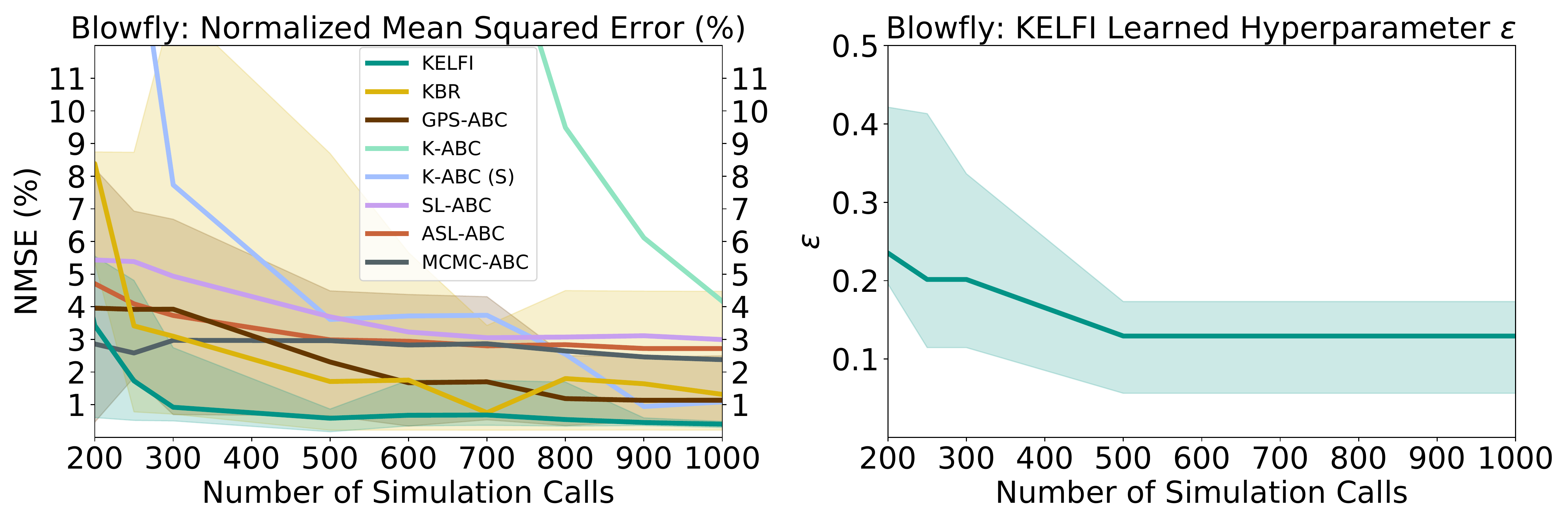}
				\includegraphics[width=0.35\linewidth]{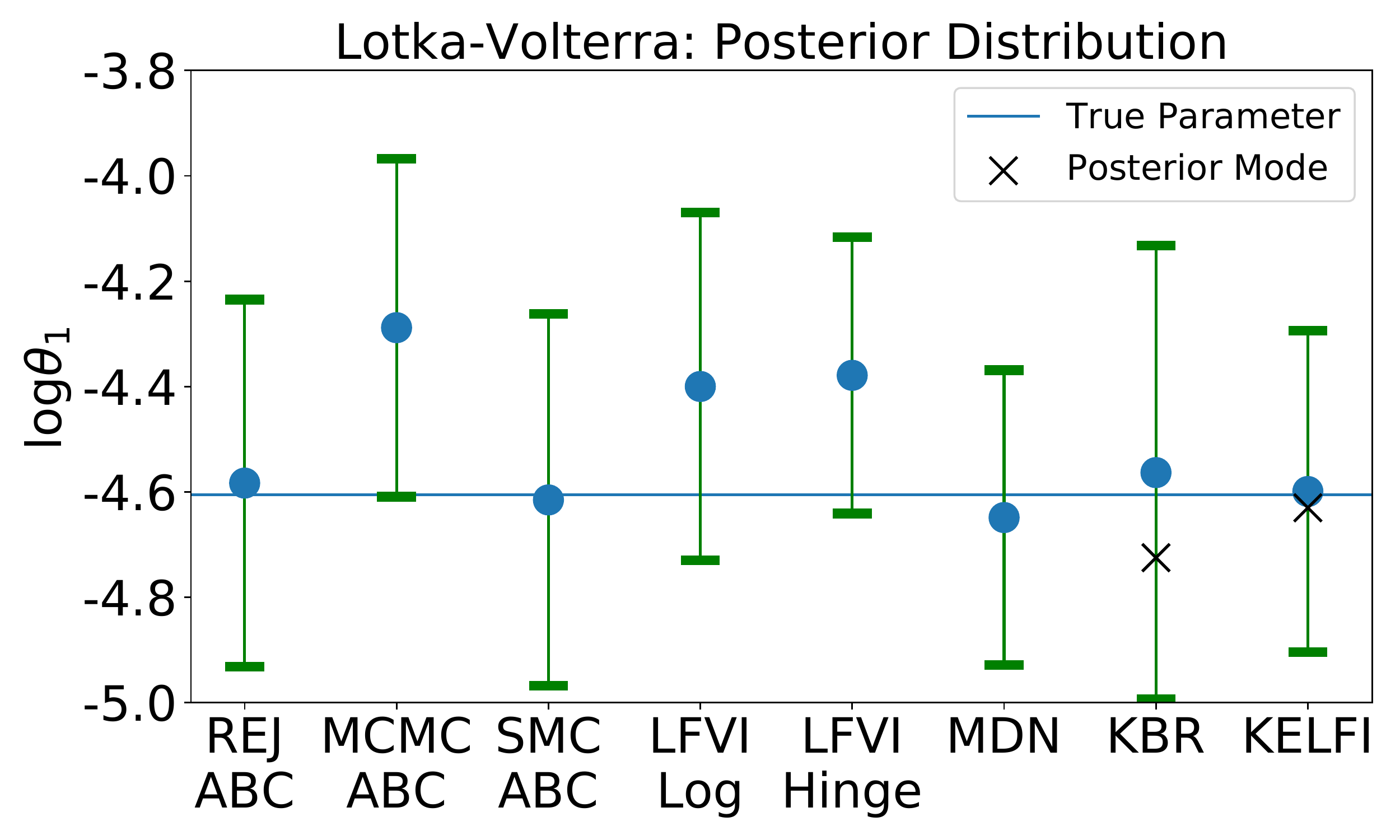}
				\vspace{-1em} 
				\caption{{(\bf{Left})} Blowfly: Average NMSE (in \%) under posteriors against simulation calls. Shaded regions show NMSE variability for KELFI, KBR, and GPS-ABC. {(\bf{Mid.})} Blowfly: Learned $\epsilon$ value under maximum \gls{MKML}. {(\bf{Right})} Lotka-Volterra: The middle 95\% credible interval of the marginal posterior distribution of $\log{\theta_{1}}$.}
				\label{fig:inference}
			\end{figure*}
		
		The Blowfly simulator describes the complex population dynamics of adult blowflies. Across a range of parameters it exhibits chaotic behavior that have distinct discrepancies from real observations, resulting in a challenging inference problem. We follow the setup of \cite{wood2010statistical}. There are 6 model parameters from which the simulator generates a time series of 180 data points that is then summarized into 10 statistics as described in \cite{meeds2014gps}, \cite{moreno2016automatic}, and \cite{park2016k2}. We similarly place a broad diagonal Gaussian prior on log parameters. 
		
		The standard Blowfly problem has no ground truth parameters, only a set of observations. We therefore measure inference accuracy by considering \glspl{MSE} between statistics generated using the posterior and the observed statistics. We normalize the \gls{MSE} of each statistic by the corresponding \gls{MSE} achieved under the prior, and average across the 10 statistics into a final \gls{NMSE}. As simulations are expensive, in \cref{fig:inference} (left) we record average \gls{NMSE} against simulations used to understand inference efficiency. Each method is repeated $10$ times with randomized simulations before their \gls{NMSE} is averaged. \Cref{sec:blowfly} provides further details.
		
		As new simulations become available, we relearn and update the hyperparameters for \gls{KELFI} by maximizing the \gls{MKML}. \Cref{fig:learning} (center) shows an instance of the \gls{MKML} surface used to learn the hyperparameters for \gls{KELFI} when using $m = 280$ simulations. For \gls{KBR} and \gls{K-ABC} we update hyperparameters by the median length heuristic. For \gls{K-ABC} we also report the case where the heurstic is scaled by a constant denoted with (S), which achieved significantly better accuracy and confirms that the heuristic is often sub-optimal.
		
		Overall, the top three performers are \gls{KELFI}, \gls{KBR}, and \gls{GPS-ABC}. Across a range of simulation calls, \gls{KELFI} achieves the lowest error. It is also the only method that achieved less than $1\%$ average \gls{NMSE} within $1000$ simulations and  achieves this as early as $300$ simulations. The most competitive methods to \gls{KELFI} are \gls{KBR} and \gls{GPS-ABC}. For these three methods, we also show their variability from best to worst case \glspl{NMSE} out of the $10$ repeats to visualize their sensitivity to the stochasticity in randomized simulations. This reveals that \gls{KELFI} is a stable outperformer with comparatively less variability across randomized runs.
		
		We proceed to demonstrate and emphasize the effectiveness and suitability of \gls{MKML} as a hyperparameter learning objective, using the case with $280$ simulations as an example. \Cref{fig:learning} (center) illustrates that hyperparameters with a higher \gls{MKML} \eqref{eq:marginal_kernel_means_likelihood} result in lower \gls{NMSE} consistently. Notably, even with suboptimal hyperparameter choices, \gls{KELFI} still achieves competitive average \gls{NMSE} scores of less than 2.2\%. At $280$ simulations, the next best average \gls{NMSE} score is almost 3\% by \gls{MCMC}-\gls{ABC} as shown in \cref{fig:inference} (left).
		
		\Cref{fig:inference} (center) suggests that learning the scale $\epsilon$ under \gls{MKML} reveals an automatic decay schedule which does not have to be set a-priori. As $\epsilon$ controls the scale within which discrepancies between simulations and observations are measured, it is expected that this scale decays as more simulation data is available. Without the \gls{MKML}, both the initialization of $\epsilon$ and its decay schedule are not straight forward to determine. 
		
		In \cref{fig:learning} (left), we show that we can perform \gls{ARD} on the \gls{ABC} $\epsilon$-kernel $\kappa_{\bm{\epsilon}}$, and hence the kernel $k_{\bm{\epsilon}}$, by using a different $\epsilon_{i}$ for each of the $10$ statistics. We do this by initializing each $\epsilon_{i}$ to the isotropic solution in \cref{fig:learning} (center) and further optimize the \gls{MKML} to learn all $\epsilon_{i}$ jointly. In particular, the first summary statistic describes the average log population numbers nears its troughs (first quartile), and is determined to be comparatively irrelevant (high $\epsilon_{i}$). Meanwhile, the last two statistics describe the number of peaks at two thresholds, and are determined to be comparatively relevant (low $\epsilon_{i}$). This agrees with the intuition that Blowfly population dynamics are highly characterized by its peaks, instead of i its troughs \citep{wood2010statistical}.
		

	\subsection{Predator-Prey Dynamics: Lotka-Volterra}
	
		The Lotka-Volterra simulator describes the time evolution of the populations within a predator-prey system. Only for a small set of parameters does the model simulate a realistic scenario with oscillatory behavior, making the inference task formidably challenging. We follow the exact setup as described in \cite{papamakarios2016fast}. There are 4 parameters and 9 normalized summary statistics. We place the same uniform prior on the log parameters and use the same ground truth parameters. After performing inference on all four parameters, we show in \cref{fig:inference} (right) the marginal posterior distribution for $\log{\theta_{1}}$.
		
		\Gls{KELFI} achieves competitive performance using only $2500$ simulations, with both posterior mean and mode close to the true value. The \gls{MKML} for hyperparameter learning is shown in \cref{fig:learning} (right). Posterior mode is obtained by maximizing the \gls{KMP}. Meanwhile, the three \gls{ABC} methods used up to $100000$ simulations. While confident, \gls{LFVI} \citep{tran2017hierarchical} tends to have a biased posterior mean. For direct comparison, both \gls{KELFI} and \gls{MDN} \citep{papamakarios2016fast} use the original prior as the proposal prior. \gls{KELFI} achieves slightly higher accuracy than \gls{MDN} which used $10000$ simulations, $4$ times that used for \gls{KELFI}. Finally, we also similarly use $2500$ simulations for \gls{KBR}. With the same number of simulations, \gls{KELFI} achieves higher accuracy in both mean and mode with higher confidence.
		

\vspace{-0.5em}
\section{Conclusion}

	\Gls{KELFI} provides a holistic framework for automatic likelihood-free inference. It is a stable outperformer compared to state-of-the-art methods, while producing interpretable automatic relevance determination of summary statistics and automatic decay schedules for $\epsilon$. By optimizing an approximate Bayesian marginal likelihood, it automatically learns and adapts hyperparameters including the $\epsilon$-kernel to improve inference accuracy when limited simulations are available. 

\newpage
\bibliographystyle{apalike}
\bibliography{references}

\onecolumn
\newpage
\appendix
	
\section{Theoretical Guarantees on Convergence}

	We provide theoretical guarantees that establish convergence of the \acrfull{KELFI} framework. \Cref{sec:kernel_properties} begins by summarizing the properties of kernels used in \gls{KELFI} and introducing relevant quantities. \Cref{sec:condition_mean_embedding,,sec:empirical_condition_mean_embedding} provide an overview of \acrfullpl{CME} and their empirical estimates respectively in the context of \gls{KELFI}. \Cref{sec:general_convergence_theorems} establishes general convergence theorems for estimators based on the \gls{CME}. Using these results, we prove convergence guarantees for the \acrfull{KML}, \acrfull{MKML}, \acrfull{KMP}, and \acrfull{KMPE} in \cref{sec:kernel_means_likelihood_convergence,,sec:marginal_kernel_means_likelihood_convergence,,sec:kernel_means_posterior_convergence,,sec:kernel_means_posterior_embedding_convergence} respectively.
	
	\subsection{Kernel Properties}
	\label{sec:kernel_properties}
		
		The \gls{KELFI} framework uses a data kernel $k : \mathcal{D} \times \mathcal{D} \to \mathbb{R}$ where $\mathcal{X}, \mathcal{Y} \subseteq \mathcal{D}$. We do not assume that $\mathcal{X}$ and $\mathcal{Y}$ are necessarily the same. For example, it is possible to record observation $\bvec{y}$ in which the simulator $p(\bvec{x} | \bm{\theta})$ can never generate or fully recover, such as when $\mathcal{X} \subset \mathcal{Y}$. Conversely, it is also possible that the simulator $p(\bvec{x} | \bm{\theta})$ can generate a larger variety of simulations $\bvec{x}$ than that is possible to observe, such as when $\mathcal{Y} \subset \mathcal{X}$. It can also be neither of such cases such as when $\mathcal{X}$ an $\mathcal{Y}$ only have some overlap. However, since we assume $\mathcal{X}, \mathcal{Y} \subseteq \mathcal{D}$, the kernel $k$ is able to measure the similarity between simulated data $\bvec{x} \in \mathcal{X} \subseteq \mathcal{D}$ and observed data $\bvec{y} \in \mathcal{Y} \subseteq \mathcal{D}$.
		
		The \gls{KELFI} framework employs bounded symmetric positive definite kernels $\ell$ and $k$. Because they are bounded, we can explicitly denote the following upper bounds to their \gls{RKHS} norm,
		\begin{equation}
			\bar{\ell} := \sup_{\bm{\theta} \in \vartheta} \| \ell(\bm{\theta}, \cdot) \|_{\mathcal{H}_{\ell}} = \sup_{\bm{\theta} \in \vartheta} \sqrt{\ell(\bm{\theta}, \bm{\theta})},
		\end{equation}
		\begin{equation}
			\bar{k} := \sup_{\bvec{d} \in \mathcal{D}} \| k(\bvec{d}, \cdot) \|_{\mathcal{H}_{k}} = \sup_{\bvec{d} \in \mathcal{D}} \sqrt{k(\bvec{d}, \bvec{d})}.
		\end{equation}
		When $\ell$ and $k$ are stationary, we have $\bar{\ell} = \sqrt{\ell(\bvec{0}, \bvec{0})}$ and $\bar{k} = \sqrt{k(\bvec{0}, \bvec{0})}$. 
		
		In the \gls{KELFI} framework, we first select the $\epsilon$-kernel $\kappa_{\epsilon}$. Based on this the choice of the $\epsilon$-kernel, we then select the kernel $k$ to satisfy
		\begin{equation}
			\kappa_{\epsilon}(\bvec{y}, \bvec{x}) = c_{\epsilon} k(\bvec{y}, \bvec{x}),
		\end{equation}
		where $c_{\epsilon} > 0$ is a scaling constant to ensure that $\kappa_{\epsilon}(\bvec{y}, \bvec{x}) = p_{\epsilon}(\bvec{y} | \bvec{x})$ is a normalized density on $\mathcal{Y}$. In contrast, the kernel $k$ has no such restriction. Since it is a scaled version of $k$, $\kappa_{\epsilon}$ is also bounded symmetric positive definite as a function of $\bvec{x}$ and $\bvec{y}$. In this way, $\kappa_{\epsilon}(\bvec{d}, \cdot) \in \mathcal{H}_{k}$ is always in the \gls{RKHS} $\mathcal{H}_{k}$ characterized by $k$ for all $\bvec{d} \in \mathcal{D}$. As a consequence, $\epsilon$ is also a hyperparameter of $k$, although this is not explicitly notated for brevity.
		
		Since $\bvec{y} \in \mathcal{Y} \subseteq \mathcal{D}$, we have $\kappa_{\epsilon}(\bvec{y}, \cdot) \in \mathcal{H}_{k}$. We can then find its \gls{RKHS} norm, 
		\begin{equation}
			\| \kappa_{\epsilon}(\bvec{y}, \cdot) \|_{\mathcal{H}_{k}} = c_{\epsilon} \| k(\bvec{y}, \cdot) \|_{\mathcal{H}_{k}} = c_{\epsilon} \sqrt{k(\bvec{y}, \bvec{y})} = \sqrt{c_{\epsilon}} \sqrt{c_{\epsilon} k(\bvec{y}, \bvec{y})} = \sqrt{c_{\epsilon}} \sqrt{\kappa_{\epsilon}(\bvec{y}, \bvec{y})},
		\end{equation}
		which is different to $\| \kappa_{\epsilon}(\bvec{y}, \cdot) \|_{\mathcal{H}_{\kappa_{\epsilon}}} = \sqrt{\kappa_{\epsilon}(\bvec{y}, \bvec{y})}$. Therefore, while the \gls{KELFI} algorithm only requires $\kappa_{\epsilon}$ to be specified and $k$ is not explicitly used, this subtle difference is a reminder that $k$ is the underlying kernel that defines the \gls{RKHS}, not $\kappa_{\epsilon}$.
		As a consequence, we have that the upper bound to the \gls{RKHS} norm of $\kappa_{\epsilon}$ satisfies
		\begin{equation}
			\bar{\kappa}_{\epsilon} := \sup_{\bvec{d} \in \mathcal{D}} \| \kappa_{\epsilon}(\bvec{d}, \cdot) \|_{\mathcal{H}_{k}} = \sqrt{c_{\epsilon}} \sup_{\bvec{d} \in \mathcal{D}} \sqrt{\kappa_{\epsilon}(\bvec{d}, \bvec{d})}.
		\end{equation}
		Furthermore, if $\kappa_{\epsilon}$ is stationary, then $\kappa_{\epsilon}(\bvec{d}, \bvec{d}) = \kappa_{\epsilon}(\bvec{0}, \bvec{0})$ for all $\bvec{d} \in \mathcal{D}$. A typical example is the Gaussian density $\kappa_{\epsilon}(\bvec{y}, \bvec{x}) = \mathcal{N}(\bvec{y} | \bvec{x}, \epsilon^{2} I)$. In this case, $c_{\epsilon} = 1 / (\sqrt{2 \pi} \epsilon)^{n}$ and $\kappa_{\epsilon}(\bvec{y}, \bvec{y}) = 1 / (\sqrt{2 \pi} \epsilon)^{n}$ are the same, and thus $\| \kappa_{\epsilon}(\bvec{y}, \cdot) \|_{\mathcal{H}_{k}} = 1 / (\sqrt{2 \pi} \epsilon)^{n} = c_{\epsilon}$. The corresponding kernel $k$ is the isotropic Gaussian kernel
		
		When $\mathcal{D} = \mathbb{R}^{n}$, the most commonly used kernel for the \gls{KELFI} framework is the anisotropic Gaussian kernel where each dimension uses a potentially different length scale $\sigma_{i}$. When its length scales are learned via some hyperparameter learning algorithm, it is also referred to as the \gls{ARD} kernel. This kernel has the following form,
		\begin{equation}
			k(\bvec{x}, \bvec{x}') = \exp{\bigg( -\frac{1}{2} \sum_{i = 1}^{n} \Big( \frac{x_{i} - x_{i}'}{\sigma_{i}} \Big)^{2} \bigg)}.
		\end{equation}
		Since $\kappa_{\epsilon}(\bvec{y}, \bvec{x}) = c_{\epsilon} k(\bvec{y}, \bvec{x})$, this means that the length scales are simply the \gls{ABC} tolerance $\sigma_{i} = \epsilon_{i}$ for $i \in [n]$, and that there can be a separate tolerance for each dimension of the data or summary statistic. Similarly, when $\vartheta = \mathbb{R}^{D}$, we also often employ the \gls{ARD} kernel for $\ell$, but we use $\beta_{d}$, $d \in [D]$, to denote the length scales.

	\subsection{Conditional Mean Embedding}
	\label{sec:condition_mean_embedding}
				
		To construct a conditional mean operator $\mathcal{U}_{\bvec{X} | \bm{\Theta}}$ corresponding to the distribution $p(\bvec{x} | \bm{\theta})$, we first choose a kernel $\ell : \vartheta \times \vartheta \to \mathbb{R}$ for domain $\vartheta$ and another kernel $k : \mathcal{D} \times \mathcal{D} \to \mathbb{R}$ for domain $\mathcal{D}$. These kernels $\ell$ and $k$ each describe how similarity is measured within their respective domains, and are bounded symmetric positive definite such that they uniquely define the \gls{RKHS} $\mathcal{H}_{\ell}$ and $\mathcal{H}_{k}$.
			
		The conditional mean operator $\mathcal{U}_{\bvec{X} | \bm{\Theta}} : \mathcal{H}_{\ell} \to \mathcal{H}_{k}$ is defined by the equation $\mu_{\bvec{X} | \bm{\Theta} = \bm{\theta}} = \mathcal{U}_{\bvec{X} | \bm{\Theta}} \ell(\bm{\theta}, \cdot)$, where $\mu_{\bvec{X} | \bm{\Theta} = \bm{\theta}}$ is the \gls{CME} defined by
		\begin{equation}
			\mu_{\bvec{X} | \bm{\Theta} = \bm{\theta}} := \mathbb{E}[k(\bvec{X}, \cdot) | \bm{\Theta} = \bm{\theta}].
		\end{equation}
		In this sense, $\mathcal{U}_{\bvec{X} | \bm{\Theta}}$ sweeps out a family of \glspl{CME} $\mu_{\bvec{X} | \bm{\Theta} = \bm{\theta}} \in \mathcal{H}_{k}$, each indexed by $\bm{\theta} \in \vartheta$.
		
		We then define cross covariance operators $C_{\bvec{X} \bm{\Theta}} := \mathbb{E}[k(\bvec{X}, \cdot) \otimes \ell(\bm{\Theta}, \cdot)] : \mathcal{H}_{\ell} \to \mathcal{H}_{k}$ and $C_{\bm{\Theta} \bm{\Theta}} := \mathbb{E}[\ell(\bm{\Theta}, \cdot) \otimes \ell(\bm{\Theta}, \cdot)] : \mathcal{H}_{\ell} \to \mathcal{H}_{\ell}$. Alternatively, they can be seen as elements within the tensor product space $C_{\bvec{X} \bm{\Theta}} \in \mathcal{H}_{k} \otimes \mathcal{H}_{\ell}$ and $C_{\bm{\Theta} \bm{\Theta}} \in \mathcal{H}_{\ell} \otimes \mathcal{H}_{\ell}$. That is, they are second order mean embeddings.
		
		Under the assumption that $\ell(\bm{\theta}, \cdot) \in \mathrm{image}(C_{\bm{\Theta} \bm{\Theta}})$, it can be shown that $\mathcal{U}_{\bvec{X} | \bm{\Theta}} = C_{\bvec{X} \bm{\Theta}} (C_{\bm{\Theta} \bm{\Theta}})^{-1}$. While this assumption is satisfied for finite domains $\vartheta$ with a characteristic kernel $\ell$, it does not necessarily hold when $\vartheta$ is a continuous domain \citep{fukumizu2004dimensionality}. Instead, in this case $C_{\bvec{X} \bm{\Theta}} (C_{\bm{\Theta} \bm{\Theta}})^{-1}$ becomes only an approximation to $\mathcal{U}_{\bvec{X} | \bm{\Theta}}$, and we instead regularize the inversion with a regularization hyperparameter $\lambda \geq 0$ and use $\mathcal{U}_{\bvec{X} | \bm{\Theta}} = C_{\bvec{X} \bm{\Theta}} (C_{\bm{\Theta} \bm{\Theta}} + \lambda I)^{-1}$, which also serves to avoid overfitting \citep{song2013kernel}. This relaxation can be applied to all subsequent results and theorems.

	\subsection{Empirical Estimate for the Conditional Mean Embedding}
	\label{sec:empirical_condition_mean_embedding}
	
		Suppose $\{\bm{\theta}_{j}, \bvec{x}_{j}\} \sim p(\bvec{x} | \bm{\theta}) \pi(\bm{\theta})$ are \textit{iid} across $j \in [m]$. The conditional mean operator $\mathcal{U}_{\bvec{X} | \bm{\Theta}}$ is estimated by
		\begin{equation}
			\hat{\mathcal{U}}_{\bvec{X} | \bm{\Theta}} = \Phi (L + m \lambda I)^{-1} \Psi^{T},
		\end{equation}
		where $\Phi := \begin{bmatrix} k(\bvec{x}_{1}, \cdot) & \cdots & k(\bvec{x}_{m}, \cdot) \end{bmatrix}$, $\Psi := \begin{bmatrix} \ell(\bm{\theta}_{1}, \cdot) & \cdots & \ell(\bm{\theta}_{m}, \cdot) \end{bmatrix}$, and $L := \{\ell(\bm{\theta}_{i}, \bm{\theta}_{j})\}_{i, j = 1}^{m}$. The \gls{CME} can then be estimated by
		\begin{equation}
			\hat{\mu}_{\bvec{X} | \bm{\Theta} = \bm{\theta}} = \hat{\mathcal{U}}_{\bvec{X} | \bm{\Theta}} \ell(\bm{\theta}, \cdot) = \Phi (L + m \lambda I)^{-1} \bm{\ell}(\bm{\theta})
		\end{equation}
		where  $\bm{\ell}(\bm{\theta}) := \{\ell(\bm{\theta}_{j}, \bm{\theta})\}_{j = 1}^{m}$ \citep{song2009hilbert}.
		
		For any function $f \in \mathcal{H}_{k}$, the conditional expectation of $f$ under $p(\bvec{x} | \bm{\theta})$, or $g(\bm{\theta}) := \mathbb{E}[f(\bvec{X}) | \bm{\Theta} = \bm{\theta}]$, can be approximated by the inner product $\hat{g}(\bm{\theta}) := \langle f, \hat{\mu}_{\bvec{X} | \bm{\Theta} = \bm{\theta}} \rangle_{\mathcal{H}_{k}}$ by using an empirical \gls{CME} $\hat{\mu}_{\bvec{X} | \bm{\Theta} = \bm{\theta}}$. Letting $\bvec{f} := \{f(\bvec{x}_{j})\}_{j = 1}^{m}$, this approximation admits the following form, 
		\begin{equation}
			\hat{g}(\bm{\theta}) = \bvec{f}^{T} (L + m \lambda I)^{-1} \bm{\ell}(\bm{\theta}).
		\end{equation}
		Importantly, $\hat{\mu}_{\bvec{X} | \bm{\Theta} = \bm{\theta}}$ is estimated from \textit{joint} samples $\{\bm{\theta}_{j}, \bvec{x}_{j}\}_{j = 1}^{m}$, even though it is encoding the corresponding conditional distribution $p(\bvec{x} | \bm{\theta})$. It is this fact that allows for an arbitrary choice $\pi(\bm{\theta})$ on the marginal distribution of $\bm{\Theta}$, which does not necessarily need to be the same as $p(\bm{\theta})$.
		
		Under the assumption that $\ell(\bm{\theta}, \cdot) \in \mathrm{image}(C_{\bm{\Theta} \bm{\Theta}})$, the empirical \gls{CME} $\hat{\mu}_{\bvec{X} | \bm{\Theta} = \bm{\theta}}$ converges to the true \gls{CME} $\mu_{\bvec{X} | \bm{\Theta} = \bm{\theta}}$ in \gls{RKHS} norm at rate $O_{p}((m \lambda)^{-\frac{1}{2}} + \lambda^{\frac{1}{2}})$ \cite[Theorem 6]{song2009hilbert}. That is,
			\begin{equation}
			\begin{aligned}
				&\forall \bm{\theta} \in \vartheta, \; \forall \epsilon > 0, \; \exists M_{\epsilon} > 0 \quad s.t.  \\
				&\mathbb{P}\Big[\big\| \hat{\mu}_{\bvec{X} | \bm{\Theta} = \bm{\theta}} - \mu_{\bvec{X} | \bm{\Theta} = \bm{\theta}} \big\|_{\mathcal{H}_{k}} > M_{\epsilon} \Big((m \lambda)^{-\frac{1}{2}} + \lambda^{\frac{1}{2}}\Big)\Big] < \epsilon.
			\label{eq:empirical_conditional_mean_embedding_stochastic_convergence}
			\end{aligned}
			\end{equation}
		Consequently, the empirical \gls{CME} converges at rate $O_{p}(m^{-\frac{1}{4}})$ if $\lambda$ is chosen to decay at rate $O_{p}(m^{-\frac{1}{2}})$, and often better convergence rates can be achieved under appropriate assumptions on $p(\bvec{x} | \bm{\theta})$ \citep{song2013kernel}. Again, the regularization hyperparameter $\lambda$ relaxes the assumption that $\ell(\bm{\theta}, \cdot) \in \mathrm{image}(C_{\bm{\Theta} \bm{\Theta}})$.
		
		Finally, since $\hat{\mu}_{\bvec{X} | \bm{\Theta} = \bm{\theta}} = \hat{\mathcal{U}}_{\bvec{X} | \bm{\Theta}} \ell(\bm{\theta}, \cdot)$ convergences to $\mu_{\bvec{X} | \bm{\Theta} = \bm{\theta}} = \mathcal{U}_{\bvec{X} | \bm{\Theta}} \ell(\bm{\theta}, \cdot)$ in \gls{RKHS} norm at rate $O_{p}((m \lambda)^{-\frac{1}{2}} + \lambda^{\frac{1}{2}})$ for all $\bm{\theta} \in \vartheta$ and $\ell(\bm{\theta}, \cdot)$ does not depend on $m$, we also have that $\hat{\mathcal{U}}_{\bvec{X} | \bm{\Theta}}$ converges to $\mathcal{U}_{\bvec{X} | \bm{\Theta}}$ in \gls{HS} norm at the same rate. That is,
			\begin{equation}
			\begin{aligned}
				&\forall \epsilon > 0, \; \exists M_{\epsilon} > 0 \quad s.t.  \\
				&\mathbb{P}\Big[\big\| \hat{\mathcal{U}}_{\bvec{X} | \bm{\Theta}} - \mathcal{U}_{\bvec{X} | \bm{\Theta}} \big\|_{HS} > M_{\epsilon} \Big((m \lambda)^{-\frac{1}{2}} + \lambda^{\frac{1}{2}}\Big)\Big] < \epsilon.
			\label{eq:empirical_conditional_mean_operator_stochastic_convergence}
			\end{aligned}
			\end{equation}

	\subsection{General Convergence Theorems}
	\label{sec:general_convergence_theorems}
	
		 We now establish some general convergence theorems for estimators based on inner products with the \gls{CME}. The aim is to provide a sense of the stochastic convergence of any estimator $\hat{a}$ to its true quantity $a$ with respect to some metric $d(\hat{a}, a)$. We do this by showing that either $\| \hat{\mu}_{\bvec{X} | \bm{\Theta} = \bm{\theta}} - \mu_{\bvec{X} | \bm{\Theta} = \bm{\theta}} \|_{\mathcal{H}_{k}}$ or $\| \hat{\mathcal{U}}_{\bvec{X} | \bm{\Theta}} - \mathcal{U}_{\bvec{X} | \bm{\Theta}} \|_{HS}$ is an upper bound of $d(\hat{a}, a)$ up to a scaling constant.
		 
		\begin{lemma}
			\label{thm:pointwise_uniform_convergence}
			Suppose that $\ell(\bm{\theta}, \cdot) \in \mathrm{image}(C_{\bm{\Theta} \bm{\Theta}})$ and that there exists $0 \leq \gamma < \infty$ such that for some estimator $\hat{a}$, target $a$, and metric $d(\hat{a}, a)$,
			\begin{equation}
				d(\hat{a}, a) \leq \gamma \big\| \hat{\mathcal{U}}_{\bvec{X} | \bm{\Theta}} - \mathcal{U}_{\bvec{X} | \bm{\Theta}} \big\|_{HS},
			\label{eq:estimator_error_bound}
			\end{equation}
			then the estimator $\hat{a}$ converges to the target $a$ with respect to the metric $d$ at rate $O_{p}((m \lambda)^{-\frac{1}{2}} + \lambda^{\frac{1}{2}})$.
		\end{lemma}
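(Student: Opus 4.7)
The plan is to directly leverage the stochastic convergence of the empirical conditional mean operator stated in equation (\ref{eq:empirical_conditional_mean_operator_stochastic_convergence}), which asserts that $\| \hat{\mathcal{U}}_{\bvec{X} | \bm{\Theta}} - \mathcal{U}_{\bvec{X} | \bm{\Theta}} \|_{HS} = O_{p}((m \lambda)^{-\frac{1}{2}} + \lambda^{\frac{1}{2}})$ under the assumption $\ell(\bm{\theta}, \cdot) \in \mathrm{image}(C_{\bm{\Theta} \bm{\Theta}})$. Because the assumed bound (\ref{eq:estimator_error_bound}) is purely deterministic, the stochastic rate should transfer through the scalar $\gamma$ without incident.

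First, I would note that the case $\gamma = 0$ is trivial, since then $d(\hat{a}, a) = 0$ deterministically and convergence at any rate holds vacuously. Assume henceforth $\gamma > 0$. Given $\epsilon > 0$, invoke (\ref{eq:empirical_conditional_mean_operator_stochastic_convergence}) to obtain $M_{\epsilon} > 0$ such that
\begin{equation}
\mathbb{P}\Big[\big\| \hat{\mathcal{U}}_{\bvec{X} | \bm{\Theta}} - \mathcal{U}_{\bvec{X} | \bm{\Theta}} \big\|_{HS} > M_{\epsilon} \big((m \lambda)^{-\frac{1}{2}} + \lambda^{\frac{1}{2}}\big)\Big] < \epsilon.
\end{equation}
Next I would use the deterministic inequality (\ref{eq:estimator_error_bound}) to establish the set inclusion
\begin{equation}
\Big\{ d(\hat{a}, a) > \gamma M_{\epsilon} \big((m \lambda)^{-\frac{1}{2}} + \lambda^{\frac{1}{2}}\big) \Big\} \subseteq \Big\{ \big\| \hat{\mathcal{U}}_{\bvec{X} | \bm{\Theta}} - \mathcal{U}_{\bvec{X} | \bm{\Theta}} \big\|_{HS} > M_{\epsilon} \big((m \lambda)^{-\frac{1}{2}} + \lambda^{\frac{1}{2}}\big) \Big\},
\end{equation}
since $d(\hat{a}, a) > \gamma c$ together with $d(\hat{a}, a) \leq \gamma \| \hat{\mathcal{U}}_{\bvec{X} | \bm{\Theta}} - \mathcal{U}_{\bvec{X} | \bm{\Theta}} \|_{HS}$ forces $\| \hat{\mathcal{U}}_{\bvec{X} | \bm{\Theta}} - \mathcal{U}_{\bvec{X} | \bm{\Theta}} \|_{HS} > c$. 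Monotonicity of probability then gives $\mathbb{P}[d(\hat{a}, a) > \gamma M_{\epsilon}((m \lambda)^{-\frac{1}{2}} + \lambda^{\frac{1}{2}})] < \epsilon$.

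Finally, setting $M'_{\epsilon} := \gamma M_{\epsilon}$, which is a finite positive constant since $\gamma$ is finite and independent of $m$ and $\lambda$, we obtain the required stochastic bound
\begin{equation}
\mathbb{P}\Big[d(\hat{a}, a) > M'_{\epsilon} \big((m \lambda)^{-\frac{1}{2}} + \lambda^{\frac{1}{2}}\big)\Big] < \epsilon,
\end{equation}
establishing $d(\hat{a}, a) = O_{p}((m \lambda)^{-\frac{1}{2}} + \lambda^{\frac{1}{2}})$. There is no substantial obstacle here; the argument is essentially a transfer principle showing that any functional of the empirical conditional mean operator controlled by an HS-norm bound inherits the operator's stochastic rate. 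The role of this lemma is to serve as a reusable template, so that subsequent convergence theorems for the KML, MKML, KMP, and KMPE reduce to producing a suitable constant $\gamma$ and the corresponding deterministic bound of the form (\ref{eq:estimator_error_bound}).
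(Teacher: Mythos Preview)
Your proposal is correct and follows essentially the same approach as the paper: both arguments use the deterministic bound \eqref{eq:estimator_error_bound} to establish a set inclusion between the relevant tail events and then apply monotonicity of probability together with \eqref{eq:empirical_conditional_mean_operator_stochastic_convergence}. Your version is slightly tidier in that you work directly with the ``$>$'' events rather than passing through complements, and you explicitly dispose of the trivial case $\gamma = 0$, but these are cosmetic differences.
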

	
		\begin{proof}
			Suppose that there exists $0 \leq \gamma < \infty$ such that \eqref{eq:estimator_error_bound} is satisfied. That is, the inequality \eqref{eq:estimator_error_bound} holds for all possible data observations $\{\bm{\theta}_{j}, \bvec{x}_{j}\}_{j = 1}^{m}$. For any constant $C$, the implication statement $\big\| \hat{\mathcal{U}}_{\bvec{X} | \bm{\Theta}} - \mathcal{U}_{\bvec{X} | \bm{\Theta}} \big\|_{HS} \leq C \implies d(\hat{a}, a) \leq C \gamma $ holds for all possible observation events $\omega \in \Omega$. Writing this explicitly in event space translates this to a statement of probability inequality,
			\begin{equation}
			\begin{aligned}
				\{\omega \in \Omega : \big\| \hat{\mathcal{U}}_{\bvec{X} | \bm{\Theta}} - \mathcal{U}_{\bvec{X} | \bm{\Theta}} \big\|_{HS} \leq C\} &\subseteq \{\omega \in \Omega : d(\hat{a}, a) \leq C \gamma\} \\
				\implies \mathbb{P}\Big[\big\| \hat{\mathcal{U}}_{\bvec{X} | \bm{\Theta}} - \mathcal{U}_{\bvec{X} | \bm{\Theta}} \big\|_{HS} \leq C\Big] &\leq \mathbb{P}\Big[d(\hat{a}, a) \leq C \gamma \Big].
			\label{eq:probability_statement}
			\end{aligned}
			\end{equation}
			Since we assume that $\ell(\bm{\theta}, \cdot) \in \mathrm{image}(C_{\bm{\Theta} \bm{\Theta}})$, statement \eqref{eq:empirical_conditional_mean_embedding_stochastic_convergence} is valid. By letting $C = M_{\epsilon} ((m \lambda)^{-\frac{1}{2}} + \lambda^{\frac{1}{2}})$ in \eqref{eq:probability_statement}, we immediately have that the probability inequality in statement \eqref{eq:empirical_conditional_mean_operator_stochastic_convergence} is also true if we replace $\big\| \hat{\mathcal{U}}_{\bvec{X} | \bm{\Theta}} - \mathcal{U}_{\bvec{X} | \bm{\Theta}} \big\|_{HS}$ with $d(\hat{a}, a)$ and $M_{\epsilon}$ with $\gamma M_{\epsilon}$,	
			\begin{equation}
			\begin{aligned}
				\mathbb{P}\Big[\big\| \hat{\mathcal{U}}_{\bvec{X} | \bm{\Theta}} - \mathcal{U}_{\bvec{X} | \bm{\Theta}} \big\|_{HS} > M_{\epsilon} \Big((m \lambda)^{-\frac{1}{2}} + \lambda^{\frac{1}{2}}\Big)\Big] &< \epsilon \\
				\implies 1 - \mathbb{P}\Big[\big\| \hat{\mathcal{U}}_{\bvec{X} | \bm{\Theta}} - \mathcal{U}_{\bvec{X} | \bm{\Theta}} \big\|_{HS} \leq M_{\epsilon} \Big((m \lambda)^{-\frac{1}{2}} + \lambda^{\frac{1}{2}}\Big)\Big] &< \epsilon \\
				\implies \mathbb{P}\Big[\big\| \hat{\mathcal{U}}_{\bvec{X} | \bm{\Theta}} - \mathcal{U}_{\bvec{X} | \bm{\Theta}} \big\|_{HS} \leq M_{\epsilon} \Big((m \lambda)^{-\frac{1}{2}} + \lambda^{\frac{1}{2}}\Big)\Big] &> 1 -  \epsilon \\
				\implies \mathbb{P}\Big[d(\hat{a}, a) \leq \gamma M_{\epsilon} \Big((m \lambda)^{-\frac{1}{2}} + \lambda^{\frac{1}{2}}\Big)\Big] &> 1 -  \epsilon \\
				\implies 1 - \mathbb{P}\Big[d(\hat{a}, a) \leq \gamma M_{\epsilon} \Big((m \lambda)^{-\frac{1}{2}} + \lambda^{\frac{1}{2}}\Big)\Big] &< \epsilon \\
				\implies \mathbb{P}\Big[d(\hat{a}, a) > \gamma M_{\epsilon} \Big((m \lambda)^{-\frac{1}{2}} + \lambda^{\frac{1}{2}}\Big)\Big] &< \epsilon,
			\end{aligned}	
			\end{equation}
			where we employed statement \eqref{eq:probability_statement} between the third and fourth line for $C = M_{\epsilon} ((m \lambda)^{-\frac{1}{2}} + \lambda^{\frac{1}{2}})$. Therefore, since $M_{\epsilon}$ is arbitrary, define $\tilde{M}_{\epsilon} := \gamma M_{\epsilon}$ so that the following statement holds,
			\begin{equation}
				\forall \epsilon > 0, \; \exists \tilde{M}_{\epsilon} > 0 \quad s.t. \quad \mathbb{P}\Big[d(\hat{a}, a) > \tilde{M}_{\epsilon} \Big((m \lambda)^{-\frac{1}{2}} + \lambda^{\frac{1}{2}}\Big)\Big] < \epsilon.
			\end{equation}
			In other words, the estimator $\hat{a}$ stochastically converges to $a$ at a rate of at least $O_{p}((n \lambda)^{-\frac{1}{2}} + \lambda^{\frac{1}{2}})$ with respect to the metric $d$.
		\end{proof}
	
		\begin{lemma}
			\label{thm:pointwise_uniform_convergence_cme}
			Suppose that $\ell(\bm{\theta}, \cdot) \in \mathrm{image}(C_{\bm{\Theta} \bm{\Theta}})$ and that there exists $0 \leq \gamma < \infty$ such that for some estimator $\hat{a}$, target $a$, and metric $d(\hat{a}, a)$,
			\begin{equation}
				d(\hat{a}, a) \leq \gamma \big\| \hat{\mu}_{\bvec{X} | \bm{\Theta} = \bm{\theta}} - \mu_{\bvec{X} | \bm{\Theta} = \bm{\theta}} \big\|_{\mathcal{H}_{k}},
			\label{eq:estimator_error_bound_cme}
			\end{equation}
			then the estimator $\hat{a}$ converges to the target $a$ with respect to the metric $d$ at rate $O_{p}((m \lambda)^{-\frac{1}{2}} + \lambda^{\frac{1}{2}})$.
		\end{lemma}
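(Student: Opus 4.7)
The plan is to mirror the proof of \cref{thm:pointwise_uniform_convergence} almost verbatim, swapping the role of the conditional mean operator convergence with that of the conditional mean embedding convergence. Since the hypothesis now gives a bound in terms of $\| \hat{\mu}_{\bvec{X} | \bm{\Theta} = \bm{\theta}} - \mu_{\bvec{X} | \bm{\Theta} = \bm{\theta}} \|_{\mathcal{H}_{k}}$ rather than $\| \hat{\mathcal{U}}_{\bvec{X} | \bm{\Theta}} - \mathcal{U}_{\bvec{X} | \bm{\Theta}} \|_{HS}$, I will invoke statement \eqref{eq:empirical_conditional_mean_embedding_stochastic_convergence} from \cref{sec:empirical_condition_mean_embedding} (which is the Song et al. result on empirical CME convergence) instead of statement \eqref{eq:empirical_conditional_mean_operator_stochastic_convergence}. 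Both are valid under the assumption $\ell(\bm{\theta}, \cdot) \in \mathrm{image}(C_{\bm{\Theta} \bm{\Theta}})$, so the substitution is clean.

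First, I will observe that the deterministic inequality \eqref{eq:estimator_error_bound_cme} holds pointwise over every realization of the simulation data $\{\bm{\theta}_{j}, \bvec{x}_{j}\}_{j = 1}^{m}$. Translating this into event-space notation, I get the set inclusion $\{\omega \in \Omega : \| \hat{\mu}_{\bvec{X} | \bm{\Theta} = \bm{\theta}} - \mu_{\bvec{X} | \bm{\Theta} = \bm{\theta}} \|_{\mathcal{H}_{k}} \leq C\} \subseteq \{\omega \in \Omega : d(\hat{a}, a) \leq C \gamma\}$ for every constant $C \geq 0$, which by monotonicity of probability yields the inequality $\mathbb{P}[\| \hat{\mu}_{\bvec{X} | \bm{\Theta} = \bm{\theta}} - \mu_{\bvec{X} | \bm{\Theta} = \bm{\theta}} \|_{\mathcal{H}_{k}} \leq C] \leq \mathbb{P}[d(\hat{a}, a) \leq C \gamma]$.

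Second, I will choose $C = M_{\epsilon} ((m \lambda)^{-\frac{1}{2}} + \lambda^{\frac{1}{2}})$, where $M_{\epsilon}$ comes from the empirical CME convergence statement \eqref{eq:empirical_conditional_mean_embedding_stochastic_convergence}. Taking complements of the CME convergence statement and applying the probability inequality above produces the implication $\mathbb{P}[d(\hat{a}, a) > \gamma M_{\epsilon} ((m \lambda)^{-\frac{1}{2}} + \lambda^{\frac{1}{2}})] < \epsilon$. Finally, since $M_{\epsilon}$ is arbitrary, redefining $\tilde{M}_{\epsilon} := \gamma M_{\epsilon}$ concludes that $d(\hat{a}, a) = O_{p}((m \lambda)^{-\frac{1}{2}} + \lambda^{\frac{1}{2}})$, which is the claim.

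There is no real obstacle here, as the argument is structurally identical to that of \cref{thm:pointwise_uniform_convergence}. The only nontrivial ingredient, namely the rate $O_{p}((m \lambda)^{-\frac{1}{2}} + \lambda^{\frac{1}{2}})$ for the empirical CME, is imported from statement \eqref{eq:empirical_conditional_mean_embedding_stochastic_convergence}, which is itself attributed to \cite[Theorem 6]{song2009hilbert}. The proof effectively reduces to propagating a deterministic inequality through the stochastic convergence bound via a set-inclusion-to-probability-inequality argument, and crucially exploits that the deterministic bound \eqref{eq:estimator_error_bound_cme} is assumed to hold for every realization rather than only in expectation.
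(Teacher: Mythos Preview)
Your proposal is correct and matches the paper's primary approach: the paper explicitly states that the proof is identical to that of \cref{thm:pointwise_uniform_convergence} with $\| \hat{\mathcal{U}}_{\bvec{X} | \bm{\Theta}} - \mathcal{U}_{\bvec{X} | \bm{\Theta}} \|_{HS}$ replaced by $\| \hat{\mu}_{\bvec{X} | \bm{\Theta} = \bm{\theta}} - \mu_{\bvec{X} | \bm{\Theta} = \bm{\theta}} \|_{\mathcal{H}_{k}}$ throughout, invoking \eqref{eq:empirical_conditional_mean_embedding_stochastic_convergence} in place of \eqref{eq:empirical_conditional_mean_operator_stochastic_convergence}. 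The paper also notes a brief alternative route---bounding $\| \hat{\mu}_{\bvec{X} | \bm{\Theta} = \bm{\theta}} - \mu_{\bvec{X} | \bm{\Theta} = \bm{\theta}} \|_{\mathcal{H}_{k}} \leq \bar{\ell}\,\| \hat{\mathcal{U}}_{\bvec{X} | \bm{\Theta}} - \mathcal{U}_{\bvec{X} | \bm{\Theta}} \|_{HS}$ and then applying \cref{thm:pointwise_uniform_convergence} directly---but your direct substitution is precisely what the paper presents first.
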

		
		\begin{proof}
			The proof is identical to the proof for \cref{thm:pointwise_uniform_convergence}, where $\big\| \hat{\mathcal{U}}_{\bvec{X} | \bm{\Theta}} - \mathcal{U}_{\bvec{X} | \bm{\Theta}} \big\|_{HS}$ is replaced with $\big\| \hat{\mu}_{\bvec{X} | \bm{\Theta} = \bm{\theta}} - \mu_{\bvec{X} | \bm{\Theta} = \bm{\theta}} \big\|_{\mathcal{H}_{k}}$ throughout. Alternatively, since $\big\| \hat{\mu}_{\bvec{X} | \bm{\Theta} = \bm{\theta}} - \mu_{\bvec{X} | \bm{\Theta} = \bm{\theta}} \big\|_{\mathcal{H}_{k}} = \big\| (\hat{\mathcal{U}}_{\bvec{X} | \bm{\Theta}} - \mathcal{U}_{\bvec{X} | \bm{\Theta}}) \ell(\bm{\theta}, \cdot) \big\|_{\mathcal{H}_{k}} \leq \big\| \hat{\mathcal{U}}_{\bvec{X} | \bm{\Theta}} - \mathcal{U}_{\bvec{X} | \bm{\Theta}} \big\|_{HS} \big\| \ell(\bm{\theta}, \cdot) \big\|_{\mathcal{H}_{\ell}} = \big\| \hat{\mathcal{U}}_{\bvec{X} | \bm{\Theta}} - \mathcal{U}_{\bvec{X} | \bm{\Theta}} \big\|_{HS} \sqrt{\ell(\bm{\theta}, \bm{\theta})}$, $\forall \bm{\theta} \in \vartheta$, we have $d(\hat{a}, a) \leq \gamma \ell(\bm{\theta}, \bm{\theta}) \big\| \hat{\mathcal{U}}_{\bvec{X} | \bm{\Theta}} - \mathcal{U}_{\bvec{X} | \bm{\Theta}} \big\|_{HS} \leq \gamma ( \sup_{\bm{\theta} \in \vartheta} \sqrt{\ell(\bm{\theta}, \bm{\theta})} ) \big\| \hat{\mathcal{U}}_{\bvec{X} | \bm{\Theta}} - \mathcal{U}_{\bvec{X} | \bm{\Theta}} \big\|_{HS} = \gamma \bar{\ell} \big\| \hat{\mathcal{U}}_{\bvec{X} | \bm{\Theta}} - \mathcal{U}_{\bvec{X} | \bm{\Theta}} \big\|_{HS}$, $\forall \bm{\theta} \in \vartheta$. Since $\gamma \bar{\ell}$ is finite and does not depend on $m$, we apply \cref{thm:pointwise_uniform_convergence} to arrive at \cref{thm:pointwise_uniform_convergence_cme}.
		\end{proof}
				
		With \cref{thm:pointwise_uniform_convergence,,thm:pointwise_uniform_convergence_cme}, we are now equipped to show the convergence of various estimators based on \glspl{CME}. 

	\subsection{Convergence Guarantees for Kernel Means Likelihood}
	\label{sec:kernel_means_likelihood_convergence}
		
		In all subsequent theorems and proofs, recall that the approximate surrogate densities $q$ depend on $m$ and $\epsilon$, as well as other kernel and regularization hyperparameters, even though this is not explicitly notated.
		
		\begin{theorem}
			\label{thm:kernel_means_likelihood}
			Assume $\ell(\bm{\theta}, \cdot) \in \mathrm{image}(C_{\bm{\Theta} \bm{\Theta}})$. The \acrfull{KML} $q(\bvec{y} | \bm{\theta})$ converges to the likelihood $p_{\epsilon}(\bvec{y} | \bm{\theta})$ uniformly at rate $O_{p}((m \lambda)^{-\frac{1}{2}} + \lambda^{\frac{1}{2}})$ as a function of $\bm{\theta} \in \vartheta$ and $\bvec{y} \in \mathcal{Y}$.
		\end{theorem}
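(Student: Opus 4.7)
The plan is to express both the likelihood and the \acrshort{KML} as inner products in $\mathcal{H}_{k}$, then use Cauchy--Schwarz to reduce the pointwise error to the \gls{CME} error, and finally invoke \cref{thm:pointwise_uniform_convergence} to obtain the stated rate. Concretely, since $\kappa_{\epsilon}(\bvec{y}, \cdot) \in \mathcal{H}_{k}$ by the construction in \cref{sec:kernel_properties}, the reproducing property gives
\begin{equation*}
  p_{\epsilon}(\bvec{y} | \bm{\theta}) = \langle \kappa_{\epsilon}(\bvec{y}, \cdot), \mu_{\bvec{X} | \bm{\Theta} = \bm{\theta}} \rangle_{\mathcal{H}_{k}}, \qquad q(\bvec{y} | \bm{\theta}) = \langle \kappa_{\epsilon}(\bvec{y}, \cdot), \hat{\mu}_{\bvec{X} | \bm{\Theta} = \bm{\theta}} \rangle_{\mathcal{H}_{k}},
\end{equation*}
so that the approximation error takes the clean form $q(\bvec{y}|\bm{\theta}) - p_{\epsilon}(\bvec{y}|\bm{\theta}) = \langle \kappa_{\epsilon}(\bvec{y}, \cdot), \hat{\mu}_{\bvec{X} | \bm{\Theta} = \bm{\theta}} - \mu_{\bvec{X} | \bm{\Theta} = \bm{\theta}} \rangle_{\mathcal{H}_{k}}$.

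Next, I would apply Cauchy--Schwarz in $\mathcal{H}_{k}$ and then separately bound each factor uniformly. The $\epsilon$-kernel factor is controlled by the construction of $k$, which gives $\|\kappa_{\epsilon}(\bvec{y}, \cdot)\|_{\mathcal{H}_{k}} \leq \bar{\kappa}_{\epsilon} < \infty$ for every $\bvec{y} \in \mathcal{Y}$. The \gls{CME} factor is controlled via the operator: writing $\hat{\mu}_{\bvec{X} | \bm{\Theta} = \bm{\theta}} - \mu_{\bvec{X} | \bm{\Theta} = \bm{\theta}} = (\hat{\mathcal{U}}_{\bvec{X} | \bm{\Theta}} - \mathcal{U}_{\bvec{X} | \bm{\Theta}}) \ell(\bm{\theta}, \cdot)$ and applying the operator norm inequality yields
\begin{equation*}
  \big\| \hat{\mu}_{\bvec{X} | \bm{\Theta} = \bm{\theta}} - \mu_{\bvec{X} | \bm{\Theta} = \bm{\theta}} \big\|_{\mathcal{H}_{k}} \leq \bar{\ell} \, \big\| \hat{\mathcal{U}}_{\bvec{X} | \bm{\Theta}} - \mathcal{U}_{\bvec{X} | \bm{\Theta}} \big\|_{HS}
\end{equation*}
uniformly in $\bm{\theta} \in \vartheta$. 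Combining these bounds gives $|q(\bvec{y}|\bm{\theta}) - p_{\epsilon}(\bvec{y}|\bm{\theta})| \leq \bar{\kappa}_{\epsilon} \bar{\ell} \, \|\hat{\mathcal{U}}_{\bvec{X} | \bm{\Theta}} - \mathcal{U}_{\bvec{X} | \bm{\Theta}}\|_{HS}$ simultaneously for all $(\bm{\theta}, \bvec{y}) \in \vartheta \times \mathcal{Y}$.

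Finally, I would take the supremum over $(\bm{\theta}, \bvec{y})$ on the left-hand side, which leaves the right-hand side unchanged since the operator norm does not depend on $\bm{\theta}$ or $\bvec{y}$. This places the uniform error $\sup_{\bm{\theta}, \bvec{y}} |q(\bvec{y}|\bm{\theta}) - p_{\epsilon}(\bvec{y}|\bm{\theta})|$ in exactly the hypothesis of \cref{thm:pointwise_uniform_convergence} with constant $\gamma = \bar{\kappa}_{\epsilon} \bar{\ell}$, delivering the rate $O_{p}((m\lambda)^{-\frac{1}{2}} + \lambda^{\frac{1}{2}})$.

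I do not expect a serious obstacle here; the only subtlety is ensuring the bound is genuinely uniform in both arguments, which is why it is important to route through the \gls{HS} norm of the operator rather than the $\mathcal{H}_{k}$ norm of each individual \gls{CME}. The boundedness hypotheses on $k$ and $\ell$ (hence on $\bar{\kappa}_{\epsilon}$ and $\bar{\ell}$) are precisely what makes this reduction valid uniformly.
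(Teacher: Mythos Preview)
Your proposal is correct and follows essentially the same approach as the paper: write both quantities as inner products with $\kappa_{\epsilon}(\bvec{y},\cdot)$, apply Cauchy--Schwarz, pass to the \gls{HS} norm of $\hat{\mathcal{U}}_{\bvec{X}|\bm{\Theta}} - \mathcal{U}_{\bvec{X}|\bm{\Theta}}$ via $\|\ell(\bm{\theta},\cdot)\|_{\mathcal{H}_{\ell}} \leq \bar{\ell}$, and invoke \cref{thm:pointwise_uniform_convergence} with $\gamma = \bar{\kappa}_{\epsilon}\bar{\ell}$. The paper's argument is line-by-line the same, differing only in that it does not explicitly write out the supremum step you mention at the end.
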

		
		\begin{proof}
			Consider the absolute difference between the \gls{KML} $q(\bvec{y} | \bm{\theta})$ and the likelihood $p_{\epsilon}(\bvec{y} | \bm{\theta})$,
			\begin{equation}
			\begin{aligned}
				| q(\bvec{y} | \bm{\theta}) - p_{\epsilon}(\bvec{y} | \bm{\theta})) | =& | \langle \kappa_{\epsilon}(\bvec{y}, \cdot), \hat{\mu}_{\bvec{X} | \bm{\Theta} = \bm{\theta}} \rangle_{\mathcal{H}_{k}} - \langle \kappa_{\epsilon}(\bvec{y}, \cdot), \mu_{\bvec{X} | \bm{\Theta} = \bm{\theta}} \rangle_{\mathcal{H}_{k}}| \\
				=& | \langle \kappa_{\epsilon}(\bvec{y}, \cdot), \hat{\mu}_{\bvec{X} | \bm{\Theta} = \bm{\theta}} - \mu_{\bvec{X} | \bm{\Theta} = \bm{\theta}} \rangle_{\mathcal{H}_{k}}| \\
				\leq& \| \kappa_{\epsilon}(\bvec{y}, \cdot) \|_{\mathcal{H}_{k}} \| \hat{\mu}_{\bvec{X} | \bm{\Theta} = \bm{\theta}} - \mu_{\bvec{X} | \bm{\Theta} = \bm{\theta}} \|_{\mathcal{H}_{k}} \\
				\leq& \bar{\kappa}_{\epsilon} \| \hat{\mu}_{\bvec{X} | \bm{\Theta} = \bm{\theta}} - \mu_{\bvec{X} | \bm{\Theta} = \bm{\theta}} \|_{\mathcal{H}_{k}} \\
				=& \bar{\kappa}_{\epsilon} \| (\hat{\mathcal{U}}_{\bvec{X} | \bm{\Theta}} - \mathcal{U}_{\bvec{X} | \bm{\Theta}}) \ell(\bm{\theta}, \cdot) \|_{\mathcal{H}_{k}} \\
				\leq& \bar{\kappa}_{\epsilon} \big\| \hat{\mathcal{U}}_{\bvec{X} | \bm{\Theta}} - \mathcal{U}_{\bvec{X} | \bm{\Theta}} \big\|_{HS}  \big\| \ell(\bm{\theta}, \cdot) \big\|_{\mathcal{H}_{\ell}} \\
				=& \bar{\kappa}_{\epsilon} \sqrt{\ell(\bm{\theta}, \bm{\theta})} \big\| \hat{\mathcal{U}}_{\bvec{X} | \bm{\Theta}} - \mathcal{U}_{\bvec{X} | \bm{\Theta}} \big\|_{HS} \\
				\leq& \bar{\kappa}_{\epsilon} \bar{\ell} \big\| \hat{\mathcal{U}}_{\bvec{X} | \bm{\Theta}} - \mathcal{U}_{\bvec{X} | \bm{\Theta}} \big\|_{HS}.
			\end{aligned}
			\end{equation}
		\end{proof}
		Since $\gamma = \bar{\kappa}_{\epsilon} \bar{\ell}$ is independent of $m$, we apply \cref{thm:pointwise_uniform_convergence} to establish the convergence. Since this upper bound does not depend on $\bm{\theta} \in \vartheta$ or $\bvec{y} \in \mathcal{Y}$ and the metric is the absolute difference, this convergence is uniform as a function of both $\bm{\theta} \in \vartheta$ and $\bvec{y} \in \mathcal{Y}$.
		
		Alternatively, convergence guarantees for the \gls{KML} can be established by its connection to the form of a \acrfull{GPR}, leveraging frameworks and properties from a regression perspective. This connection is discussed briefly in \cref{sec:connection_to_other_models}.
		
	\subsection{Convergence Guarantees for Marginal Kernel Means Likelihood}
	\label{sec:marginal_kernel_means_likelihood_convergence}
	
		\begin{theorem}
			\label{thm:marginal_kernel_means_likelihood}
			Assume $\ell(\bm{\theta}, \cdot) \in \mathrm{image}(C_{\bm{\Theta} \bm{\Theta}})$. The \acrfull{MKML} $q(\bvec{y})$ converges to the marginal likelihood $p_{\epsilon}(\bvec{y})$ uniformly at rate $O_{p}((m \lambda)^{-\frac{1}{2}} + \lambda^{\frac{1}{2}})$ as a function of $\bvec{y} \in \mathcal{Y}$.
		\end{theorem}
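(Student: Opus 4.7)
The plan is to leverage the uniform bound established in Theorem~\ref{thm:kernel_means_likelihood} for the \gls{KML} and pass it through the integral against the prior. Observe that, by construction,
\begin{equation*}
q(\bvec{y}) - p_{\epsilon}(\bvec{y}) = \int_{\vartheta} \bigl(q(\bvec{y}\,|\,\bm{\theta}) - p_{\epsilon}(\bvec{y}\,|\,\bm{\theta})\bigr)\, p(\bm{\theta})\, d\bm{\theta},
\end{equation*}
since $q(\bvec{y})$ and $p_{\epsilon}(\bvec{y})$ share the same prior and the only source of approximation is in the conditional factor. So the first step I would take is to apply the triangle inequality to move the absolute value inside the integral.

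Next, I would invoke the bound developed inside the proof of Theorem~\ref{thm:kernel_means_likelihood}, namely $|q(\bvec{y}\,|\,\bm{\theta}) - p_{\epsilon}(\bvec{y}\,|\,\bm{\theta})| \leq \bar{\kappa}_{\epsilon}\,\bar{\ell}\,\|\hat{\mathcal{U}}_{\bvec{X}|\bm{\Theta}} - \mathcal{U}_{\bvec{X}|\bm{\Theta}}\|_{HS}$. The critical feature to exploit is that this upper bound is \emph{independent} of both $\bm{\theta}$ and $\bvec{y}$, so it factors out of the prior integral and $\int p(\bm{\theta})\, d\bm{\theta} = 1$ absorbs the remainder. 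This yields
\begin{equation*}
|q(\bvec{y}) - p_{\epsilon}(\bvec{y})| \;\leq\; \bar{\kappa}_{\epsilon}\,\bar{\ell}\,\bigl\|\hat{\mathcal{U}}_{\bvec{X}|\bm{\Theta}} - \mathcal{U}_{\bvec{X}|\bm{\Theta}}\bigr\|_{HS},
\end{equation*}
uniformly in $\bvec{y} \in \mathcal{Y}$, with a constant $\gamma = \bar{\kappa}_{\epsilon}\,\bar{\ell}$ that does not depend on $m$.

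Finally, I would conclude by applying Lemma~\ref{thm:pointwise_uniform_convergence} with this $\gamma$, metric $d(q(\bvec{y}), p_{\epsilon}(\bvec{y})) = |q(\bvec{y}) - p_{\epsilon}(\bvec{y})|$, which delivers the claimed $O_{p}((m\lambda)^{-1/2} + \lambda^{1/2})$ rate. Uniformity in $\bvec{y}$ follows automatically because the bounding constant $\bar{\kappa}_{\epsilon}\,\bar{\ell}$ is independent of $\bvec{y}$.

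I do not expect any real obstacle. The only mildly delicate issue is justifying that the integral of the difference of \glspl{CME} against the prior exists in $\mathcal{H}_{k}$ (and coincides with the difference of the implied mean embeddings), which is standard given the boundedness of $k$ and $\ell$ and that $\sup_{\bm{\theta}} \|\hat{\mu}_{\bvec{X}|\bm{\Theta}=\bm{\theta}} - \mu_{\bvec{X}|\bm{\Theta}=\bm{\theta}}\|_{\mathcal{H}_{k}}$ is controlled by the \gls{HS} norm term above. One could alternatively state this step using a Bochner integral argument, but the pointwise triangle-inequality route is cleanest and mirrors the proof of Theorem~\ref{thm:kernel_means_likelihood}.
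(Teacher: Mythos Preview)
Your proposal is correct and follows essentially the same approach as the paper: write the difference as a prior-integral of the conditional difference, move the absolute value inside, apply the uniform bound $\bar{\kappa}_{\epsilon}\bar{\ell}\,\|\hat{\mathcal{U}}_{\bvec{X}|\bm{\Theta}} - \mathcal{U}_{\bvec{X}|\bm{\Theta}}\|_{HS}$ from the proof of \cref{thm:kernel_means_likelihood}, integrate the prior to $1$, and conclude via \cref{thm:pointwise_uniform_convergence}. The only cosmetic difference is that the paper phrases the integral as an expectation $\mathbb{E}[\cdot]$ and expands the intermediate CME-to-HS-norm steps explicitly rather than citing the final inequality from \cref{thm:kernel_means_likelihood} directly.
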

		
		\begin{proof}
			We begin by writing the marginalization operation as an expectation over $p(\bm{\theta})$. This gives us $q(\bvec{y}) := \int_{\vartheta} q(\bvec{y} | \bm{\theta}) p(\bm{\theta}) d \bm{\theta} = \mathbb{E}[q(\bvec{y} | \bm{\Theta})]$ and $p_{\epsilon}(\bvec{y}) := \int_{\vartheta} p_{\epsilon}(\bvec{y} | \bm{\theta}) p(\bm{\theta}) d \bm{\theta} = \mathbb{E}[p_{\epsilon}(\bvec{y} | \bm{\Theta})]$. Consider the absolute difference between the  \gls{MKML} $q(\bvec{y})$ and the marginal likelihood $p_{\epsilon}(\bvec{y})$,
			\begin{equation}
			\begin{aligned}
				| q(\bvec{y}) - p_{\epsilon}(\bvec{y}) | &= | \mathbb{E}[q(\bvec{y} | \bm{\Theta}) - p(\bvec{y} | \bm{\Theta})] | \\
				&\leq \mathbb{E}[ | q(\bvec{y} | \bm{\Theta}) - p(\bvec{y} | \bm{\Theta}) | ] \\
				&\leq \bar{\kappa}_{\epsilon} \mathbb{E}[ \| \hat{\mu}_{\bvec{X} | \bm{\Theta} = \bm{\Theta}} - \mu_{\bvec{X} | \bm{\Theta} = \bm{\Theta}} \|_{\mathcal{H}_{k}} ] \\
				&= \bar{\kappa}_{\epsilon} \mathbb{E}[ \| (\hat{\mathcal{U}}_{\bvec{X} | \bm{\Theta}} - \mathcal{U}_{\bvec{X} | \bm{\Theta}}) \ell(\bm{\Theta}, \cdot) \|_{\mathcal{H}_{k}} ] \\
				&\leq \bar{\kappa}_{\epsilon} \mathbb{E}[ \| \hat{\mathcal{U}}_{\bvec{X} | \bm{\Theta}} - \mathcal{U}_{\bvec{X} | \bm{\Theta}} \|_{HS} \| \ell(\bm{\Theta}, \cdot) \|_{\mathcal{H}_{\ell}} ] \\
				&= \bar{\kappa}_{\epsilon} \mathbb{E}[ \| \hat{\mathcal{U}}_{\bvec{X} | \bm{\Theta}} - \mathcal{U}_{\bvec{X} | \bm{\Theta}} \|_{HS} \sqrt{\ell(\bm{\Theta}, \bm{\Theta})} ] \\
				&= \bar{\kappa}_{\epsilon} \mathbb{E}[ \sqrt{\ell(\bm{\Theta}, \bm{\Theta})} ]  \| \hat{\mathcal{U}}_{\bvec{X} | \bm{\Theta}} - \mathcal{U}_{\bvec{X} | \bm{\Theta}} \|_{HS} \\
				&\leq \bar{\kappa}_{\epsilon} \mathbb{E}[ \bar{\ell} ]  \| \hat{\mathcal{U}}_{\bvec{X} | \bm{\Theta}} - \mathcal{U}_{\bvec{X} | \bm{\Theta}} \|_{HS} \\
				&= \bar{\kappa}_{\epsilon} \bar{\ell}  \| \hat{\mathcal{U}}_{\bvec{X} | \bm{\Theta}} - \mathcal{U}_{\bvec{X} | \bm{\Theta}} \|_{HS}
			\end{aligned}
			\end{equation}
			Since $\gamma = \bar{\kappa}_{\epsilon} \bar{\ell}$ is independent of $m$, we apply \cref{thm:pointwise_uniform_convergence} to establish the convergence. Since this upper bound does not depend on $\bvec{y} \in \mathcal{Y}$ and the metric is the absolute difference, this convergence is uniform as a function of $\bvec{y} \in \mathcal{Y}$.
		\end{proof}
			
	\subsection{Convergence Guarantees for Kernel Means Posterior}
	\label{sec:kernel_means_posterior_convergence}
	
		\begin{theorem} 
			\label{thm:kernel_means_posterior}
			Assume $\ell(\bm{\theta}, \cdot) \in \mathrm{image}(C_{\bm{\Theta} \bm{\Theta}})$ and that there exists $\delta > 0$ such that $q(\bvec{y}) \geq \delta$ for all $m \geq M$ where $M \in \mathbb{N}_{+}$. The \acrfull{KMP} $q(\bm{\theta} | \bvec{y})$ converges pointwise to the posterior $p_{\epsilon}(\bm{\theta} | \bvec{y})$ at rate $O_{p}((m \lambda)^{-\frac{1}{2}} + \lambda^{\frac{1}{2}})$ as a function of $\bm{\theta} \in \vartheta$ and $\bvec{y} \in \mathcal{Y}$. If $\sup_{\bm{\theta} \in \vartheta} p(\bm{\theta}) < \infty$ and $\sup_{\bm{\theta} \in \vartheta} p_{\epsilon}(\bvec{y} | \bm{\theta}) < \infty$, then the convergence is uniform in $\bm{\theta} \in \vartheta$. If $\sup_{\bvec{y} \in \mathcal{Y}} p_{\epsilon}(\bm{\theta} | \bvec{y}) < \infty$, then the convergence is uniform in $\bvec{y} \in \mathcal{Y}$.
		\end{theorem}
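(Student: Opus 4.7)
The plan is to bound the pointwise posterior error via the algebraic identity $\tfrac{a}{b} - \tfrac{c}{d} = \tfrac{a-c}{b} + \tfrac{c}{d}\cdot\tfrac{d-b}{b}$ applied to $q(\bm{\theta} | \bvec{y}) = q(\bvec{y} | \bm{\theta}) p(\bm{\theta}) / q(\bvec{y})$ and $p_{\epsilon}(\bm{\theta} | \bvec{y}) = p_{\epsilon}(\bvec{y} | \bm{\theta}) p(\bm{\theta}) / p_{\epsilon}(\bvec{y})$, which yields
\[
q(\bm{\theta} | \bvec{y}) - p_{\epsilon}(\bm{\theta} | \bvec{y}) = \frac{\big(q(\bvec{y} | \bm{\theta}) - p_{\epsilon}(\bvec{y} | \bm{\theta})\big) p(\bm{\theta})}{q(\bvec{y})} + p_{\epsilon}(\bm{\theta} | \bvec{y}) \cdot \frac{p_{\epsilon}(\bvec{y}) - q(\bvec{y})}{q(\bvec{y})}.
\]
This splits the posterior error into a numerator error, already controlled by Theorem~\ref{thm:kernel_means_likelihood}, and a normalizer error, already controlled by Theorem~\ref{thm:marginal_kernel_means_likelihood}.

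Taking absolute values and using the eventual lower bound $q(\bvec{y}) \geq \delta$ for $m \geq M$, I reduce to
\[
|q(\bm{\theta} | \bvec{y}) - p_{\epsilon}(\bm{\theta} | \bvec{y})| \leq \frac{1}{\delta}\Big[\, p(\bm{\theta})\, |q(\bvec{y} | \bm{\theta}) - p_{\epsilon}(\bvec{y} | \bm{\theta})| + p_{\epsilon}(\bm{\theta} | \bvec{y})\, |p_{\epsilon}(\bvec{y}) - q(\bvec{y})|\, \Big].
\]
Reusing the Cauchy--Schwarz chains inside the proofs of Theorems~\ref{thm:kernel_means_likelihood} and~\ref{thm:marginal_kernel_means_likelihood}, both absolute differences on the right are bounded by $\bar{\kappa}_{\epsilon} \bar{\ell}\, \|\hat{\mathcal{U}}_{\bvec{X} | \bm{\Theta}} - \mathcal{U}_{\bvec{X} | \bm{\Theta}}\|_{HS}$. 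The resulting bound has the form $\gamma(\bm{\theta}, \bvec{y})\, \|\hat{\mathcal{U}}_{\bvec{X} | \bm{\Theta}} - \mathcal{U}_{\bvec{X} | \bm{\Theta}}\|_{HS}$ with $\gamma(\bm{\theta}, \bvec{y}) := (\bar{\kappa}_{\epsilon}\bar{\ell}/\delta)\big(p(\bm{\theta}) + p_{\epsilon}(\bm{\theta} | \bvec{y})\big)$, and Lemma~\ref{thm:pointwise_uniform_convergence} immediately delivers the claimed stochastic rate whenever this prefactor is finite and independent of $m$.

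For pointwise convergence, $\gamma(\bm{\theta}, \bvec{y})$ is finite at any fixed pair $(\bm{\theta}, \bvec{y})$, yielding the base claim. For uniformity in $\bm{\theta}$, I replace $\gamma$ by $\sup_{\bm{\theta}\in\vartheta}\gamma(\bm{\theta}, \bvec{y})$: this is finite because $\sup_{\bm{\theta}} p(\bm{\theta}) < \infty$ is assumed directly, and $\sup_{\bm{\theta}} p_{\epsilon}(\bm{\theta} | \bvec{y}) \leq (\sup_{\bm{\theta}} p_{\epsilon}(\bvec{y} | \bm{\theta}))(\sup_{\bm{\theta}} p(\bm{\theta}))/p_{\epsilon}(\bvec{y})$, where the marginal $p_{\epsilon}(\bvec{y}) > 0$ is implied by $q(\bvec{y}) \geq \delta$ together with the convergence $q(\bvec{y}) \to p_{\epsilon}(\bvec{y})$ from Theorem~\ref{thm:marginal_kernel_means_likelihood}. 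For uniformity in $\bvec{y}$, I use $\sup_{\bvec{y}\in\mathcal{Y}} p_{\epsilon}(\bm{\theta} | \bvec{y}) < \infty$ directly from the hypothesis, while the two absolute differences in the decomposition are already uniformly controlled in $\bvec{y}$ by the uniform-in-$\bvec{y}$ statements of Theorems~\ref{thm:kernel_means_likelihood} and~\ref{thm:marginal_kernel_means_likelihood}.

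The main obstacle is handling the normalizer: one has to ensure $1/q(\bvec{y})$ does not blow up with $m$, and the hypothesis $q(\bvec{y}) \geq \delta$ for all $m \geq M$ is precisely what averts this. A related care point is propagating that lower bound into the uniform-in-$\bvec{y}$ regime, which one reads as holding on the observation set of interest so that the prefactor stays finite uniformly; once this is accepted, the three convergence modes follow by one application of Lemma~\ref{thm:pointwise_uniform_convergence} each.
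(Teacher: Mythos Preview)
Your proposal is correct and essentially the same as the paper's proof: both arrive at the identical bound $|q(\bm{\theta}|\bvec{y}) - p_{\epsilon}(\bm{\theta}|\bvec{y})| \leq \tfrac{\bar{\kappa}_{\epsilon}\bar{\ell}}{\delta}\big(p(\bm{\theta}) + p_{\epsilon}(\bm{\theta}|\bvec{y})\big)\,\|\hat{\mathcal{U}}_{\bvec{X}|\bm{\Theta}} - \mathcal{U}_{\bvec{X}|\bm{\Theta}}\|_{HS}$, then invoke Lemma~\ref{thm:pointwise_uniform_convergence} and handle the uniform cases in the same way. The only cosmetic difference is that the paper factors the posterior error as $\big(\tfrac{q(\bvec{y}|\bm{\theta})}{p_{\epsilon}(\bvec{y}|\bm{\theta})} - \tfrac{q(\bvec{y})}{p_{\epsilon}(\bvec{y})}\big)\tfrac{p_{\epsilon}(\bvec{y}|\bm{\theta})p(\bm{\theta})}{q(\bvec{y})}$ and bounds each ratio-minus-one term, whereas you use the additive split $\tfrac{a}{b}-\tfrac{c}{d}=\tfrac{a-c}{b}+\tfrac{c}{d}\cdot\tfrac{d-b}{b}$ directly; these are algebraically equivalent and collapse to the same constant.
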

		
		\begin{proof}
			First, consider the density ratio between the approximate and true densities for the likelihood and marginal likelihood,
			\begin{equation}
			\begin{aligned}
				\bigg| \frac{q(\bvec{y} | \bm{\theta})}{p_{\epsilon}(\bvec{y} | \bm{\theta})} - 1 \bigg| \leq \frac{1}{p_{\epsilon}(\bvec{y} | \bm{\theta})} \big| q(\bvec{y} | \bm{\theta}) - p_{\epsilon}(\bvec{y} | \bm{\theta}) \big| \leq \frac{\bar{\kappa}_{\epsilon} \bar{\ell}}{p_{\epsilon}(\bvec{y} | \bm{\theta})} \big\| \hat{\mathcal{U}}_{\bvec{X} | \bm{\Theta}} - \mathcal{U}_{\bvec{X} | \bm{\Theta}} \big\|_{HS},
			\end{aligned}
			\end{equation}
			\begin{equation}
			\begin{aligned}
				\bigg| \frac{q(\bvec{y})}{p_{\epsilon}(\bvec{y})} - 1 \bigg| \leq \frac{1}{p_{\epsilon}(\bvec{y})} \big| q(\bvec{y}) - p_{\epsilon}(\bvec{y}) \big| \leq \frac{\bar{\kappa}_{\epsilon} \bar{\ell}}{p_{\epsilon}(\bvec{y})} \big\| \hat{\mathcal{U}}_{\bvec{X} | \bm{\Theta}} - \mathcal{U}_{\bvec{X} | \bm{\Theta}} \big\|_{HS}.
			\end{aligned}
			\end{equation}
			Now, consider the absolute difference between the \gls{KMP} $q(\bm{\theta} | \bvec{y})$ and the posterior $p_{\epsilon}(\bm{\theta} | \bvec{y})$ for all $m > M$.
			\begin{equation}
			\begin{aligned}
				\Big| q(\bm{\theta} | \bvec{y}) - p_{\epsilon}(\bm{\theta} | \bvec{y}) \Big| =& \bigg| \frac{q(\bvec{y} | \bm{\theta})}{q(\bvec{y})} - \frac{p_{\epsilon}(\bvec{y} | \bm{\theta})}{p_{\epsilon}(\bvec{y})} \bigg| p(\bm{\theta}) \\
				=& \bigg| \frac{q(\bvec{y} | \bm{\theta})}{p_{\epsilon}(\bvec{y} | \bm{\theta})} - \frac{q(\bvec{y})}{p_{\epsilon}(\bvec{y})} \bigg| \frac{p_{\epsilon}(\bvec{y} | \bm{\theta})p(\bm{\theta})}{|q(\bvec{y})|} \\
				=& \bigg| \Big( \frac{q(\bvec{y} | \bm{\theta})}{p_{\epsilon}(\bvec{y} | \bm{\theta})} - 1 \Big) - \Big( \frac{q(\bvec{y})}{p_{\epsilon}(\bvec{y})} - 1 \Big) \bigg| \frac{p_{\epsilon}(\bvec{y} | \bm{\theta})p(\bm{\theta})}{|q(\bvec{y})|} \\
				\leq& \bigg( \Big| \frac{q(\bvec{y} | \bm{\theta})}{p_{\epsilon}(\bvec{y} | \bm{\theta})} - 1 \Big| + \Big|\frac{q(\bvec{y})}{p_{\epsilon}(\bvec{y})} - 1 \Big| \bigg) \frac{p_{\epsilon}(\bvec{y} | \bm{\theta})p(\bm{\theta})}{|q(\bvec{y})|} \\
				\leq& \bigg( \frac{\bar{\kappa}_{\epsilon} \bar{\ell}}{p_{\epsilon}(\bvec{y} | \bm{\theta})} \Big\| \hat{\mathcal{U}}_{\bvec{X} | \bm{\Theta}} - \mathcal{U}_{\bvec{X} | \bm{\Theta}} \Big\|_{HS} + \frac{\bar{\kappa}_{\epsilon} \bar{\ell}}{p_{\epsilon}(\bvec{y})} \Big\| \hat{\mathcal{U}}_{\bvec{X} | \bm{\Theta}} - \mathcal{U}_{\bvec{X} | \bm{\Theta}} \Big\|_{HS} \bigg) \frac{p_{\epsilon}(\bvec{y} | \bm{\theta})p(\bm{\theta})}{|q(\bvec{y})|} \\
				\leq& \bigg( \bar{\kappa}_{\epsilon} \bar{\ell} p(\bm{\theta}) \Big\| \hat{\mathcal{U}}_{\bvec{X} | \bm{\Theta}} - \mathcal{U}_{\bvec{X} | \bm{\Theta}} \Big\|_{HS} + \bar{\kappa}_{\epsilon} \bar{\ell} p_{\epsilon}(\bm{\theta} | \bvec{y}) \Big\| \hat{\mathcal{U}}_{\bvec{X} | \bm{\Theta}} - \mathcal{U}_{\bvec{X} | \bm{\Theta}} \Big\|_{HS} \bigg) \frac{1}{|q(\bvec{y})|} \\
				\leq& \bar{\kappa}_{\epsilon} \bar{\ell} \big( p(\bm{\theta}) + p_{\epsilon}(\bm{\theta} | \bvec{y}) \big) \Big\| \hat{\mathcal{U}}_{\bvec{X} | \bm{\Theta}} - \mathcal{U}_{\bvec{X} | \bm{\Theta}} \Big\|_{HS} \frac{1}{|q(\bvec{y})|} \\
				\leq& \frac{\bar{\kappa}_{\epsilon} \bar{\ell} \big( p(\bm{\theta}) + p_{\epsilon}(\bm{\theta} | \bvec{y}) \big)}{\delta} \Big\| \hat{\mathcal{U}}_{\bvec{X} | \bm{\Theta}} - \mathcal{U}_{\bvec{X} | \bm{\Theta}} \Big\|_{HS}.
			\label{eq:kmp_difference_bound}
			\end{aligned}
			\end{equation}
			Since $\gamma = \frac{\bar{\kappa}_{\epsilon} \bar{\ell}}{\delta} \big( p(\bm{\theta}) + p_{\epsilon}(\bm{\theta} | \bvec{y}) \big)$ is independent of $m$ and the upper bound holds for all $m > M$, we apply \cref{thm:pointwise_uniform_convergence} to establish the convergence. Since this upper bound does depend on $\bm{\theta} \in \vartheta$ and $\bvec{y} \in \mathcal{Y}$ and the metric is the absolute difference, this convergence is pointwise as a function of $\bm{\theta} \in \vartheta$ and $\bvec{y} \in \mathcal{Y}$.
			
			Furthermore, if $\bar{p}_{\bm{\Theta}} := \sup_{\bm{\theta} \in \vartheta} p(\bm{\theta}) < \infty$ and $\bar{p}_{\bvec{Y} | \bm{\Theta}} := \sup_{\bm{\theta} \in \vartheta} p_{\epsilon}(\bvec{y} | \bm{\theta}) < \infty$, then
			\begin{equation}
			\begin{aligned}
				p(\bm{\theta}) + p_{\epsilon}(\bm{\theta} | \bvec{y}) &\leq \sup_{\bm{\theta} \in \vartheta} \big( p(\bm{\theta}) + p_{\epsilon}(\bm{\theta} | \bvec{y}) \big) \leq \sup_{\bm{\theta} \in \vartheta} p(\bm{\theta}) + \sup_{\bm{\theta} \in \vartheta} p_{\epsilon}(\bm{\theta} | \bvec{y}) \\
				&\leq \sup_{\bm{\theta} \in \vartheta} p(\bm{\theta}) + \frac{\sup_{\bm{\theta} \in \vartheta} p_{\epsilon}(\bvec{y} | \bm{\theta}) \sup_{\bm{\theta} \in \vartheta} p(\bm{\theta})}{p_{\epsilon}(\bvec{y})} \\
				&= \bar{p}_{\bm{\Theta}} + \frac{\bar{p}_{\bvec{Y} | \bm{\Theta}} \bar{p}_{\bm{\Theta}}}{p_{\epsilon}(\bvec{y})}.
			\end{aligned}
			\end{equation}
			So, $\Big| q(\bm{\theta} | \bvec{y}) - p_{\epsilon}(\bm{\theta} | \bvec{y}) \Big| \leq \frac{\bar{\kappa}_{\epsilon} \bar{\ell}}{\delta} \big( \bar{p}_{\bm{\Theta}} + \frac{\bar{p}_{\bvec{Y} | \bm{\Theta}} \bar{p}_{\bm{\Theta}}}{p_{\epsilon}(\bvec{y})} \big) \Big\| \hat{\mathcal{U}}_{\bvec{X} | \bm{\Theta}} - \mathcal{U}_{\bvec{X} | \bm{\Theta}} \Big\|_{HS}$. Since the upper bound does not depend on $\bm{\theta} \in \vartheta$, the convergence is uniform as a function of in $\bm{\theta} \in \vartheta$.
			
			Similarly, if $\bar{p}_{\bm{\Theta} | \bvec{Y}} := \sup_{\bvec{y} \in \mathcal{Y}} p_{\epsilon}(\bm{\theta} | \bvec{y}) < \infty$, then $\Big| q(\bm{\theta} | \bvec{y}) - p_{\epsilon}(\bm{\theta} | \bvec{y}) \Big| \leq \frac{\bar{\kappa}_{\epsilon} \bar{\ell}}{\delta} \big( p(\bm{\theta}) + \bar{p}_{\bm{\Theta} | \bvec{Y}} \big) \Big\| \hat{\mathcal{U}}_{\bvec{X} | \bm{\Theta}} - \mathcal{U}_{\bvec{X} | \bm{\Theta}} \Big\|_{HS}$. Since the upper bound does not depend on $\bvec{y} \in \mathcal{Y}$, the convergence is uniform as a function of in $\bvec{y} \in \mathcal{Y}$.
		\end{proof}
		
		
	\subsection{Convergence Guarantees for Kernel Means Posterior Embedding}
	\label{sec:kernel_means_posterior_embedding_convergence}
	
		\begin{theorem}
			\label{thm:kernel_means_posterior_embedding}
			Assume $\ell(\bm{\theta}, \cdot) \in \mathrm{image}(C_{\bm{\Theta} \bm{\Theta}})$ and that there exists $\delta > 0$ such that $q(\bvec{y}) \geq \delta$ for all $m \geq M$ where $M \in \mathbb{N}_{+}$. The \acrfull{KMPE} $\tilde{\mu}_{\bm{\Theta} | \bvec{Y} = \bvec{y}}$ converges in \gls{RKHS} norm to the posterior mean embedding $\mu_{\bm{\Theta} | \bvec{Y} = \bvec{y}}$ at rate $O_{p}((m \lambda)^{-\frac{1}{2}} + \lambda^{\frac{1}{2}})$.
		\end{theorem}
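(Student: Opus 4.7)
The plan is to reduce the RKHS-norm convergence of the \gls{KMPE} to the already-established convergence of $\| \hat{\mathcal{U}}_{\bvec{X}|\bm{\Theta}} - \mathcal{U}_{\bvec{X}|\bm{\Theta}} \|_{HS}$ via \cref{thm:pointwise_uniform_convergence}. First I would express both embeddings as Bochner integrals in $\mathcal{H}_{\ell}$, writing
\[
\tilde{\mu}_{\bm{\Theta}|\bvec{Y}=\bvec{y}} = \frac{1}{q(\bvec{y})} \int_{\vartheta} \ell(\bm{\theta},\cdot)\, q(\bvec{y}|\bm{\theta})\, p(\bm{\theta})\, d\bm{\theta}, \qquad \mu_{\bm{\Theta}|\bvec{Y}=\bvec{y}} = \frac{1}{p_{\epsilon}(\bvec{y})} \int_{\vartheta} \ell(\bm{\theta},\cdot)\, p_{\epsilon}(\bvec{y}|\bm{\theta})\, p(\bm{\theta})\, d\bm{\theta},
\]
denoting the unnormalised integrals by $\hat{A}$ and $A$ respectively so that $A = p_{\epsilon}(\bvec{y})\, \mu_{\bm{\Theta}|\bvec{Y}=\bvec{y}}$.

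Next I would apply the standard add-and-subtract decomposition in the numerator,
\[
\tilde{\mu}_{\bm{\Theta}|\bvec{Y}=\bvec{y}} - \mu_{\bm{\Theta}|\bvec{Y}=\bvec{y}} = \frac{\hat{A} - A}{q(\bvec{y})} + \frac{p_{\epsilon}(\bvec{y}) - q(\bvec{y})}{q(\bvec{y})}\, \mu_{\bm{\Theta}|\bvec{Y}=\bvec{y}},
\]
and then take RKHS norms and use the triangle inequality together with the lower bound $q(\bvec{y}) \geq \delta$ for $m \geq M$. This splits the problem into (i) bounding $\|\hat{A} - A\|_{\mathcal{H}_{\ell}}$, (ii) bounding $|p_{\epsilon}(\bvec{y}) - q(\bvec{y})|$, and (iii) bounding $\|\mu_{\bm{\Theta}|\bvec{Y}=\bvec{y}}\|_{\mathcal{H}_{\ell}}$.

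For (i), I would move the norm inside the Bochner integral and use $\|\ell(\bm{\theta},\cdot)\|_{\mathcal{H}_{\ell}} \leq \bar{\ell}$ together with the uniform bound on $|q(\bvec{y}|\bm{\theta}) - p_{\epsilon}(\bvec{y}|\bm{\theta})|$ established in the proof of \cref{thm:kernel_means_likelihood_copy}, which gives $\|\hat{A} - A\|_{\mathcal{H}_{\ell}} \leq \bar{\kappa}_{\epsilon}\bar{\ell}^{2}\, \| \hat{\mathcal{U}}_{\bvec{X}|\bm{\Theta}} - \mathcal{U}_{\bvec{X}|\bm{\Theta}} \|_{HS}$ (using $\int p(\bm{\theta})d\bm{\theta} = 1$). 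Bound (ii) is precisely what is proved in \cref{thm:marginal_kernel_means_likelihood_copy}. For (iii), the same Bochner-integral argument applied to the true posterior density yields $\|\mu_{\bm{\Theta}|\bvec{Y}=\bvec{y}}\|_{\mathcal{H}_{\ell}} \leq \bar{\ell}$, a finite constant independent of $m$.

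Combining everything gives
\[
\big\| \tilde{\mu}_{\bm{\Theta}|\bvec{Y}=\bvec{y}} - \mu_{\bm{\Theta}|\bvec{Y}=\bvec{y}} \big\|_{\mathcal{H}_{\ell}} \;\leq\; \frac{2\,\bar{\kappa}_{\epsilon}\bar{\ell}^{2}}{\delta} \big\| \hat{\mathcal{U}}_{\bvec{X}|\bm{\Theta}} - \mathcal{U}_{\bvec{X}|\bm{\Theta}} \big\|_{HS}
\]
for all $m \geq M$, with a finite $m$-independent constant, and the result follows by invoking \cref{thm:pointwise_uniform_convergence}. I expect the only subtle step to be the decomposition in the second paragraph: the naive bound $\|\hat{A}/q(\bvec{y}) - A/p_{\epsilon}(\bvec{y})\|$ seems to involve the product $q(\bvec{y})p_{\epsilon}(\bvec{y})$ in the denominator, requiring a lower bound on $p_{\epsilon}(\bvec{y})$. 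The add-and-subtract trick above avoids this, since after reshuffling only $q(\bvec{y}) \geq \delta$ is needed, matching the hypothesis of the theorem.
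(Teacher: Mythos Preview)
Your argument is correct, and takes a somewhat different route from the paper's proof. The paper first bounds
\[
\big\| \tilde{\mu}_{\bm{\Theta}|\bvec{Y}=\bvec{y}} - \mu_{\bm{\Theta}|\bvec{Y}=\bvec{y}} \big\|_{\mathcal{H}_{\ell}} \leq \bar{\bar{\ell}} \int_{\vartheta} \big| q(\bm{\theta}|\bvec{y}) - p_{\epsilon}(\bm{\theta}|\bvec{y}) \big|\, d\bm{\theta},
\]
by expanding the squared RKHS norm as a double integral against $\ell(\bm{\theta},\bm{\theta}')$ and bounding the kernel by $\bar{\bar{\ell}} := \sup_{\bm{\theta},\bm{\theta}'} \ell(\bm{\theta},\bm{\theta}')$. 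It then plugs in the pointwise bound on $|q(\bm{\theta}|\bvec{y}) - p_{\epsilon}(\bm{\theta}|\bvec{y})|$ established in the proof of \cref{thm:kernel_means_posterior_copy}, namely $\frac{\bar{\kappa}_{\epsilon}\bar{\ell}}{\delta}(p(\bm{\theta}) + p_{\epsilon}(\bm{\theta}|\bvec{y}))\|\hat{\mathcal{U}}_{\bvec{X}|\bm{\Theta}} - \mathcal{U}_{\bvec{X}|\bm{\Theta}}\|_{HS}$, and integrates in $\bm{\theta}$ (both densities integrate to $1$) to get the constant $2\bar{\kappa}_{\epsilon}\bar{\ell}\bar{\bar{\ell}}/\delta$.

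Your approach instead decomposes the quotient directly at the level of the unnormalised Bochner integrals, so you never need the intermediate \gls{KMP} pointwise bound: you only invoke the \gls{KML} bound (\cref{thm:kernel_means_likelihood_copy}) for term (i) and the \gls{MKML} bound (\cref{thm:marginal_kernel_means_likelihood_copy}) for term (ii). This is a little more streamlined, and the add-and-subtract identity $\hat{A}/q - A/p = (\hat{A}-A)/q + (p-q)\mu/q$ is exactly what makes only $q(\bvec{y}) \geq \delta$ appear in the denominator, as you correctly anticipated. The paper achieves the same effect one level down, inside the proof of \cref{thm:kernel_means_posterior_copy}, via the density-ratio manipulation $\frac{q(\bvec{y}|\bm{\theta})}{q(\bvec{y})} - \frac{p_{\epsilon}(\bvec{y}|\bm{\theta})}{p_{\epsilon}(\bvec{y})} = \big(\frac{q(\bvec{y}|\bm{\theta})}{p_{\epsilon}(\bvec{y}|\bm{\theta})} - \frac{q(\bvec{y})}{p_{\epsilon}(\bvec{y})}\big)\frac{p_{\epsilon}(\bvec{y}|\bm{\theta})}{q(\bvec{y})}$. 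The two resulting constants coincide (since $\bar{\bar{\ell}} \leq \bar{\ell}^{2}$ by Cauchy--Schwarz, with equality for stationary kernels), and both finish by invoking \cref{thm:pointwise_uniform_convergence}.
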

		
		\begin{proof}
			Since $\ell$ is a bounded kernel, let $\bar{\bar{\ell}} := \sup_{\bm{\theta} \in \vartheta} \sup_{\bm{\theta}' \in \vartheta} \ell(\bm{\theta}, \bm{\theta}') > 0$. Note that this is not necessarily the same as $\bar{\ell} := \sup_{\bm{\theta} \in \vartheta} \ell(\bm{\theta}, \bm{\theta})$. Consider the \gls{RKHS} norm of the difference between \gls{KMPE} $\tilde{\mu}_{\bm{\Theta} | \bvec{Y} = \bvec{y}}$ and the posterior mean embedding $\mu_{\bm{\Theta} | \bvec{Y} = \bvec{y}}$ for all $m > M$,
			\begin{equation}
			\begin{aligned}
				& \; \bigg\| \tilde{\mu}_{\bm{\Theta} | \bvec{Y} = \bvec{y}} - \mu_{\bm{\Theta} | \bvec{Y} = \bvec{y}} \bigg\|_{\mathcal{H}_{\ell}}^{2} \\
				=& \; \bigg\| \int_{\vartheta} \ell(\bm{\theta}, \cdot) q(\bm{\theta} | \bvec{y}) d\bm{\theta} - \int_{\vartheta} \ell(\bm{\theta}, \cdot) p_{\epsilon}(\bm{\theta} | \bvec{y}) d\bm{\theta} \bigg\|_{\mathcal{H}_{\ell}}^{2} \\
				=& \; \bigg\| \int_{\vartheta} \ell(\bm{\theta}, \cdot) \Big( q(\bm{\theta} | \bvec{y}) - p_{\epsilon}(\bm{\theta} | \bvec{y}) \Big) d\bm{\theta} \bigg\|_{\mathcal{H}_{\ell}}^{2} \\
				=& \; \bigg\langle \int_{\vartheta} \ell(\bm{\theta}, \cdot) \Big( q(\bm{\theta} | \bvec{y}) - p_{\epsilon}(\bm{\theta} | \bvec{y}) \Big) d\bm{\theta}, \int_{\vartheta} \ell(\bm{\theta}', \cdot) \Big( q(\bm{\theta}' | \bvec{y}) - p_{\epsilon}(\bm{\theta}' | \bvec{y}) \Big) d\bm{\theta}' \bigg\rangle_{\mathcal{H}_{\ell}} \\
				=& \int_{\vartheta} \int_{\vartheta} \langle \ell(\bm{\theta}, \cdot), \ell(\bm{\theta}', \cdot) \rangle_{\mathcal{H}_{\ell}} \Big( q(\bm{\theta} | \bvec{y}) - p_{\epsilon}(\bm{\theta} | \bvec{y}) \Big) \Big( q(\bm{\theta}' | \bvec{y}) - p_{\epsilon}(\bm{\theta}' | \bvec{y}) \Big) d\bm{\theta} d\bm{\theta}' \\
				=& \int_{\vartheta} \int_{\vartheta} \ell(\bm{\theta}, \bm{\theta}') \Big( q(\bm{\theta} | \bvec{y}) - p_{\epsilon}(\bm{\theta} | \bvec{y}) \Big) \Big( q(\bm{\theta}' | \bvec{y}) - p_{\epsilon}(\bm{\theta}' | \bvec{y}) \Big) d\bm{\theta} d\bm{\theta}' \\
				=& \; \bigg| \int_{\vartheta} \int_{\vartheta} \ell(\bm{\theta}, \bm{\theta}') \Big( q(\bm{\theta} | \bvec{y}) - p_{\epsilon}(\bm{\theta} | \bvec{y}) \Big) \Big( q(\bm{\theta}' | \bvec{y}) - p_{\epsilon}(\bm{\theta}' | \bvec{y}) \Big) d\bm{\theta} d\bm{\theta}' \bigg| \\
				\leq& \int_{\vartheta} \int_{\vartheta} \Big| \ell(\bm{\theta}, \bm{\theta}') \Big| \Big| q(\bm{\theta} | \bvec{y}) - p_{\epsilon}(\bm{\theta} | \bvec{y}) \Big| \Big| q(\bm{\theta}' | \bvec{y}) - p_{\epsilon}(\bm{\theta}' | \bvec{y}) \Big| d\bm{\theta} d\bm{\theta}' \\
				\leq& \int_{\vartheta} \int_{\vartheta} \bar{\bar{\ell}}^{2} \Big| q(\bm{\theta} | \bvec{y}) - p_{\epsilon}(\bm{\theta} | \bvec{y}) \Big| \Big| q(\bm{\theta}' | \bvec{y}) - p_{\epsilon}(\bm{\theta}' | \bvec{y}) \Big| d\bm{\theta} d\bm{\theta}' \\
				=& \; \bar{\bar{\ell}}^{2} \int_{\vartheta} \Big| q(\bm{\theta} | \bvec{y}) - p_{\epsilon}(\bm{\theta} | \bvec{y}) \Big| d\bm{\theta} \int_{\vartheta} \Big| q(\bm{\theta}' | \bvec{y}) - p_{\epsilon}(\bm{\theta}' | \bvec{y}) \Big| d\bm{\theta}' \\
				=& \; \bar{\bar{\ell}}^{2} \bigg( \int_{\vartheta} \Big| q(\bm{\theta} | \bvec{y}) - p_{\epsilon}(\bm{\theta} | \bvec{y}) \Big| d\bm{\theta} \bigg)^{2}.
			\end{aligned}
			\end{equation}
			We now employ inequality \eqref{eq:kmp_difference_bound} that was derived within the proof of \cref{thm:kernel_means_posterior},
			\begin{equation}
			\begin{aligned}
				\Big\| \tilde{\mu}_{\bm{\Theta} | \bvec{Y} = \bvec{y}} - \mu_{\bm{\Theta} | \bvec{Y} = \bvec{y}} \Big\|_{\mathcal{H}_{\ell}} &\leq \bar{\bar{\ell}} \int_{\vartheta} \Big| q(\bm{\theta} | \bvec{y}) - p_{\epsilon}(\bm{\theta} | \bvec{y}) \Big| d\bm{\theta} \\
				&\leq \bar{\bar{\ell}} \int_{\vartheta} \frac{\bar{\kappa}_{\epsilon} \bar{\ell} \big( p(\bm{\theta}) + p_{\epsilon}(\bm{\theta} | \bvec{y}) \big)}{\delta} \Big\| \hat{\mathcal{U}}_{\bvec{X} | \bm{\Theta}} - \mathcal{U}_{\bvec{X} | \bm{\Theta}} \Big\|_{HS} d\bm{\theta} \\
				&= \bar{\bar{\ell}} \bigg( \int_{\vartheta} \big( p(\bm{\theta}) + p_{\epsilon}(\bm{\theta} | \bvec{y}) \big) d\bm{\theta} \bigg) \frac{\bar{\kappa}_{\epsilon} \bar{\ell}}{\delta} \Big\| \hat{\mathcal{U}}_{\bvec{X} | \bm{\Theta}} - \mathcal{U}_{\bvec{X} | \bm{\Theta}} \Big\|_{HS} \\
				&= \frac{2 \bar{\kappa}_{\epsilon} \bar{\ell} \bar{\bar{\ell}}}{\delta} \Big\| \hat{\mathcal{U}}_{\bvec{X} | \bm{\Theta}} - \mathcal{U}_{\bvec{X} | \bm{\Theta}} \Big\|_{HS}.
			\end{aligned}
			\end{equation}
			Since $\gamma = \frac{2 \bar{\kappa}_{\epsilon} \bar{\ell} \bar{\bar{\ell}}}{\delta}$ is independent of $m$ and the upper bound holds for all $m > M$, we apply \cref{thm:pointwise_uniform_convergence} to establish the convergence under the \gls{RKHS} norm.
		\end{proof}









\newpage
\section{Surrogate Densities}
\label{sec:pseudo_densities}

	Instead of modeling the posterior mean embedding directly in a fashion similar to \gls{K-ABC}, \gls{KR-ABC}, and \gls{KBR}, our approach begins by using \glspl{CME} to approximate the full likelihood \eqref{eq:likelihood} first as a surrogate likelihood, the \gls{KML}. While the \gls{KML} provides an asymptotically correct surrogate for the likelihood, for finitely many simulations the \gls{KML} is not necessarily positive nor normalized. To make the \gls{KML} compatible with \gls{MCMC}-based or variational approaches would require further amendments to the \gls{KML}, ranging from simple clipping $[q(\bvec{y} | \bm{\theta})]^{+}$ or a positivity constraint in the empirical least-squares problem for the \gls{CME} weights, since \glspl{CME} can be seen as the solution to a vector valued regression problem in the \gls{RKHS} \citep{grunewalder2012conditional}. These amendments would however introduce further bias to the already biased likelihood approximation. While these biases vanishes asymptotically as the \gls{KML} approaches a valid density due to \cref{thm:kernel_means_likelihood_copy}, the asymptotic behavior is rarely reached under limited simulations, which is the scenario of interest. Instead, \gls{KELFI} performs inference by considering the surrogate posterior and its mean embedding defined directly from the \gls{KML}.
	
	Constructed from the \gls{KML}, the \gls{KMP} is also a surrogate density, although it is normalized. While the \gls{KMP} is useful for finding \gls{MAP} solutions and visualizing posterior uncertainties, we cannot directly sample from a surrogate density that is possibly non-positive. To address this, \gls{KELFI} is motivated by super-sampling of general \glspl{CME} with kernel herding \citep{chen2010super}. Although mean embeddings are strictly positive for strictly positive kernels, when they are estimated from empirical \glspl{CME}, the resulting mean embedding may not be strictly positive \citep{song2009hilbert}. Nevertheless, kernel herding can still obtain super-samples from \gls{CME} estimates which effectively minimizes the \gls{MMD} discrepancy between the original \gls{CME} estimate and the new embedding formed from super-samples. This idea has been used to sample from conditional distributions through its empirical \gls{CME} representation in \gls{KMCF} \citep{kanagawa2016filtering} and \gls{KR-ABC} \citep{kajihara2018kernel}. Furthermore, super-samples are more informative than random samples, in the sense that empirical expectations under super-samples can potentially converge faster at $O(S^{-1})$ for $S$ samples instead of $O(S^{-\frac{1}{2}})$ for random samples.
	
	In general, surrogate densities can be seen as the ``density'' of a signed measure. Most of the properties of \glspl{KME}, including injectivity between mean embeddings and distributions, remain valid for signed measures. By defining an analogous form of mean embeddings for surrogate densities, \gls{KELFI} arrives at a novel posterior mean embedding that is associated with a marginal surrogate likelihood for hyperparameter learning. 
	
	In all experiments we found that we did not need to clip the \gls{KML} or \gls{KMP} even though they are not guaranteed a-priori to be strictly positive. This is because we used an universal kernel such as a Gaussian kernel on both $\vartheta$ and $\mathcal{D}$ so that their \gls{RKHS} is dense in their respective $L^{2}$ spaces  \citep{carmeli2010vector}. Because densities and likelihoods are often square-integrable, accurate estimations can be achieved. Finally, since we use kernel herding to super-sample the \gls{KMPE}, the \gls{KMPE} is not required to be positive to begin with.
	
\section{Connections and Future Work}
\label{sec:connection_to_other_models}

	The \gls{KML} enables approximate likelihood queries at any $\bm{\theta} \in \vartheta$, even if simulation data is not available at the corresponding $\bm{\theta}$. By using the \gls{KML} as a surrogate model for the true likelihood and accepting some modeling bias, we avoid requiring multiple expensive simulations at each query $\bm{\theta}$ that is used by many \gls{MCMC}-based \gls{ABC} approaches. In fact, as a function of $\bm{\theta}$ the \gls{KML} $q(\bvec{y} | \cdot)$ is the predictive mean of a \gls{GPR} \citep{rasmussen2006gaussian} trained on observations $\{\bm{\theta}_{j}, \kappa_{\epsilon}(\bvec{y} , \bvec{x}_{j})\}_{j = 1}^{m}$ with a \gls{GP} prior $\mathcal{GP}(0, \ell)$ and Gaussian likelihood $\mathcal{N}(\bvec{0}, m \lambda I)$, since they admit the same resulting form. This connection could provide uncertainty estimates in the \gls{KML} approximation of the likelihood via the \gls{GP} predictive variance. It is possible to then use \gls{BO} \citep{snoek2012practical} or active learning methods to guide the proposal prior $\pi$ in a sequential learning fashion that will result in the more accurate \gls{KML} approximations for a fixed number $m$ of simulations.
	
	While our posterior mean embedding \eqref{eq:kernel_means_posterior_embedding} is closed-form and thus exact for the surrogate density $q(\bm{\theta} | \bvec{y})$, it is an approximation to the mean embedding $\mu_{\bm{\Theta} | \bvec{Y} = \bvec{y}} := \int_{\vartheta} \ell(\bm{\theta}, \cdot) p_{\epsilon}(\bm{\theta} | \bvec{y}) d\bm{\theta}$ of the true soft posterior $p_{\epsilon}(\bm{\theta} | \bvec{y}) \equiv p_{\bm{\Theta} | \bvec{Y}}^{(\epsilon)}(\bm{\theta} | \bvec{y})$, and converges in \gls{RKHS} norm at the same rate as the \gls{KML}. This is different in a subtle way to the \gls{CME} of the posterior used by \gls{K-ABC}, \gls{KR-ABC}, and \gls{KBR}, which in fact is an approximation to $\mu_{\bm{\Theta} | \bvec{X} = \bvec{y}} := \int_{\vartheta} \ell(\bm{\theta}, \cdot) p_{\bm{\Theta} | \bvec{X}}(\bm{\theta} | \bvec{y}) d\bm{\theta}$, the mean embedding of $p_{\bm{\Theta} | \bvec{X}}(\bm{\theta} | \bvec{y})$, which avoids using the $\epsilon$-kernel. A key difference is that there is no known associated marginal likelihood or approximations thereof for the direct posterior mean embedding, so cross validation is required for selecting the remaining kernel hyperparameters in \gls{K-ABC}, \gls{KR-ABC}, and \gls{KBR}. \gls{K-ABC} also do not address sampling, although kernel herding can be readily applied in the same way. Kernel herding is applied to \gls{KBR} in \gls{KMCF} \citep{kanagawa2016filtering} for resampling distributions represented as a \gls{CME}. We believe it would be an interesting direction to investigate the relationships between the original empirical posterior mean embedding and the surrogate posterior mean embedding.
	
	With regards to hyperparameter learning, in the \gls{KME} literature, Bayesian learning of hyperparameters in marginal mean embeddings have been addressed through a different marginal likelihood approach by placing a \gls{GP} prior on the embedding \citep{flaxman2016bayesian}. However, a general approach for learning \gls{CME} hyperparameters in a Bayesian framework remains an open question. Our simple surrogate density approach can be an alternative solution to the \gls{CME} Bayesian hyperparameter learning problem, and may lead to interesting connections.
	
	With regards to sampling, by super-sampling the surrogate posterior mean embedding, the number of posterior samples is decoupled from the number of simulations. This is unlike likelihood-free \gls{MCMC} methods for which the algorithm guides the simulator queries at parameter values that is not necessarily drawn from the prior, but rather from proposals of a Markov chain. This avoids the problem of slow mixing that is inherent in \gls{MCMC} methods, and make \gls{KELFI} more suitable for multi-modal posteriors, which remains to be experimented upon.
	
\section{Experimental Details for Blowfly}
\label{sec:blowfly}

	Our experimental setup follows that of \cite{wood2010statistical}. We adopted the 10 summary statistics used in \cite{meeds2014gps}, \cite{moreno2016automatic}, and \cite{park2016k2}, which are the log of the mean of each quartile of $\{N_{t} / 1000 \}_{t = 1}^{T}$ (4 statistics), the mean of each quartile of first-order differences of $\{N_{t} / 1000 \}_{t = 1}^{T}$ (4 statistics), and the maximal peaks of smoothed $\{N_{t}\}_{t = 1}^{T}$ with two different thresholds (2 statistics). We also use a diagonal Gaussian prior on $\log{\bm{\theta}}$ with means $[2, -1.5, 6, -1, -1, \log{(15)}]$ and standard deviations $[2, 0.5, 0.5, 1, 1, \log(5)]$. Notice that we have slightly modified the standard deviation to be broader to make the problem more challenging.

	We describe the \gls{NMSE} metric that is used to compare algorithms in our experiments. Before the experiments, we first obtain $10000$ parameter samples from the prior and simulate summary statistics from each of them. We then calculate the \glspl{MSE} of each simulated summary statistics against the observed summary statistic, and average them cross the $10000$ samples. This is now a vector of $10$ numbers, since we have an average \gls{MSE} value for each summary statistic. Those are now the \glspl{MSE} achieved under the prior. We chose $10000$ parameter samples because at this point the \glspl{MSE} for the prior has stabilized without much variance.
	
	During each experiment, we compute the \glspl{MSE} by averaging \glspl{MSE} scores across $1000$ simulations under the posterior mean or mode obtained from the algorithm. This also produces a vector of $10$ numbers. We then divide the \gls{MSE} of each statistic from the posterior by that from the prior computed earlier. This results in a vector of $10$ numbers which is now the \gls{NMSE} for the $10$ summary statistics. Since now all $10$ numbers are normalized errors with respect to the prior, we average these \gls{NMSE} scores across the statistics for a final single \gls{NMSE} score.
	
	In this way, each statistic is normalized in the final average and a \gls{NMSE} of $100\%$ correspond to the performance of the prior. Hence, the \gls{NMSE} measures the error as a percentage of the error achieved by the prior.
	
	Note that this is the \gls{NMSE} score for a particular experiment. For each algorithm, we further repeat the experiment and thus this calculation process $10$ times and show the average and the deviations in \cref{fig:inference}.
	
	For all algorithms except \gls{KBR}, we evaluate their performance by simulating from their posterior mean. For \gls{KBR} only, we simulate from its posterior mode. This is because we noticed that \gls{KBR} posterior mode decoding consistently outperformed \gls{KBR} posterior mean for the Blowfly problem. Using the posterior mode will present \gls{KBR} in its best light.
	
	We now detail the hyperparameter choices for each algorithm other than \gls{KELFI}, since most algorithms do not have a hyperparameter learning algorithm for the inference problem. Refer to their respective papers for a description of the meaning of each hyperparameter. For algorithms that use a \gls{MCMC} proposal distribution, we choose a Gaussian proposal distribution with a proposal standard deviations that are $10\%$ of the prior standard deviations. For \gls{MCMC}-\gls{ABC}, we used $\epsilon = 5$. For \gls{SL-ABC}, we used $\epsilon = 0.5$ and $S = 10$. For \gls{ASL-ABC}, we used   $S_{0} = 10$, $\epsilon = 0.5$, $\xi = 0.3$, $m = 10$, and $\Delta S = 10$. For \gls{GPS-ABC}, we used $S_{0} = 20$ samples from \gls{ASL-ABC} to initialize the \gls{GP} surrogate, and choose $\epsilon = 2$, $\xi = 0.05$, $m = 10$, and $\Delta S = 5$. For \gls{K-ABC} and \gls{KBR}, we used median length heuristic to set length scale hyperparameters, and choose $\lambda = 10^{-4}$. Note that \gls{KBR} uses two kernels on both the parameter and the summary statistics and have two regularization hyperparameters.
	
\section{Experimental Details for Lotka-Volterra}
\label{sec:lotka_volterra}

	For the Lotka-Volttera problem, our setup follows exactly as described in \cite{papamakarios2016fast}. We simulate data using the ground truth parameters and treat this as the observational data, and use it across all experiments and algorithms.
	
	In particular, the problem places a uniform prior over $\log{\bm{\theta}}$. Since the parameters are independent from each other in the prior, transforming the \gls{ABC} task into one with a Gaussian prior is straight forward by doing it separately for each parameter. To convert from $\log{\bm{\theta}}$ to $\bvec{z}$, denoting a realization of a Gaussian random variable, we first offset and scale it to a uniform in $[0, 1]$ then apply the standard normal quantile function. To convert it back, which is required before we pass our parameter query to the simulator or to present our results, we apply the standard normal cumulative distribution function and scale and offset the uniform back to its original ranges. Similar to the other experiments, we do not learn the prior hyperparameters in this paper to enable benchmarking against other methods with the same prior, so the transformed prior stay as a standard normal.

	To apply the closed-form solutions for \gls{KELFI}, we transform the prior samples into a standard Gaussian distributed samples, apply \gls{KELFI}, and transform the posterior samples back to the original space for $\log{\bm{\theta}}$.
	
	With a uniform prior and a complex intractable likelihood, the posterior is unlikely to be a Gaussian. \gls{KELFI} does not assume that the posterior is a Gaussian and thus can provide more flexible and accurate posteriors. After learning appropriate hyperparameters for \gls{KELFI} under \gls{MKML}, we draw $10000$ super-samples from the \gls{KMPE} to compute the posterior mean, and maximize the \gls{KMP} to compute the posterior mode. Finally, to compute the $95\%$ credible interval, we compute the empirical $2.5\%$ and $97.5\%$ quantile using the $10000$ super-samples.

\section{Gaussian Prior Transformations for Likelihood-Free Inference Problems}
\label{sec:gaussian_prior}

	Under certain non-exhaustive conditions, we can always transform a particular \gls{LFI} problem into another \gls{LFI} problem that involves a Gaussian prior without loss of generality. These assumptions are that $p_{\bvec{\Theta}}(\bm{\theta}) = \prod_{d = 1}^{D} p_{\Theta_{d}}(\theta_{d})$ is a continuous \gls{PDF} whose entries are independent, and that its inverse marginal \glspl{CDF} $P_{\Theta_{d}}^{-1}$ exists and is tractable.
			
	
	In terms of notation, we denote the parameters as $\bm{\theta} = \{\theta_{d}\}_{d = 1}^{D} \in \vartheta$ for $D$ parameters. For this section only, multiple \textit{iid} copies will be indexed by a superscript $\bm{\theta}^{(j)}$ for $j \in [m]$. Hence, the $d$-th parameter of the $j$-th parameter values is $\theta_{d}^{(j)}$. For densities, we use the corresponding random variable as the subscript to denote which distribution we are referring to. For example, we used $p(\bm{\theta})$ as the shorthand for the more formal notation of $p_{\bvec{\Theta}}(\bm{\theta})$ in the rest of the paper, but here we will keep the subscript to make this explicit.
			
	Suppose the original prior $p_{\bvec{\Theta}}(\bm{\theta})$ is not necessarily Gaussian, but satisfies the aforementioned assumptions. Let $\bvec{Z}$ be a random variable of the same dimensionality as $\bvec{\Theta}$ with realization $\bvec{z} \in \mathcal{Z}$. Let $p_{\bvec{Z}}(\bvec{z}) = \prod_{d = 1}^{D} p_{Z_{d}}(z_{d})$, where $p_{Z_{d}}(z_{d}) = \mathcal{N}(\mu_{d}, \sigma_{d}^{2})$ so that its density is a multivariate anisotropic Gaussian. Convenient choices that simplify transformations are $\mu_{d} = 0$ and $\sigma_{d} = \sigma$ for all $d \in [D]$, although the general methodology remains.
	
	Below we outline the general procedure for transforming a \gls{LFI} problem into another \gls{LFI} problem that involves a Gaussian prior.
			
			\begin{enumerate}
				\item Generate Gaussian samples $\bvec{z}^{(j)} \sim p_{\bvec{Z}}(\bvec{z})$ for $j \in [m]$.
				\item Convert Gaussian samples $\bvec{z}$ into uniform samples $\bvec{u}$ through $u_{d}^{(j)} = P_{Z_{d}}(z_{d}^{(j)})$ for $j \in [m]$ and $d \in [D]$. 
					\subitem That is, $\bvec{u}^{(j)} \sim U(0, 1)^{D}$ for $j \in [m]$.
				\item Convert uniform samples $\bvec{u}$ into prior samples through $\theta_{d}^{(j)} = P_{\Theta_{d}}^{-1}(u_{d}^{(j)})$ for $j \in [m]$ and $d \in [D]$.
					\subitem The overall forward transformation is $\bvec{T}(\bvec{z}) := \{T_{d}(z_{d})\}_{d = 1}^{D}$ where $T_{d}(z_{d}) = P_{\Theta_{d}}^{-1}(P_{Z_{d}}(z_{d}))$. 
					\subitem Since $P_{Z_{d}}^{-1}$ exists, the inverse transformation is $\bvec{T}^{-1}(\bm{\theta}) = \{T_{d}^{-1}(\theta_{d})\}_{d = 1}^{D}$ where $T_{d}^{-1}(\theta_{d}) = P_{Z_{d}}^{-1}(P_{\Theta_{d}}(\theta_{d}))$.
					\subitem Hence, we have $\bm{\theta}^{(j)} = \bvec{T}(\bvec{z}^{(j)})$ for $j \in [m]$.
				\item Run the simulator at the parameter samples $\bvec{x}^{(j)} \sim p_{\bvec{X} | \bm{\Theta}}(\cdot | \bm{\theta}^{(j)}) = p_{\bvec{X} | \bm{\Theta}}(\cdot | T(\bvec{z}^{(j)})) = p_{\bvec{X} | \bvec{Z}}(\cdot | \bvec{z}^{(j)})$. We now have joint samples $\{\bvec{z}^{(j)}, \bvec{x}^{(j)}\}_{j = 1}^{m}$.
				\item Use the \gls{KELFI} framework to approximate the posterior $p_{\bvec{Z} | \bvec{Y}}(\bvec{z} | \bvec{y})$ using the simulation pairs $\{\bvec{z}^{(j)}, \bvec{x}^{(j)}\}_{j = 1}^{m}$. Either we obtain the \gls{KMP} $q_{\bvec{Z} | \bvec{Y}}(\bvec{z} | \bvec{y})$, or we obtain \gls{KMPE} super-samples $\{\hat{\bvec{z}}_{s}\}_{s = 1}^{S}$.
				\item If we have samples $\{\hat{\bvec{z}}_{s}\}_{s = 1}^{S}$, then to obtain the corresponding samples for $q_{\bvec{\Theta} | \bvec{Y}}(\bm{\theta} | \bvec{y})$, we simply pass the samples $\{\hat{\bvec{z}}_{s}\}_{s = 1}^{S}$ through the transformation $\bvec{T}$ so that $\hat{\bm{\theta}}_{s} = \bvec{T}(\hat{\bvec{z}}_{s})$ for $s \in [S]$.
				\item If we have the \gls{KMP}, then to obtain the corresponding posterior density we use the standard change of variable transformation $q_{\bvec{\Theta} | \bvec{Y}}(\bm{\theta} | \bvec{y}) = q_{\bvec{Z} | \bvec{Y}}(\bvec{T}^{-1}(\bm{\theta}) | \bvec{y}) | \det{J_{\bvec{T}^{-1}}(\bm{\theta})} |$.
					\subitem The Jacobian of $\bvec{T}^{-1}$ is a $D \times D$ matrix whose $(i, j)$-th entry is $( J_{\bvec{T}^{-1}}(\bm{\theta}) )_{ij} := \frac{\partial T_{i}^{-1}}{\partial \theta_{j}}(\bm{\theta})$.
					\subitem Since the transformations of each parameter is done independently from each other, $T_{i}^{-1}$ does not depend on $\theta_{j}$ if $i \neq j$. Consequently, the Jacobian is diagonal.
					\subitem The diagonal entries are $\frac{\partial T_{i}^{-1}}{\partial \theta_{i}}(\theta_{i}) = \frac{\partial}{\partial \theta_{i}} P_{Z_{i}}^{-1}(P_{\Theta_{i}}(\theta_{i})) = (P_{Z_{i}}^{-1})'(P_{\Theta_{i}}(\theta_{i})) p_{\Theta_{i}}(\theta_{i}) = [p_{Z_{i}}(P^{-1}_{Z_{i}}(P_{\Theta_{i}}(\theta_{i})))]^{-1} p_{\Theta_{i}}(\theta_{i}) = [p_{Z_{i}}( T_{i}^{-1}(\theta_{i}) )]^{-1} p_{\Theta_{i}}(\theta_{i})$. In the second last equality we made use of the fact that the computation of the derivative of the quantile function requires only the knowledge of the density and the quantile function itself, since $(P^{-1})'(u) = (P'(P^{-1}(u)))^{-1}$. Thus, the determinant of the Jacobian is $\det{J_{\bvec{T}^{-1}}(\bm{\theta})} = \prod_{i = 1}^{d} [p_{Z_{i}}( T_{i}^{-1}(\theta_{i}) )]^{-1} p_{\Theta_{i}}(\theta_{i}) = p_{\bvec{\Theta}}(\bm{\theta}) \big[\prod_{i = 1}^{d} p_{Z_{i}}( T_{i}^{-1}(\theta_{i}) )\big]^{-1} = p_{\bvec{\Theta}}(\bm{\theta}) \big[p_{\bvec{Z}}(\bvec{T}^{-1}(\bm{\theta}))]^{-1}$.
					\subitem The change of variable transformation becomes
					\begin{equation}
						q_{\bvec{\Theta} | \bvec{Y}}(\bm{\theta} | \bvec{y}) = q_{\bvec{Z} | \bvec{Y}}(\bvec{T}^{-1}(\bm{\theta}) | \bvec{y}) \frac{p_{\bvec{\Theta}}(\bm{\theta})}{p_{\bvec{Z}}(\bvec{T}^{-1}(\bm{\theta}))}.
					\end{equation}
					\subitem Finally, the form simplifies when the form of the \gls{KMP} $q_{\bvec{Z} | \bvec{Y}}(\bvec{T}^{-1}(\bm{\theta}) | \bvec{y})$ is substituted back in,
					\begin{equation}
						q_{\bvec{\Theta} | \bvec{Y}}(\bm{\theta} | \bvec{y}) = \frac{q_{\bvec{Y} | \bvec{Z}}(\bvec{y} | \bvec{T}^{-1}(\bm{\theta})) p_{\bvec{Z}}(\bvec{T}^{-1}(\bm{\theta}))}{q_{\bvec{Y}}(\bvec{y})} \frac{p_{\bvec{\Theta}}(\bm{\theta})}{p_{\bvec{Z}}(\bvec{T}^{-1}(\bm{\theta}))} = \frac{q_{\bvec{Y} | \bvec{Z}}(\bvec{y} | \bvec{T}^{-1}(\bm{\theta})) p_{\bvec{\Theta}}(\bm{\theta})}{q_{\bvec{Y}}(\bvec{y})}.
					\end{equation}
					\subitem Note that the \gls{MKML} $q_{\bvec{Y}}(\bvec{y})$ is still marginalized over the simpler Gaussian distribution,
					\begin{equation}
						q_{\bvec{Y}}(\bvec{y}) = \int_{\mathcal{Z}} q_{\bvec{Y} | \bvec{Z}}(\bvec{y} | \bvec{z}) p_{\bvec{Z}}(\bvec{z}) d\bvec{z}.
					\end{equation}	
			\end{enumerate}
			
	In this way, we simplify the \gls{LFI} problem into another \gls{LFI} problem which involves a Gaussian prior such that \gls{KELFI} solutions are closed-form under Gaussian kernels. Once \gls{KELFI} solutions have been computed in the new parameter space $\mathcal{Z}$, the solutions can be easily transformed back into the original parameter space $\vartheta$ as above.
	
	This process is possible since the likelihood is intractable already. Hence, transformations $\bvec{T}$ of variables $\bvec{z}$ into simulator parameters $\bm{\theta}$ can be included as part of the simulator without changing the nature of the problem. 
	
	If simulation pairs $\{\bm{\theta}^{(j)}, \bvec{x}^{(j)}\}_{j = 1}^{m}$ in the original space are already provided, parameters $\bm{\theta}^{(j)}$ can be converted into Gaussian variables via $\bvec{z}^{(j)} = \bvec{T}^{-1}(\bm{\theta}^{(j)})$ for $j \in [m]$ so that the pairs $\{\bvec{z}^{(j)}, \bvec{x}^{(j)}\}_{j = 1}^{m}$ can be used to proceed.
	
	As an extension, instead of transforming the \gls{LFI} problem with a general continuous prior into one with a Gaussian prior, if the prior is fundamentally multi-modal, we can also transform it into one with a Gaussian mixture model as the prior. Since the prior density is a linear combination of Gaussians, all derivations remain closed-form from a linear combination of the results with each Gaussian component.
		
	Finally, it is important to recognize that while there is no loss of generality to the inference problem when performing this prior transform, the transformation do change the interpretation of the hyperparameters learned with the \gls{MKML}. Since the kernel $\ell$ is now placed in the $\mathcal{Z}$ space, the hyperparameters of $\ell$ cannot be interpreted directly for the original parameter space $\vartheta$ unless the transformation between $\mathcal{Z}$ and $\vartheta$ is simple enough to translate the interpretation. Nevertheless, hyperparameters can still be learned by optimizing the \gls{MKML}.

\end{document}